\renewcommand{\appendixtocname}{Appendix Contents}
\let\oldappendix\appendices
\renewcommand{\appendices}{%
  \clearpage
  \renewcommand{\thesection}{\Roman{section}}
  \let\tf@toc\tf@app
  \addtocontents{app}{\protect\setcounter{tocdepth}{2}}
  \immediate\write\@auxout{%
    \string\let\string\tf@toc\string\tf@app^^J
  }
  \oldappendix
}%
\newcommand{\listofappendices}{%
  \begingroup
  \renewcommand{\contentsname}{\appendixtocname}
  \let\@oldstarttoc\@starttoc
  \def\@starttoc##1{\@oldstarttoc{app}}
  \tableofcontents
  \endgroup
}
\let\oldAA\AA
\renewcommand{\AA}{\text{\normalfont\oldAA}}
\newcommand{\fig}{Fig.}
\newcommand{\Fig}{Figure}
\newcommand{\figref}[1]{\fig~\ref{#1}}
\newcommand{\Figref}[1]{\Fig~\ref{#1}}
\newcommand{\tabref}[1]{Table~\ref{#1}}
\newcommand{\Tabref}[1]{Table~\ref{#1}}
\renewcommand{\eqref}[1]{Eq.~(\ref{#1})}
\newcommand{\secref}[1]{Section~\ref{#1}}
\newcommand{\appref}[1]{Appendix~\ref{#1}}
\newcommand{\Appref}[1]{Appendix~\ref{#1}}
\newtheorem{theorem}{Theorem}[section]
\newtheorem{lemma}[theorem]{Lemma}
\newtheorem{proposition}[theorem]{Proposition}
\title{Higher-Rank Irreducible Cartesian Tensors for Equivariant Message Passing}
\author{\textbf{Viktor Zaverkin}$^{1, \ast}$ \hspace{10pt} \textbf{Francesco Alesiani}$^{1, \dagger}$ \hspace{10pt} \textbf{Takashi Maruyama}$^{1, \dagger}$ \hspace{10pt} \textbf{Federico Errica}$^2$ \\ \textbf{Henrik Christiansen}$^1$ \hspace{9pt} \textbf{Makoto Takamoto}$^1$ \hspace{9pt} \textbf{Nicolas Weber}$^1$ \hspace{9pt} \textbf{Mathias Niepert}$^{1,3}$ \\
$^1$NEC Laboratories Europe \quad 
$^2$NEC Italy \quad 
$^3$University of Stuttgart}
\begin{document}

\maketitle

\begin{abstract}
    The ability to perform fast and accurate atomistic simulations is crucial for advancing the chemical sciences. By learning from high-quality data, machine-learned interatomic potentials achieve accuracy on par with ab initio and first-principles methods at a fraction of their computational cost. The success of machine-learned interatomic potentials arises from integrating inductive biases such as equivariance to group actions on an atomic system, e.g., equivariance to rotations and reflections. In particular, the field has notably advanced with the emergence of equivariant message passing. Most of these models represent an atomic system using spherical tensors, tensor products of which require complicated numerical coefficients and can be computationally demanding. Cartesian tensors offer a promising alternative, though state-of-the-art methods lack flexibility in message-passing mechanisms, restricting their architectures and expressive power. This work explores higher-rank irreducible Cartesian tensors to address these limitations. We integrate irreducible Cartesian tensor products into message-passing neural networks and prove the equivariance and traceless property of the resulting layers. Through empirical evaluations on various benchmark data sets, we consistently observe on-par or better performance than that of state-of-the-art spherical and Cartesian models.
\end{abstract}

\section{Introduction \label{sec:introduction}}

\renewcommand{\thefootnote}{\fnsymbol{footnote}}
\footnotetext[1]{Corresponding author: \texttt{viktor.zaverkin@neclab.eu}}
\footnotetext[2]{These authors contributed equally.}
\renewcommand{\thefootnote}{\arabic{footnote}}

The ability to perform sufficiently fast and accurate atomistic simulations for large molecular and material systems holds the potential to revolutionize molecular and materials science~\cite{Butler2018, Vamathevan2019, Keith2021, Unke2021, Fedik2022, Merchant2023, Kovacs2023, Batatia2023}. Conventionally, computational chemistry and materials science rely on ab initio or first-principles approaches---e.g., coupled cluster~\cite{Purvis1982, Crawford2000, Bartlett2007} or density functional theory (DFT)~\cite{Hohenberg1964, Kohn1965}, respectively---that are accurate but computationally demanding, thus limiting the accessible simulation time and system sizes. However, the ability to generate high-quality, first-principles-based data sets has prompted the development of machine-learned interatomic potentials (MLIPs). These potentials enable atomistic simulations with accuracy that is on par with the reference first-principles method but at a fraction of the computational cost. Message-passing neural networks (MPNNs)~\cite{micheli_neural_2009, scarselli_graph_2009, Gilmer2017, hamilton_representation_2017, bronstein_geometric_2017, zhang_deep_2018, zhang_graph_2019, bacciu_gentle_2020} have been employed in chemical and materials sciences, including the development of MLIPs, due to their efficient processing of the graph representation of the atomic system~\cite{Zitnick2020, Jumper2021, Dauparas2022, Batzner2022, Duval2024}. Achieving the desired MLIPs' performance, however, requires the inclusion of inductive biases, like the invariance of total energy to translations, reflections, and rotations in the three-dimensional space, and an effective encoding of the atomic system into a learnable representation~\cite{Langer2022}. Designing equivariant MPNNs~\cite{Schuett2021, Haghighatlari2022, Simeon2023, Cheng2024, Batatia2022design, Batzner2022, Batatia2022, Musaelian2023, Passaro2023, Luo2024, Duval2024}, which preserve the directional information of the local atomic environment, has been one of the most active research directions of the last years. Previous work often achieves equivariance by representing an atomic system as a graph, with node features expressed in spherical harmonics basis or using lower-rank Cartesian tensors and designing ad-hoc message-passing layers~\cite{Schuett2021, Haghighatlari2022, Batzner2022, Batatia2022design, Batatia2022, Thoelke2022, Simeon2023, Passaro2023, Musaelian2023, Luo2024, Cheng2024}.

MLIPs based on spherical harmonics~\cite{Batzner2022, Batatia2022, Musaelian2023}, which are the basis for irreducible representations of the three-dimensional rotation group, often demonstrate a better performance compared to those that use lower-rank Cartesian representations (scalars and vectors)~\cite{Schuett2021, Thoelke2022}. Spherical tensors, however, require the definition of a particular rotational axis, often chosen as the $z$-axis, resulting in inherent bias~\cite{Snider2018}. Furthermore, coupling spherical tensors via tensor products, involved in designing equivariant convolutions and constructing many-body features~\cite{Thomas2018, Drautz2019, Batatia2022}, requires the definition of complicated numerical coefficients, e.g., Wigner $3\mbox{-}j$ symbols defined in terms of the Clebsch--Gordan coefficients~\cite{Wigner2012}, and can be computationally demanding. In contrast, irreducible Cartesian tensors have no preferential directions; their tensor products are simpler and have a better computational complexity---up to a certain tensor rank---than the tensor products of spherical tensors~\cite{Snider2018, Coope1965, Coope1970, Coope1970_2, Lehman1989}. Recent work improved results of Cartesian MPNNs by using rank-two tensors and decomposing them into irreducible representations of the three-dimensional rotation group~\cite{Simeon2023}, i.e., into representations of dimension 1 (trace), 3 (anti-symmetric part), and 5 (traceless symmetric part)~\cite{Weiler2018}. Higher-rank reducible Cartesian tensors have also been integrated into many-body MPNNs~\cite{Cheng2024}. These state-of-the-art Cartesian approaches, however, lack the flexibility of their message-passing mechanisms. They rely exclusively on convolutions with invariant filters and restrict the construction of many-body features, limiting the range of possible architectures and their expressive power.

\textbf{Contributions.} In this work, we address the limitations of state-of-the-art Cartesian models and demonstrate that operating with irreducible Cartesian tensors leads to on-par or, sometimes, even better performance than that of spherical counterparts. Particularly, this work goes beyond scalars, vectors, and rank-two tensors and explores the integration of higher-rank irreducible Cartesian tensors and their products into equivariant MPNNs: i) We demonstrate how irreducible Cartesian tensors that are symmetric and traceless can be constructed from a unit vector and a product of two irreducible Cartesian tensors, essential for equivariant convolutions and constructing many-body features; ii) We prove that the resulting tensors are traceless and equivariant under the action of the three-dimensional rotation and reflection group; iii) We demonstrate that higher-rank irreducible Cartesian tensors can be used to design cost-efficient---up to a certain tensor rank---and accurate equivariant models of many-body interactions; iv) We conduct different experiments to assess the effectiveness of equivariant message passing based on irreducible Cartesian tensors, and achieve state-of-the-art performance on benchmark data sets such as rMD17~\cite{Christensen2020b}, MD22~\cite{Chmiela2023}, 3BPA~\cite{Kovacs2021}, acetylacetone~\cite{Batatia2022design}, and Ta--V--Cr--W~\cite{Gubaev2023}. We hope our contributions will offer new insights into the use of Cartesian tensors for constructing accurate and cost-efficient MLIPs and beyond.

\section{Background \label{sec:background}}

\textbf{Group representations and equivariance.} A group $(G, \cdot)$ is defined by a set of elements $G$ and a group product $\cdot$. A representation $D$ of a group is a function from $G$ to square matrices such that $D[g] D[g^\prime] = D[g \cdot g^\prime],\,\forall\,g,g^\prime \in G$. This representation defines an action of $G$ to any vector space $\mathcal{X}$ (of the same dimension as the dimension of square matrices) through the matrix-vector multiplication, i.e., $(g, \mathbf{x}) \mapsto D_{\mathcal{X}}[g] \mathbf{x}$ for $\forall\, g \in G$ and $\forall\, \mathbf{x} \in \mathcal{X}$. For vector spaces $\mathcal{X}$ and $\mathcal{Y}$, a function $f: \mathcal{X} \rightarrow \mathcal{Y}$ is called equivariant to the action of a group $G$ to $\mathcal{X}$ and $\mathcal{Y}$ iff
\begin{equation*}
    f(D_\mathcal{X}[g]\mathbf{x}) = D_\mathcal{Y}[g] f(\mathbf{x})\,,\forall\,g \in G.
\end{equation*}
Invariance can be seen as a special type of equivariance with $D_\mathcal{Y}[g]$ being the identity for all $g$. An important class of equivariant models focuses on equivariance to the action of the Euclidean group $\mathrm{E}(3)$, comprising translations and the orthogonal group $\mathrm{O}(3)$, i.e., rotation group $\mathrm{SO}(3)$ and reflections, in $\mathbb{R}^3$. MLIPs based on equivariant models usually focus on equivariance to the action of the orthogonal group $\mathrm{O}(3)$ and are invariant to translations.

\textbf{Cartesian tensors.} A vector $\mathbf{x} \in \mathbb{R}^3$ transforms under the action of the rotation group $\mathrm{SO}(3)$ as $\mathbf{x}^\prime = R \mathbf{x}$, i.e., each component of it transforms as $\left(\mathbf{x}\right)_i^\prime = \sum_j R_{ij} \left(\mathbf{x}\right)_j$. Here, $R = D_\mathcal{X}\left[g\right] \in \mathbb{R}^{3\times 3}$ denotes the rotation matrix representation of $g \in \mathrm{SO}(3)$. Cartesian tensors of rank $n$ are described by $3^n$ numbers and generalize the concept of vectors. A rank-$n$ Cartesian tensor $\mathbf{T}$ can be viewed as a multidimensional array with $n$ indices, i.e., $\left(\mathbf{T}\right)_{i_1i_2\cdots i_n}$ with $i_k \in \{1, 2, 3\}$ for $\forall\, k \in \{1, \cdots, n\}$. Furthermore, each index of $\left(\mathbf{T}\right)_{i_1i_2\cdots i_n}$ transforms independently as a vector under rotation. For example, a rank-two tensor transforms under rotation as $\mathbf{T}^\prime = R \mathbf{T} R^\top$, i.e., each component of it transforms as $\left(\mathbf{T}\right)_{i_1i_2}^\prime = \sum_{j_1j_2} R_{i_1j_1}R_{i_2j_2} \left(\mathbf{T}\right)_{j_1j_2}$. For Cartesian tensors, one defines an $r$-fold tensor contraction and an outer product, i.e., $\left(\mathbf{T}\right)_{j_1\cdots j_s} = \sum_{i_1,\cdots,i_r} \left(\mathbf{U}\right)_{i_1\cdots i_rj_1\cdots j_s}\left(\mathbf{S}\right)_{i_1\cdots i_r}$ and $\left(\mathbf{T}\right)_{i_1\cdots i_rj_1\cdots j_s} = \left(\mathbf{U}\right)_{i_1\cdots i_r}\left(\mathbf{S}\right)_{j_1\cdots j_s}$, respectively. Cartesian tensors are generally reducible and can, thus, be decomposed into smaller representations that transform independently within their linear subspaces under rotation. For example, a rank-two reducible Cartesian tensor contains representations of dimension 1 (trace), 3 (anti-symmetric part), and 5 (traceless symmetric part): $\sum_{i_1}(\mathbf{T})_{i_1i_1}$, $(\mathbf{T})_{i_1i_2} - (\mathbf{T})_{i_2i_1}$, and $(\mathbf{T})_{i_1i_2} + (\mathbf{T})_{i_2i_1} - 2/3\sum_{i_1}(\mathbf{T})_{i_1i_1}$, respectively. Under rotation, the traceless symmetric part remains within its irreducible subspace, with a similar behavior for other irreducible components of $(\mathbf{T})_{i_1i_2}$. Furthermore, for a reducible Cartesian tensor of rank $n$, an action of the rotation group $\mathrm{SO}(3)$ can be represented with a $3^n \times 3^n$-dimensional rotation matrix. This rotation matrix is also reducible, meaning that an appropriate change of basis can block diagonalize it into smaller subsets, the irreducible representations of the rotation group $\mathrm{SO}(3)$.

\textbf{Spherical tensors.} Spherical harmonics (or spherical tensors) $Y_m^l$ are functions from the points on a sphere to complex or real numbers, with degree $l \geq 0$ and components $-l \leq m \leq l$. Collecting all components for a given $l$ we obtain a $(2l+1)$-dimensional object $\mathbf{Y}^l = (Y_{-l}^l, \dots, Y_{l-1}^l, Y_{l}^l)$. Spherical tensors transform under rotation as $Y_m^l(R\hat{\mathbf{x}}) = \sum_{m^\prime} (\mathbf{D})_{mm^\prime}^l Y_{m^\prime}^l(\hat{\mathbf{x}})$, where $(\mathbf{D})_{mm^\prime}^l$ is the irreducible representation of $\mathrm{SO}(3)$---the Wigner $D$-matrix---and $R$ denotes the rotation matrix. Spherical tensors are irreducible and can be combined using the Clebsch--Gordan tensor product
\begin{equation*}
\left(Y_{m_1}^{l_1} \otimes_{\mathrm{CG}} Y_{m_2}^{l_2}\right)_{m_3}^{l_3} = \sum\nolimits_{m_1=-l_1}^{l_1}\sum\nolimits_{m_2=-l_2}^{l_2} C_{l_1m_1,l_2m_2}^{l_3m_3}Y_{m_1}^{l_1}Y_{m_2}^{l_2},
\end{equation*}
with Clebsch--Gordan coefficients $C_{l_1m_1,l_2m_2}^{l_3m_3}$ and $l_3 \in \left\{\lvert l_1-l_2\rvert, \cdots, l_1 + l_2\right\}$. Furthermore, any reducible Cartesian tensor of rank $n$ can be written in spherical harmonics as $\left(\hat{\mathbf{x}}\right)_{i_1}\left(\hat{\mathbf{x}}\right)_{i_2}\cdots\left(\hat{\mathbf{x}}\right)_{i_n} = \sum_{l=0}^n\sum_{m=-l}^{l} X_{i_1i_2\cdots i_n}^{lm} Y_m^l$, with coefficients $X_{i_1i_2\cdots i_n}^{lm}$ defined elsewhere~\cite{Drautz2020}.

\section{Related work \label{sec:related_work}}

\textbf{Equivariant message-passing potentials.} Equivariant MPNNs~\cite{Thomas2018, Anderson2019, Schuett2021, Satorras2022, Brandstetter2022, Batzner2022, Thoelke2022, Batatia2022design, Batatia2022, Haghighatlari2022, Passaro2023, Musaelian2023, Simeon2023, Luo2024, Cheng2024} often outperform more traditional invariant models~\cite{Schuett2017, Unke2019, Liu2022, Gasteiger2022b, Gasteiger2022a, Gasteiger2022c}. While many invariant and equivariant MPNNs rely on two-body features within a single message-passing layer, there is a growing interest in constructing higher-body-order features, such as angles and dihedrals, to model many-body interactions in atomic systems~\cite{Gasteiger2022a, Gasteiger2022c, Liu2022}. Note that equivariant models with two-body features in a single message-passing layer build many-body ones through repeated message-passing iterations, increasing the receptive field and, thus, computational cost. Recent work recognized the importance of higher-body-order features but faced challenges due to explicit summation over triplets or quadruplets~\cite{Liu2022, Gasteiger2022a}. In contrast, MACE advances the current state-of-the-art by combining ACE and equivariant message passing, introducing cost-efficient many-body message-passing potentials~\cite{Batatia2022}. With just two message-passing layers, MACE yields accurate potentials for interacting many-body systems~\cite{Kovacs2023b}.

\textbf{Beyond Clebsch--Gordan tensor product.} Despite the success of equivariant MPNNs based on spherical tensors, the high computational cost of the Clebsch--Gordan tensor product limits their computational efficiency~\cite{Fuchs2020, Schuett2021, Satorras2022, Frank2023, Batatia2022, Brandstetter2022, Musaelian2023, Passaro2023, Simeon2023, Liao2023}. Thus, current research focuses on alternatives to the Clebsch--Gordan tensor product, which has a $\mathcal{O}(L^5)$ complexity for tensors up to degree $L$. Recently, the relation of Clebsch--Gordan coefficients to the integral of products of three spherical harmonics, known as the Gaunt coefficients~\cite{Wigner2012}, has been exploited to reduce the computational cost of the tensor product of spherical tensors to $\mathcal{O}(L^3)$, compared to $\mathcal{O}(L^6)$ of the full Clebsch--Gordan tensor product including all $(l_1, l_2) \rightarrow l_3$~\cite{Luo2024}. However, this approach excludes odd tensor products and, thus, restricts the expressive power and the range of possible architectures.

Cartesian tensors and their products offer another promising alternative to the Clebsch--Gordan tensor product, enabling the efficient construction of message-passing layers with two- and many-body features. Recent work has explored decomposing rank-two tensors into their irreducible representations and using higher-rank reducible tensors~\cite{Simeon2023, Cheng2024}. These approaches, however, are limited to convolutions with invariant filters, restricting the range of possible architectures and, thus, the expressive power of resulting Cartesian models. Furthermore, they provide limited mechanisms for constructing higher-body-order features, restricted to three-body features obtained through matrix-matrix multiplication and invariant many-body features obtained through full tensor contractions, similar to moment tensor potentials and Gaussian moments~\cite{Shapeev2016, Zaverkin2020, Zaverkin2021b}. Finally, using reducible Cartesian tensors during message-passing leads to mixing different irreducible representations, which can hinder the performance of resulting models compared to state-of-the-art spherical models~\cite{Cheng2024}.

\section{Methods \label{sec:methods}}

We define an atomic configuration $\mathcal{S} = \{\mathbf{r}_u, Z_u\}_{u=1}^{N_\mathrm{at}}$, where $\mathbf{r}_u \in \mathbb{R}^3$ denotes Cartesian coordinates and $Z_u \in \mathbb{N}$ represents the atomic number of atom $u$, with a total of $N_\mathrm{at}$ atoms. Our focus lies on message-passing MLIPs, parameterized by $\boldsymbol{\theta}$, that learn a mapping from a configuration $\mathcal{S}$ to the total energy $E$, i.e., $f_{\boldsymbol{\theta}}: \mathcal{S} \mapsto E \in \mathbb{R}$. Thus, we represent molecular and material systems as graphs in a three-dimensional Euclidean space. An edge $\{u, v\}$ exists if atoms $u$ and $v$ are within a cutoff distance $r_\mathrm{c}$, i.e., $\lVert \mathbf{r}_u - \mathbf{r}_v \rVert_2 \leq r_\mathrm{c}$. For more details on MPNNs, see \appref{sec:background_appendix}. The total energy of an atomic configuration is defined by the sum of individual atomic energy contributions~\citep{Behler2007}, i.e.,~$E = \sum_{u=1}^{N_\mathrm{at}} E_u$. Atomic forces are computed as negative gradients of the total energy with respect to atomic coordinates, i.e., $\mathbf{F}_u = -\nabla_{\mathbf{r}_u} E$.

\subsection{Irreducible Cartesian tensor product \label{sec:cartesian_product}}

\begin{figure}[t!]
    \begin{center}
        \includegraphics[width=0.78\textwidth]{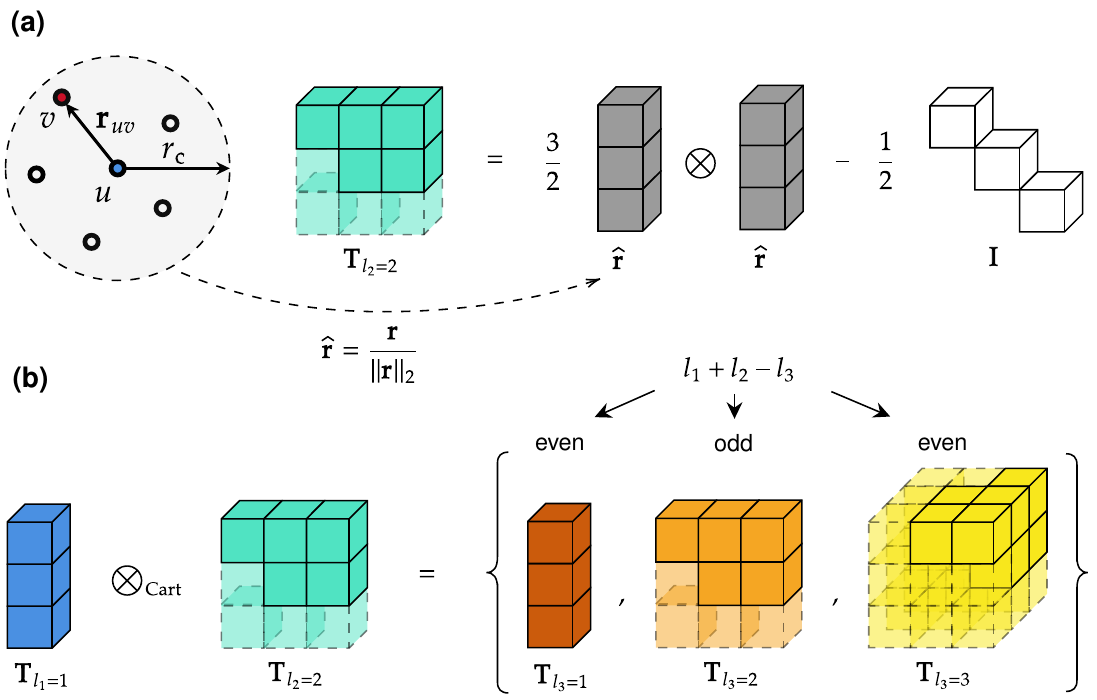}
    \end{center}
    \caption{\textbf{Schematic illustration of (a) the construction of an irreducible Cartesian tensor for a local atomic environment and (b) the tensor product of two irreducible Cartesian tensors of rank $l_1$ and $l_2$.} The construction of an irreducible Cartesian tensor from a unit vector $\hat{\mathbf{r}}$ is defined in \eqref{eq:cartesian_irreps}. In this work, we use tensors with the same rank $n$ and weight $l$, i.e., $n=l$, avoiding the need for embedding tensors with $l < n$ in a higher-dimensional tensor space. Therefore, we use $l$ to identify the rank and the weight of an irreducible Cartesian tensor. The tensor product is defined in Eqs.~(\ref{eq:product_even}) and (\ref{eq:product_odd}), resulting in a new tensor $\mathbf{T}_{l_3} = (\mathbf{T}_{l_1} \otimes_{\mathrm{Cart}} \mathbf{T}_{l_2})_{l_3}$~of rank $l_3 = \{\lvert l_1 - l_2\rvert, \cdots, l_1 + l_2\}$. Transparent boxes denote the linearly dependent elements of symmetric and traceless tensors. The tensor product can be even or odd, defined by~$l_1+l_2-l_3$.}
    \label{fig:tensor_and_tensor_product}
\end{figure}

In the following, we explore irreducible Cartesian tensors and their products based on the three-dimensional vector space and the three-dimensional orthogonal group $\mathrm{O}(3)$, which comprises rotations and reflections. The respective tensors and tensor products are schematically illustrated in \figref{fig:tensor_and_tensor_product} (a) and (b), respectively, and are further employed in constructing MPNNs for atomic systems equivariant under actions of the orthogonal group. An irreducible Cartesian tensor $\mathbf{T}_{n} \in (\mathbb{R}^3)^{\otimes n}$ of rank $n$ and weight $l \leq n$ (related to the degree $l$ of spherical tensors) can be represented by a tensor with $3^n$ components in a three-dimensional vector space. These $3^n$ components form a basis for a $(2l+1)$-dimensional irreducible representation of the three-dimensional rotation group $\mathrm{SO}(3)$~\cite{Fano1959, Gelfand1963}. Thus, only $2l+1$ of the $3^n$ components are independent.

The rotation in the space of all tensors of rank $n$ is induced through the $n$-fold outer product of a rotation matrix $R \in \mathbb{R}^{3 \times 3}$, i.e., $R^{\otimes n} = R \otimes \cdots \otimes R$. The obtained representation of the rotation group $\mathrm{SO}(3)$ is reducible for all $n$ except $n=\{0,1\}$. Reducing a tensor of rank $n$ yields a unique irreducible tensor with the same weight and rank ($n=l$), which is characterized by being symmetric, i.e., $(\mathbf{T}_{n})_{\cdots i\cdots j\cdots}=(\mathbf{T}_{n})_{\cdots j\cdots i\cdots}\ , \forall\ i \neq j \in \{i_1\cdots,i_n\}$, and traceless, i.e., $\sum_{i}(\mathbf{T}_{n})_{\cdots i\cdots i\cdots} = 0\ , \forall\,i \in \{i_1, \cdots, i_n\}$. An irreducible tensor of rank $n$ and weight $l$ with $l < n$ can be viewed as a $l$-rank tensor embedded in the $n$-rank tensor space, e.g., by computing an outer product with the identity matrix. However, the embedding is often not unique. Thus, we construct tensors with the same weight and rank ($n=l$) in the following.

\textbf{Irreducible Cartesian tensors from unit vectors.} Hereafter, we denote the rank and the weight of irreducible Cartesian tensors by $l$ to distinguish them from reducible counterparts. An irreducible Cartesian tensor of an arbitrary rank $l$ can be constructed from a unit vector $\hat{\mathbf{r}}$ in the form~\cite{Lehman1989}
\begin{equation} 
    \label{eq:cartesian_irreps}
    \mathbf{T}_{l} (\hat{\mathbf{r}}) = C \sum\nolimits_{m=0}^{\lfloor l/2\rfloor} (-1)^m \frac{(2l-2m-1)!!}{(2l-1)!!} \big\{\hat{\mathbf{r}}^{\otimes (l-2m)}\otimes\mathbf{I}^{\otimes m}\big\},
\end{equation}
resulting in a symmetric and traceless tensor of rank $l$. Here, $\mathbf{I}$ denotes the $3 \times 3$ identity matrix, $\hat{\mathbf{r}}^{\otimes (l-2m)}=\hat{\mathbf{r}}\otimes\cdots\otimes\hat{\mathbf{r}}$ and $\mathbf{I}^{\otimes m} = \mathbf{I}\otimes\cdots\otimes\mathbf{I}$ are the corresponding $(l-2m)$- and $m$-fold outer products. The curly brackets indicate the summation over all permutations of the $l$ unsymmetrized indices~\cite{Lehman1989}, i.e., $\{\mathbf{T}_l\}_{i_{1} \cdots i_{l}} = \sum_{\pi \in S_l} (\mathbf{T}_l)_{i_{\pi(1)} \cdots i_{\pi(l)}}$, with $S_l$ being the corresponding set of permutations. The expression in \eqref{eq:cartesian_irreps} involves three distinct outer products $(\hat{\mathbf{r}}\otimes\hat{\mathbf{r}})_{i_1i_2} = \hat{r}_{i_1}\hat{r}_{i_2}$, $(\mathbf{I}\otimes\mathbf{I})_{i_1i_2i_3i_4} = \delta_{i_1i_2}\delta_{i_3i_4}$, and $(\hat{\mathbf{r}}\otimes\mathbf{I})_{i_1i_2i_3} = \hat{r}_{i_1}\delta_{i_2i_3}$, where $\delta_{i_2i_3}$ denotes the Kronecker delta. The normalization constant $C = \left(2l-1\right)!!/l!$ is chosen such that an $l$-fold contraction of $\mathbf{T}_l$ with the unit vector $\hat{\mathbf{r}}$ yields unity. An example of an irreducible Cartesian tensor with rank $l=3$ is $(\mathbf{T}_{l=3})_{i_1i_2i_3} = \frac{5}{2} \big(\hat{r}_{i_1}\hat{r}_{i_2}\hat{r}_{i_3} - \frac{1}{5}(\hat{r}_{i_1}\delta_{i_2i_3} + \hat{r}_{i_2}\delta_{i_3i_1} + \hat{r}_{i_3}\delta_{i_1i_2})\big)$.

\textbf{Irreducible Cartesian tensor product.} The following defines the product of two irreducible Cartesian tensors $\mathbf{x}_{l_1} \in (\mathbb{R}^3)^{\otimes l_1}$ and $\mathbf{y}_{l_2} \in (\mathbb{R}^3)^{\otimes l_2}$, yielding an irreducible Cartesian tensor of rank $l_3$, i.e., $\mathbf{z}_{l_3} = (\mathbf{x}_{l_1} \otimes_{\mathrm{Cart}} \mathbf{y}_{l_2})_{l_3} \in (\mathbb{R}^3)^{\otimes l_3}\,\forall\,l_3 \in \{\lvert l_1-l_2 \rvert,  \cdots, l_1+l_2\}$, that is symmetric and traceless. The irreducible Cartesian tensor product is crucial for designing equivariant MPNNs and is used for equivariant convolutions and constructing many-body features in \secref{sec:cartesian_message_passing}. For an even $l_1+l_2-l_3 = 2k$, the general form of an irreducible Cartesian tensor of rank $l_3$ reads~\cite{Lehman1989}
\begin{equation} 
    \label{eq:product_even}
    \begin{split}
        (\mathbf{x}_{l_1} & \otimes_{\mathrm{Cart}} \mathbf{y}_{l_2})_{l_3} \\ 
        = & C_{l_1l_2l_3} \sum\nolimits_{m=0}^{\mathrm{min}(l_1,l_2) - k}(-1)^m 2^m \frac{(2l_3-2m-1)!!}{(2l_3-1)!!}\big\{\left(\mathbf{x}_{l_1}\cdot(k+m)\cdot\mathbf{y}_{l_2}\right)\otimes\mathbf{I}^{\otimes m}\big\},
    \end{split}
\end{equation}
where $\left(\mathbf{x}_{l_1}\cdot(k+m)\cdot\mathbf{y}_{l_2}\right) = \sum_{i_1,\cdots,i_{k+m}} (\mathbf{x}_{l_1})_{i_1\cdots i_{k+m}} (\mathbf{y}_{l_2})_{i_1\cdots i_{k+m}}$ denotes an $(k+m)$-fold tensor contraction, which results in a tensor of rank $l_1+l_2-2(k+m)$. For simplicity, we skip the uncontracted indices in the above definition. For example, for $l_1=4$ and $l_2=3$ and a three-fold tensor contraction we obtain $\left(\mathbf{x}_{l_1}\cdot 3\cdot\mathbf{y}_{l_2}\right)_{i_4} = \sum_{i_1,i_2,i_3} (\mathbf{x}_{l_1})_{i_1i_2i_3i_4}(\mathbf{y}_{l_2})_{i_1i_2i_3}$, i.e., the corresponding tensors are contracted along  $i_1$, $i_2$, and $i_3$. Note that the final result is independent of the index permutation, as the contracted tensors are symmetric. For an odd $l_1+l_2-l_3 = 2k + 1$, we define~\cite{Lehman1989}
\begin{equation} 
    \label{eq:product_odd}
    \begin{split}
        (\mathbf{x}_{l_1} & \otimes_{\mathrm{Cart}} \mathbf{y}_{l_2})_{l_3} \\ 
        = & D_{l_1l_2l_3} \sum\nolimits_{m=0}^{\mathrm{min}(l_1,l_2)-k-1}(-1)^m 2^m \frac{(2l_3-2m-1)!!}{(2l_3-1)!!}\big\{\left(\boldsymbol{\varepsilon}:\mathbf{x}_{l_1}\cdot(k+m)\cdot\mathbf{y}_{l_2}\right)\otimes\mathbf{I}^{\otimes m}\big\},
    \end{split}
\end{equation}
with $\boldsymbol{\varepsilon}$ denoting the Levi-Civita symbol ($\varepsilon_{i_1i_2i_3} = - \varepsilon_{i_3i_2i_1}$ and $\varepsilon_{i_1i_1i_3} = 0$). The double contraction with the Levi-Civita symbol reads $(\boldsymbol{\varepsilon}:\mathbf{x}_{l_1}\cdot(k+m)\cdot\mathbf{y}_{l_2})_{i_1} = \sum_{i_2,i_3} \varepsilon_{i_1i_2i_3} \left(\mathbf{x}_{l_1}\cdot(k+m)\cdot\mathbf{y}_{l_2}\right)_{i_2i_3}$, and yields a tensor of rank $l_1+l_2-2(k+m)-1$. Details on the normalization constants $C_{l_1l_2l_3}$ and $D_{l_1l_2l_3}$ are provided in \appref{sec:normalization_constants}.

\subsection{Equivariant message-passing based on irreducible Cartesian tensors \label{sec:cartesian_message_passing}}

The following section introduces the basic operations for constructing equivariant MPNNs based on irreducible Cartesian tensors. Using their irreducible tensor products, we demonstrate how to build equivariant two- and many-body features, crucial for modeling many-body interactions in molecular and materials systems. Particularly, we focus on MLIPs based on equivariant MPNNs and extend the state-of-the-art MACE architecture~\cite{Batatia2022} to the Cartesian basis. Following the MACE architecture, we use only even tensor products. We split vectors $\mathbf{r}_{uv} = \mathbf{r}_u - \mathbf{r}_v \in \mathbb{R}^3$ from atom $u$ to atom $v$, schematically shown in \figref{fig:tensor_and_tensor_product} (a), into their radial and angular components (unit vectors), i.e., $r_{uv} = \lVert\mathbf{r}_{uv}\rVert_2 \in \mathbb{R}$ and $\hat{\mathbf{r}}_{uv} = \mathbf{r}_{uv} / r_{uv} \in \mathbb{R}^3$, respectively. In the $t$-th message-passing layer, edges $\{u,v\}$ are embedded using a fully connected neural network $R_{kl_1l_2l_3}^{(t)}: \mathbb{R} \rightarrow \mathbb{R}$ with $k$ output feature channels. The radial function $R_{kl_1l_2l_3}^{(t)}$ takes as an input radial distances $r_{uv}$, which are embedded through Bessel functions and multiplied by a smooth polynomial cutoff function~\cite{Gasteiger2022c}. Finally, we use irreducible Cartesian tensors $\mathbf{T}_l(\hat{\mathbf{r}})$, similar to spherical tensors $\mathbf{Y}^l(\hat{\mathbf{r}})$, to embed unit vectors into the tensor space of maximal rank $l_\mathrm{max}$.

\textbf{Equivariant convolutions and two-body features.} Rotation equivariance in MPNNs is typically achieved by constraining convolution filters to be the products between learnable radial functions and spherical tensors, i.e., $R_{kl_1l_2l_3}^{(t)}(r_{uv})Y_{m_1}^{l_1}(\hat{\mathbf{r}}_{uv})$. The two-body features $A_{ukl_3m_3}^{(t)}$ are further obtained through the tensor product---the point-wise convolution~\cite{Thomas2018}---between the respective filters and neighbors' equivariant features $h_{ukl_2m_2}^{(t)}$. The permutational invariance is enforced by pooling over the neighbors $v \in \mathcal{N}(u)$. Here, we use irreducible Cartesian tensors, with the rotation-equivariant filters given by $R_{kl_1l_2l_3}^{(t)}(r_{uv})\big(\mathbf{T}_{l_1}(\hat{\mathbf{r}}_{uv})\big)_{i_1i_2\cdots i_{l_1}}$. Thus, two-body features $\big(\mathbf{A}_{ukl_3}^{(t)}\big)_{i_1i_2\cdots i_{l_3}}$ are obtained using the irreducible Cartesian tensor product and are represented by rank-$l_3$ irreducible Cartesian tensors. The Cartesian two-body features are defined by
\begin{equation}
  \label{eq:atomic_basis}
  \big(\mathbf{A}_{ukl_3}^{(t)}\big)_{i_1i_2\cdots i_{l_3}} = \sum\nolimits_{v \in \mathcal{N}(u)} \Big(R_{kl_1l_2l_3}^{(t)}(r_{uv})\mathbf{T}_{l_1}(\hat{\mathbf{r}}_{uv}) \otimes_{\mathrm{Cart}} \frac{1}{\sqrt{d_t}}\sum_{k^\prime}W_{kk^\prime l_2}^{(t)}\mathbf{h}_{vk^\prime l_2}^{(t)}\Big)_{i_1i_2\cdots i_{l_3}},
\end{equation}
where $d_t$ represents the number of feature channels in the node embeddings $\mathbf{h}_{vk^\prime l_2}^{(t)}$ of the $t$-th message-passing layer. In the first message-passing layer, node embeddings are initialized as learnable weights $W_{kZ_v}$ that are invariant to actions of the orthogonal group, i.e., are scalars or tensors of rank $l_2=0$, and embed the atom type $Z_v$. Thus, constructing equivariant two-body features simplifies to
\begin{equation}
  \label{eq:atomic_basis_first}
  \big(\mathbf{A}_{ukl_1}^{(1)}\big)_{i_1i_2\cdots i_{l_1}} = \sum\nolimits_{v \in \mathcal{N}(u)} R_{kl_1}^{(1)}(r_{uv}) \big(\mathbf{T}_{l_1}(\hat{\mathbf{r}}_{uv})\big)_{i_1i_2\cdots i_{l_1}} W_{kZ_v}.
\end{equation}

\textbf{Equivariant many-body features.} The importance of many-body terms arises from the fact that the interaction between atoms changes when additional atoms are present; see \appref{sec:background_appendix}. Furthermore, many-body terms are often required to ensure the generalization of interatomic potentials, i.e., their ability to accurately predict energies and forces for temperatures and stoichiometries on which they were not trained~\cite{Mueser2023}. Here, we construct $(\nu+1)$-body equivariant features from $\big(\mathbf{A}_{ukl_\xi}^{(t)}\big)_{i_1i_2\cdots i_{l_\xi}}$ obtained using Eqs.~(\ref{eq:atomic_basis}) or (\ref{eq:atomic_basis_first}). The $\nu$-fold Cartesian tensor product, which yields $(\nu+1)$-body features represented by an irreducible Cartesian tensor of rank $L$, reads
\begin{equation}
    \label{eq:product_basis}
    \big(\mathbf{B}_{u\eta_{\nu} kL}^{(t)}\big)_{i_1i_2\cdots i_L} = \big(\underbrace{\tilde{\mathbf{A}}_{ukl_1}^{(t)} \otimes_\mathrm{Cart} \cdots \otimes_\mathrm{Cart} \tilde{\mathbf{A}}_{ukl_\nu}^{(t)}}_{\nu\text{-fold}}\big)_{i_1i_2\cdots i_L},
\end{equation}
where $\eta_\nu$ counts all possible $\nu$-fold products of $\{l_1,\cdots,l_\nu\}$-rank tensors, yielding rank-$L$ irreducible Cartesian tensors, and $\tilde{\mathbf{A}}_{ukl_\xi}^{(t)} = \frac{1}{\sqrt{d_t}}\sum_{k^\prime}W_{kk^\prime l_\xi}^{(t)}\mathbf{A}_{uk^\prime l_\xi}^{(t)}$ with $d_t$ feature channels. 

The irreducible Cartesian tensor product does not allow pre-computing the coefficients of the $\nu$-fold tensor product, differing from spherical tensors that use the generalized Clebsch--Gordan coefficients, contracted with weights from \eqref{eq:messages_uncoupled} and summed over the possible paths $\mathrm{len}(\eta_\nu)$, for this purpose~\cite{Drautz2019, Drautz2020, Batatia2022}. Thus, we obtain the result of \eqref{eq:product_basis} by iteratively applying the irreducible Cartesian tensor product $\left(\nu-1\right)$ times and refer to the respective models as irreducible Cartesian tensor potentials (ICTPs) with the full product basis or ICTP$_\text{full}$. However, two-fold tensor products in \eqref{eq:product_basis} are symmetric to permutations of involved tensors. Thus, the number of the $\nu$-fold tensor products, $\mathrm{len}(\eta_\nu)$, can be significantly reduced; we refer to the corresponding models as ICTP$_\text{sym}$. Furthermore, we can reduce the computational cost of the Cartesian product basis by performing the calculations in a latent feature space. We use learnable weights $W_{kk^\prime l_\xi}$ to reduce the number of feature channels for the product basis calculation and then increase it again for subsequent steps; we refer to the corresponding models as ICTP$_\text{lt}$. For more details on the model architecture, such as the construction of updated many-body node embeddings, readout functions, and different options for the Cartesian product basis, see \appref{sec:architecture_appendix}.

\textbf{Runtime considerations.} When choosing an architecture to implement MLIPs, the runtime per energy and force evaluation is crucial. The computational complexity as a function of rank $L$ is $\mathcal{O}\left(9^{L} L!/\left(2^{L/2}\left(L/2\right)!\right)\right)$ for the irreducible Cartesian tensor product and $\mathcal{O}\left(L^5\right)$ for the Clebsch--Gordan one; see \appref{sec:comput_complexity} for more details. Thus, equivariant convolutions based on spherical tensors are more computationally efficient for $L \rightarrow \infty$ than those based on irreducible Cartesian tensors. However, state-of-the-art models and physical properties typically require $L \leq 4$~\cite{Batatia2022, Grega2024}, making sub-leading terms and implementation-dependent amplitudes crucial. Based on our analysis, for $L \leq 4$, we can expect equivariant convolutions based on irreducible Cartesian tensors to be more computationally and memory efficient than their spherical counterparts. For $L \leq 4$, the $\nu$-fold tensor product in the Cartesian basis can also offer computational advantages. Its cost, as a function of rank $L$ and correlation order $\nu$, is $\mathcal{K} (9^{L} L!/(2^{L/2}(L/2)!))^{\nu-1}$ and $\mathcal{K} L^{\frac{1}{2}\nu(\nu + 3)}$ for ICTP and MACE, respectively. The pre-factor $\mathcal{K}$, which counts all possible $\nu$-fold tensor products, is removed in MACE through generalized Clebsch-Gordan coefficients, though these coefficients increase the memory spherical models use. Therefore, it is essential to consider the inference times for a fair comparison, which we provide in \secref{sec:results}.

\textbf{Equivariance and tracelss property of message-passing layers.} We conclude this section by giving a theoretical result that ensures the equivariance of message-passing layers based on irreducible Cartesian tensors and their irreducible tensor products to actions of the orthogonal group. The proof is provided in \appref{sec:proof_equivariance}. We also prove in Appendix~\ref{sec:proof_traceless} that these message-passing layers preserve the traceless property of irreducible Cartesian tensors.

\begin{proposition}
\label{prop:ictp_equivariance}
The message-passing layers based on irreducible Cartesian tensors and their irreducible tensor products are equivariant to actions of the orthogonal group.
\end{proposition}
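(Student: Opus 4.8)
The plan is to exploit the fact that equivariance is preserved under composition: a message-passing layer, as defined in Eqs.~(\ref{eq:atomic_basis})--(\ref{eq:product_basis}) (together with the readout and node-update maps in \appref{sec:architecture_appendix}), is built entirely from a short list of elementary operations, so it suffices to check equivariance of each building block separately and then compose. The building blocks are: (i) the map $\hat{\mathbf{r}}\mapsto\mathbf{T}_l(\hat{\mathbf{r}})$ of \eqref{eq:cartesian_irreps}; (ii) the irreducible Cartesian tensor product $\otimes_{\mathrm{Cart}}$ of Eqs.~(\ref{eq:product_even}) and~(\ref{eq:product_odd}); (iii) multiplication by the radial functions $R_{kl_1l_2l_3}^{(t)}(r_{uv})$; (iv) channel mixing with the learnable weights $W$; and (v) summation over the neighbors $v\in\mathcal{N}(u)$. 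Throughout, an element $g\in\mathrm{O}(3)$ acts on a rank-$n$ tensor through $R^{\otimes n}$ with $R=D[g]$, and I will use repeatedly that $R$ is orthogonal, $RR^\top=R^\top R=\mathbf{I}$, and that $\det R=\pm 1$.

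For (i) I would verify $\mathbf{T}_l(R\hat{\mathbf{r}})=R^{\otimes l}\mathbf{T}_l(\hat{\mathbf{r}})$ term by term in \eqref{eq:cartesian_irreps}. Each summand is a symmetrization of $\hat{\mathbf{r}}^{\otimes(l-2m)}\otimes\mathbf{I}^{\otimes m}$, and three facts suffice: the outer product is covariant, $(R\hat{\mathbf{r}})^{\otimes(l-2m)}=R^{\otimes(l-2m)}\hat{\mathbf{r}}^{\otimes(l-2m)}$; the identity is an invariant rank-two tensor, $\sum_{j_1j_2}R_{i_1j_1}R_{i_2j_2}\delta_{j_1j_2}=(RR^\top)_{i_1i_2}=\delta_{i_1i_2}$; and the symmetrization operator $\{\cdot\}$, a sum over index permutations $\pi\in S_l$, commutes with $R^{\otimes l}$ because $R^{\otimes l}$ acts identically on every index slot. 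Hence $\mathbf{T}_l$ is $\mathrm{O}(3)$-equivariant. For (ii), assume $\mathbf{x}_{l_1}\mapsto R^{\otimes l_1}\mathbf{x}_{l_1}$ and $\mathbf{y}_{l_2}\mapsto R^{\otimes l_2}\mathbf{y}_{l_2}$; I would again work term by term in Eqs.~(\ref{eq:product_even}) and~(\ref{eq:product_odd}). The one nontrivial point is that a $p$-fold contraction of two tensors is equivariant: each contracted index pair contributes a factor $\sum_j R_{aj}R_{bj}=\delta_{ab}$, so the contraction commutes with the rotation while lowering the rank by $2p$; the subsequent $\otimes\,\mathbf{I}^{\otimes m}$ and $\{\cdot\}$ are handled exactly as in (i), and the uncontracted indices count $l_3$, as required. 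For the odd products in \eqref{eq:product_odd}, the extra contraction against the Levi-Civita symbol contributes $\sum_{j_1j_2j_3}R_{i_1j_1}R_{i_2j_2}R_{i_3j_3}\varepsilon_{j_1j_2j_3}=\det(R)\,\varepsilon_{i_1i_2i_3}$, so these layers are equivariant with the output transforming as a pseudotensor; this reduces to ordinary equivariance on $\mathrm{SO}(3)$, and the MACE-style architecture used here keeps only the even products of \eqref{eq:product_even}, which transform as genuine tensors.

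For (iii)--(v) I would note that $r_{uv}=\lVert\mathbf{r}_{uv}\rVert_2$ is $\mathrm{O}(3)$-invariant, so multiplying any equivariant tensor by $R_{kl_1l_2l_3}^{(t)}(r_{uv})$ preserves equivariance; the weights $W$ act only on the channel index $k$ and never on Cartesian indices, so $\frac{1}{\sqrt{d_t}}\sum_{k'}W_{kk'l_2}^{(t)}\mathbf{h}_{vk'l_2}^{(t)}$ is a channel-wise linear combination of equivariant tensors and hence equivariant; and a sum over $v\in\mathcal{N}(u)$ of equivariant tensors is equivariant (and, additionally, permutation-invariant over neighbors). Composing these, the two-body features of Eqs.~(\ref{eq:atomic_basis}) and~(\ref{eq:atomic_basis_first}) transform as rank-$l_3$ and rank-$l_1$ tensors, and the $\nu$-fold product \eqref{eq:product_basis}, being an iterated application of $\otimes_{\mathrm{Cart}}$ to equivariant inputs, transforms as a rank-$L$ tensor; feeding these into the node-update and readout maps (again built from the same operations plus scalar nonlinearities acting only on invariant features) yields an equivariant layer, with the per-atom energies left unchanged because they are invariants. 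The main obstacle is the careful index bookkeeping in steps (i) and (ii): one must be precise about how the symmetrization $\{\cdot\}$ over $S_l$ and the $(k+m)$-fold contractions interleave with $R^{\otimes n}$, and must track the parity factor $\det(R)$ introduced by $\boldsymbol{\varepsilon}$ in the odd case. Once these commutation facts are in place, equivariance of the full layer follows by composition.
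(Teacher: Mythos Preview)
Your proposal is correct and follows essentially the same route as the paper: the paper also establishes equivariance of $\mathbf{T}_l$ and of $\otimes_{\mathrm{Cart}}$ as separate lemmas (via the same orthogonality identity $\sum_j R_{aj}R_{bj}=\delta_{ab}$ for contractions and $R\mathbf{I}=\mathbf{I}$ for the identity factors), and then assembles the layer equivariance by induction on $t$, noting that all learnable weights act as scalars on the Cartesian indices. Your treatment of the odd case is in fact more explicit than the paper's---you correctly track the $\det(R)$ factor from the Levi-Civita contraction and observe that the architecture restricts to even products, whereas the paper dispatches the odd case with ``a similar derivation applies.''
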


\begin{proposition}
\label{prop:ictp_tracelss}
The message-passing layers based on irreducible Cartesian tensors and their irreducible tensor products preserve the traceless property of irreducible Cartesian tensors.
\end{proposition}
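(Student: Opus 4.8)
The plan is to show that every operation used to build a message-passing layer maps symmetric traceless (irreducible Cartesian) tensors to symmetric traceless tensors, so that the property propagates through the whole network by induction on the layer index $t$ and on the number of composed operations. The building blocks to check are: (i) the construction $\mathbf{T}_l(\hat{\mathbf{r}})$ of \eqref{eq:cartesian_irreps}; (ii) the irreducible Cartesian tensor product $\otimes_\mathrm{Cart}$ of Eqs.~(\ref{eq:product_even})–(\ref{eq:product_odd}), used for the convolution/two-body features \eqref{eq:atomic_basis}, \eqref{eq:atomic_basis_first} and for the $\nu$-fold product basis \eqref{eq:product_basis}; (iii) the channel-mixing linear maps $\sum_{k'} W_{kk'l}^{(t)}(\,\cdot\,)$ and scalar multiplication by the radial functions $R^{(t)}_{k l_1 l_2 l_3}(r_{uv})$; and (iv) the neighbor pooling $\sum_{v\in\mathcal{N}(u)}$. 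Steps (iii) and (iv) are immediate: a linear combination (with scalar coefficients) of tensors that are symmetric and traceless in a fixed index set is again symmetric and traceless, because symmetrization and the trace $\sum_i(\cdot)_{\cdots i \cdots i \cdots}$ are linear. Likewise the node features in the first layer are rank-$0$ tensors (scalars), which are trivially traceless, so the base case of the induction holds.

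The substantive steps are (i) and (ii). For (i), I would argue that \eqref{eq:cartesian_irreps} is manifestly symmetric because of the full symmetrization $\{\cdot\}$ over all $l$ indices, and that it is traceless by the standard computation: contracting two indices of $\{\hat{\mathbf{r}}^{\otimes(l-2m)}\otimes\mathbf{I}^{\otimes m}\}$ produces a linear combination of terms of the form $\{\hat{\mathbf{r}}^{\otimes(l-2-2(m-1))}\otimes\mathbf{I}^{\otimes(m-1)}\}$ and $\{\hat{\mathbf{r}}^{\otimes(l-2m)}\otimes\mathbf{I}^{\otimes(m-1)}\}$ (using $\hat{\mathbf{r}}\cdot\hat{\mathbf{r}}=1$, $\mathbf I:\mathbf I = 3$, $\mathbf{I}:\hat{\mathbf{r}}\otimes\hat{\mathbf{r}} = 1$, and combinatorial factors counting how many index pairs hit $\hat{\mathbf{r}}\hat{\mathbf{r}}$, $\mathbf I\mathbf I$, or $\hat{\mathbf{r}}\mathbf I$); the alternating-sign, double-factorial coefficients in \eqref{eq:cartesian_irreps} are exactly those that make the contracted sum telescope to zero. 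This is the classical fact that \eqref{eq:cartesian_irreps} is the detracer/natural tensor operator, and I would cite it to Lehman and Coope rather than re-deriving the coefficient bookkeeping in full.

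For (ii), the key observation is that by construction $(\mathbf{x}_{l_1}\otimes_\mathrm{Cart}\mathbf{y}_{l_2})_{l_3}$ in \eqref{eq:product_even}/\eqref{eq:product_odd} has the same structural form as \eqref{eq:cartesian_irreps}: it is a symmetrized (via $\{\cdot\}$) linear combination of terms $\big(\text{inner contraction of }\mathbf{x},\mathbf{y}\ (\text{possibly with }\boldsymbol\varepsilon)\big)\otimes\mathbf{I}^{\otimes m}$ with alternating-sign double-factorial coefficients. So tracelessness follows by the same telescoping computation as in (i): contracting two of the $l_3$ output indices hits the uncontracted indices of $\mathbf{x}$, of $\mathbf{y}$, of $\boldsymbol\varepsilon$, or a $\mathbf I$ factor; the $\mathbf I$-$\mathbf I$ and $\mathbf I$-(rest) contributions cancel between consecutive $m$ terms thanks to the chosen coefficients; contracting two indices both coming from $\mathbf{x}$ (or both from $\mathbf{y}$) vanishes because $\mathbf{x}_{l_1}$ and $\mathbf{y}_{l_2}$ are themselves traceless (inductive hypothesis); contracting an index of $\mathbf{x}$ with one of $\boldsymbol\varepsilon$ and similar mixed cases vanish by the antisymmetry $\varepsilon_{i_1i_2i_3}=-\varepsilon_{i_3i_2i_1}$ together with the symmetry of $\mathbf{x},\mathbf{y}$; and contracting an $\mathbf{x}$ index with a $\mathbf{y}$ index just produces a higher-order inner contraction that appears (with the right weight) in another term of the sum, again telescoping away. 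Symmetry of the output is automatic from $\{\cdot\}$. Iterating this over the $\nu-1$ successive products in \eqref{eq:product_basis} and over the $t$ message-passing layers completes the induction.

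The main obstacle I expect is the bookkeeping in step (ii): carefully tracking which output-index pairs land on which factor after a trace, and verifying that the alternating double-factorial coefficients in \eqref{eq:product_even}/\eqref{eq:product_odd} (and the normalization constants $C_{l_1l_2l_3}$, $D_{l_1l_2l_3}$ from \appref{sec:normalization_constants}) produce exact cancellation in the $\mathbf I$-bearing terms — the odd case with $\boldsymbol\varepsilon$ adds an extra layer of index-antisymmetry arguments. I would handle this either by reducing everything to the single identity "the symmetrized detracer of any tensor is traceless" (proved once, then reused for both \eqref{eq:cartesian_irreps} and \eqref{eq:product_even}/\eqref{eq:product_odd}, since their right-hand sides are both detracers applied to an already-built tensor), or by citing the corresponding lemmas of \cite{Lehman1989, Coope1965, Coope1970}; the remaining novelty is only the routine propagation through pooling, channel mixing, and the $\nu$-fold iteration, which is straightforward linearity.
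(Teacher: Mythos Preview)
Your proposal is correct and follows essentially the same route as the paper: reduce to showing that $\mathbf{T}_l(\hat{\mathbf{r}})$ and $(\mathbf{x}_{l_1}\otimes_{\mathrm{Cart}}\mathbf{y}_{l_2})_{l_3}$ are traceless, then propagate through the linear layer operations by a routine induction. The paper carries out exactly the case analysis you sketch (where the two traced indices land among the $\hat{\mathbf{r}}$, $\mathbf{I}$, $\mathbf{x}$, $\mathbf{y}$, or $\boldsymbol{\varepsilon}$ factors) and proves the telescoping cancellation between the $m$ and $m+1$ terms as explicit combinatorial lemmas, rather than packaging it as a single ``detracer'' identity or citing \cite{Lehman1989}; your streamlined presentation would be a mild simplification but is the same argument.
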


\section{Experiments and results \label{sec:results}}

This section presents the results for the five benchmark data sets: rMD17, MD22, 3BPA, acetylacetone, and Ta--V--Cr--W. We describe data sets and training details in Appendices~\ref{sec:datasets_appendix} and \ref{sec:training_appendix}, respectively.

\begin{figure}[t!]
    \begin{center}
        \includegraphics[width=0.99\textwidth]{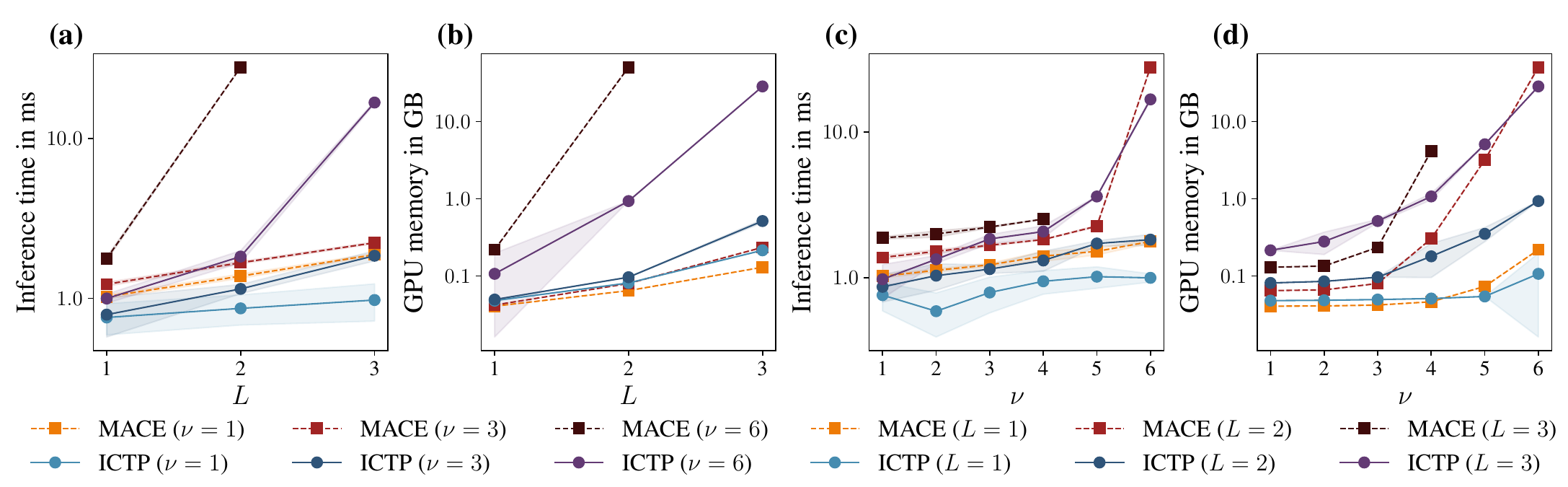}
    \end{center}
    \caption{\textbf{Inference times and memory consumption as a function of the tensor rank $L$ (a)--(b) and the correlation order $\nu$ (c)--(d).} All results are obtained for the 3BPA data set and $l_\mathrm{max} = L$. We used eight feature channels to allow experiments with larger $\nu$ values. MACE models use intermediate tensors with $l > l_\mathrm{max}$ for their product basis, which we fixed to $l = l_\mathrm{max}$. Otherwise, pre-computing generalized Clebsch–Gordan coefficients for $\nu > 4$ would require more than 2~TB of RAM. For ICTP, we used the full product basis to compute the same number of $\nu$-fold tensor products as in MACE.}
    \label{fig:3bpa-results-scaling}
\end{figure}

\textbf{Scaling and computational cost.} The expressive power and computational efficiency of equivariant many-body message-passing potentials depend on the tensor ranks employed in equivariant message passing and embedding local atomic environments, i.e., $L$ and $l_\mathrm{max}$, respectively, as well as the correlation order $\nu$. Recent work has shown that for identifying environments with $L$-fold symmetries, at least rank-$L$ tensors are required~\cite{Joshi2023}. These symmetries are typically lifted in atomistic simulations, motivating the use of $L \leq 4$. Higher body-order correlations $\nu$, in turn, are required if atomic environments are degenerate to a lower body-order correlation $\nu - 1$~\cite{Pozdnyakov2020, Joshi2023}. \Figref{fig:3bpa-results-scaling} demonstrates inference times and memory consumption of models based on irreducible Cartesian and spherical tensors, i.e., ICTP and MACE, respectively, as a function of the tensor rank and the correlation order. \Tabref{tab:3bpa-results-scaling} presents the corresponding numerical results. We find that irreducible Cartesian tensors outperform spherical ones for most parameter values. In particular, irreducible Cartesian tensors allow spanning the $\nu$-space more efficiently, in line with our theoretical results in \secref{sec:cartesian_message_passing}.

\begin{table*}[t!]
	\caption{\textbf{Energy (E) and force (F) mean absolute errors (MAEs) for the rMD17 data set.} E- and F-MAE are given in meV and meV/\AA, respectively. Results are shown for models trained using $N_\text{train}=\{950, 50\}$ configurations randomly drawn from the data set, with further 50 used for early stopping. All values are obtained by averaging over five independent runs, with the standard deviation provided if available. Best performances, considering the standard deviation, are highlighted in bold.
	\label{tab:rmd17-results}}
	\begin{center}
    \resizebox{\textwidth}{!}{
      \begin{tabular}{lclllllrrr}
        \toprule  
                                                &   & \multicolumn{5}{c}{$N_\text{train} = 950$}                                                                                                              & \multicolumn{3}{c}{$N_\text{train} = 50$}                                               \\
			    	                            &   & ICTP$_\text{sym}$       & TensorNet~\cite{Simeon2023}       & MACE~\cite{Batatia2022}   & Allegro~\cite{Musaelian2023}      & NequIP~\cite{Batzner2022}	    & NequIP~\cite{Batzner2022}     & MACE~\cite{Batatia2022}   & ICTP$_\mathrm{sym}$       \\		    	                            
        \cmidrule(lr){1-2} \cmidrule(lr){3-7} \cmidrule(lr){8-10}
        \multirow[l]{2}{*}{Aspirin}             & E & \textbf{2.27 $\pm$ 0.11}	    & 2.4		                        & \textbf{2.2}		        & 2.3		                        & 2.3		                    & 19.5	                        & 17.0	                    & \textbf{14.84 $\pm$ 0.98}     \\
                                                & F & \textbf{6.67 $\pm$ 0.19}	    & 8.9 $\pm$ 0.1		                    & \textbf{6.6}		        & 7.3		                        & 8.2		                    & 52.0		                    & 43.9                      & \textbf{40.19 $\pm$ 0.95}     \\
        \cmidrule(lr){1-2} \cmidrule(lr){3-7} \cmidrule(lr){8-10}
        \multirow[l]{2}{*}{Azobenzene}	        & E & 1.20 $\pm$ 0.01	            & \textbf{0.7}		                & 1.2		                & 1.2		                        & \textbf{0.7}		            & 6.0		                    & \textbf{5.4}              & \textbf{5.47 $\pm$ 0.63}      \\
                                                & F & 2.92 $\pm$ 0.03	            & 3.1		                        & 3.0		                & \textbf{2.6}		                & 2.9		                    & 20.0		                    & \textbf{17.7}             & \textbf{17.25 $\pm$ 0.53}     \\
        \cmidrule(lr){1-2} \cmidrule(lr){3-7} \cmidrule(lr){8-10}
        \multirow[l]{2}{*}{Benzene}	            & E & 0.26 $\pm$ 0.00	            & \textbf{0.02}                     & 0.4		                & 0.3		                        & 0.04		                    & 0.6		                    & 0.7                       & \textbf{0.38 $\pm$ 0.02}      \\
                                                & F & 0.34 $\pm$ 0.02	            & 0.3                               & 0.3		                & \textbf{0.2}		                & 0.3		                    & 2.9		                    & 2.7                       & \textbf{2.45 $\pm$ 0.13}      \\
        \cmidrule(lr){1-2} \cmidrule(lr){3-7} \cmidrule(lr){8-10}
	    \multirow[l]{2}{*}{Ethanol}	            & E & \textbf{0.43 $\pm$ 0.02}		& 0.5			                    & \textbf{0.4}				& \textbf{0.4}				        & \textbf{0.4}				    & 8.7		                    & 6.7                       & \textbf{6.15 $\pm$ 0.26}      \\
                                                & F & 2.63 $\pm$ 0.10                & 3.5			                    & \textbf{2.1}				& \textbf{2.1}				        & 2.8				            & 40.2		                    & 32.6                      & \textbf{29.53 $\pm$ 1.14}     \\
        \cmidrule(lr){1-2} \cmidrule(lr){3-7} \cmidrule(lr){8-10}
        \multirow[l]{2}{*}{Malonaldehyde}	    & E & 0.82 $\pm$ 0.03	            & 0.8			                    & 0.8			            & \textbf{0.6}				        & 0.8				            & 12.7		                    & \textbf{10.0}             & \textbf{9.72 $\pm$ 0.42}      \\
                                                & F & 4.96 $\pm$ 0.21               & 5.4			                    & 4.1			            & \textbf{3.6}				        & 5.1				            & 52.5		                    & \textbf{43.3}             & \textbf{42.88 $\pm$ 3.08}     \\
        \cmidrule(lr){1-2} \cmidrule(lr){3-7} \cmidrule(lr){8-10}
        \multirow[l]{2}{*}{Naphthalene}	        & E & 0.56 $\pm$ 0.00			    & \textbf{0.2}			            & 0.5			            & \textbf{0.2}				        & 0.9				            & \textbf{2.1}		            & \textbf{2.1}              & \textbf{2.06 $\pm$ 0.10}      \\
                                                & F & 1.45 $\pm$ 0.05               & 1.6			                    & 1.6				        & \textbf{0.9}				        & 1.3				            & 10.0		                    & \textbf{9.2}              & \textbf{9.43 $\pm$ 0.46}      \\
        \cmidrule(lr){1-2} \cmidrule(lr){3-7} \cmidrule(lr){8-10}
        \multirow[l]{2}{*}{Paracetamol}	        & E & 1.44 $\pm$ 0.03			    & \textbf{1.3}			            & \textbf{1.3}				& 1.5				                & 1.4				            & 14.3		                    & 9.7                       & \textbf{8.94 $\pm$ 0.66}      \\
                                                & F & \textbf{4.89 $\pm$ 0.11}      & 5.9 $\pm$ 0.1			                & \textbf{4.8}				& 4.9				                & 5.9				            & 39.7		                    & \textbf{31.5}             & \textbf{30.13 $\pm$ 1.51}     \\
        \cmidrule(lr){1-2} \cmidrule(lr){3-7} \cmidrule(lr){8-10}
        \multirow[l]{2}{*}{Salicylic acid}	    & E & 0.97 $\pm$ 0.01			    & 0.8			                    & 0.9				        & 0.9				                & \textbf{0.7}				    & 8.0		                    & 6.5                       & \textbf{5.95 $\pm$ 0.43}      \\
                                                & F & 3.66 $\pm$ 0.06               & 4.6 $\pm$ 0.1			                & 3.1				        & \textbf{2.9}				        & 4.0				            & 35.9		                    & \textbf{28.4}             & \textbf{27.78 $\pm$ 1.93}     \\
        \cmidrule(lr){1-2} \cmidrule(lr){3-7} \cmidrule(lr){8-10}
        \multirow[l]{2}{*}{Toluene}	            & E & 0.46 $\pm$ 0.00			    & \textbf{0.3}			            & 0.5				        & 0.4				                & \textbf{0.3}				    & 3.3		                    & 3.1                       & \textbf{2.45 $\pm$ 0.13}      \\
                                                & F & 1.61 $\pm$ 0.02               & 1.7			                    & \textbf{1.5}				& 1.8				                & 1.6				            & 15.1		                    & 12.1                      & \textbf{11.24 $\pm$ 0.55}     \\
        \cmidrule(lr){1-2} \cmidrule(lr){3-7} \cmidrule(lr){8-10}
        \multirow[l]{2}{*}{Uracil}	            & E & 0.57 $\pm$ 0.01			    & \textbf{0.4}			            & 0.5				        & 0.6				                & \textbf{0.4}				    & 7.3		                    & \textbf{4.4}              & 4.66 $\pm$ 0.16               \\
                                                & F & 2.64 $\pm$ 0.08               & 3.1			                    & 2.1				        & \textbf{1.8}				        & 3.1				            & 40.1		                    & \textbf{25.9}             & \textbf{25.97 $\pm$ 0.78}     \\
	    \bottomrule 
	    \end{tabular}
      }
	\end{center}
\end{table*}

\textbf{Molecular dynamics trajectories.} We assess the performance of ICTP models using the revised MD17 (rMD17) data set, which includes structures, total energies, and atomic forces for ten small organic molecules obtained from ab initio molecular dynamics simulations~\cite{Christensen2020b}. \Tabref{tab:rmd17-results} shows that ICTP$_\text{sym}$ achieves accuracy on par with state-of-the-art spherical and Cartesian models. Notably, several methods exhibit similar accuracy when trained with 950 configurations. However, the achieved accuracy is much lower than the desired accuracy of $43.37$~meV $\approx 1$~kcal/mol, making a model comparison less meaningful. Therefore, we also compare ICTP$_\text{sym}$ with MACE and NequIP, trained using 50 configurations, making learning accurate MLIPs more challenging. From \tabref{tab:rmd17-results}, we see that ICTP$_\text{sym}$ outperforms MACE and NequIP for most molecules in this scenario. 

We further evaluate the performance of ICTP using the MD22 data set, which contains seven molecular systems with 42--370 atoms~\cite{Chmiela2023}. This data set spans four major classes of biomolecules and supramolecules and was designed to challenge short-range models. \Tabref{tab:md22-results} shows that ICTP achieves an accuracy on par with or better than state-of-the-art models, including long-range ones.

\begin{table*}[t!]
	\caption{\textbf{Energy (E) and force (F) root-mean-square errors (RMSEs) for the 3BPA data set.} E- and F-RMSE are given in meV and meV/\AA, respectively. Results are shown for models trained using 450 configurations randomly drawn from the training data set collected at 300~K, with further 50 used for early stopping. All ICTP results are obtained by averaging over five independent runs. For MACE and NequIP, the results are reported for three runs. The standard deviation is provided if it is available. Best performances, considering the standard deviation, are highlighted in bold. Inference time and memory consumption are measured for a batch size of 100. Inference time is reported per structure in ms, while memory consumption is provided for the entire batch in GB.
	\label{tab:3bpa-results}}
	\begin{center}
    \resizebox{\textwidth}{!}{
      \begin{tabular}{lcrrrrrrrr}
        \toprule 
			    	                          &   & ICTP$_\text{full}$          & ICTP$_\text{sym}$         & ICTP$_{\text{sym} + \text{lt}}$  & MACE\textsuperscript{\emph{a}}        & CACE~\cite{Cheng2024}  & MACE~\cite{Batatia2022}                  & NequIP~\cite{Musaelian2023}                                \\
        \cmidrule(lr){1-2} \cmidrule(lr){3-6} \cmidrule(lr){7-9}
        \multirow[l]{2}{*}{300~K}             & E & \textbf{2.70 $\pm$ 0.22}	& \textbf{2.70 $\pm$ 0.08}	& \textbf{2.98 $\pm$ 0.34}		   & \textbf{2.81 $\pm$ 0.18}              & 6.3		            & \textbf{3.0 $\pm$ 0.2}                   & 3.28 $\pm$ 0.10                                            \\
                                              & F & \textbf{9.45 $\pm$ 0.29}	& \textbf{9.39 $\pm$ 0.31}	& \textbf{9.57 $\pm$ 0.20}	       & \textbf{9.47 $\pm$ 0.42}              & 21.4		            & \textbf{8.8 $\pm$ 0.3}                   & 10.77 $\pm$ 0.19                                           \\
        \cmidrule(lr){1-2} \cmidrule(lr){3-6} \cmidrule(lr){7-9}
        \multirow[l]{2}{*}{600~K}	          & E & \textbf{10.74 $\pm$ 0.31}	& \textbf{10.38 $\pm$ 0.80}	& \textbf{10.29 $\pm$ 0.90}		   & \textbf{11.11 $\pm$ 1.41}             & 18.0		            & \textbf{9.7 $\pm$ 0.5}                   & 11.16 $\pm$ 0.14                                           \\
                                              & F & \textbf{22.99 $\pm$ 0.64}	& \textbf{22.87 $\pm$ 0.91}	& \textbf{23.03 $\pm$ 0.76}		   & \textbf{23.27 $\pm$ 1.45}             & 45.2		            & \textbf{21.8 $\pm$ 0.6}                  & 26.37 $\pm$ 0.09                                           \\
        \cmidrule(lr){1-2} \cmidrule(lr){3-6} \cmidrule(lr){7-9}
        \multirow[l]{2}{*}{1200~K}	          & E & \textbf{29.80 $\pm$ 0.92}	& \textbf{30.84 $\pm$ 1.87}	& \textbf{31.32 $\pm$ 1.80}		   & \textbf{31.15 $\pm$ 1.58}             & 58.0		            & \textbf{29.8 $\pm$ 1.0}                  & 38.52 $\pm$ 1.63                                           \\
                                              & F & \textbf{62.82 $\pm$ 1.23}	& \textbf{64.54 $\pm$ 3.88}	& \textbf{65.36 $\pm$ 3.47}		   & \textbf{65.22 $\pm$ 3.52}             & 113.8                  & \textbf{62.0 $\pm$ 0.7}                  & 76.18 $\pm$ 1.11	                                        \\
        \cmidrule(lr){1-2} \cmidrule(lr){3-6} \cmidrule(lr){7-9}
	    \multirow[l]{2}{*}{Dihedral slices}	  & E & \textbf{9.82 $\pm$ 0.79}	& 10.64 $\pm$ 1.07			& 13.03 $\pm$ 3.44				   & \textbf{8.56 $\pm$ 1.53}              & --				        & \textbf{7.8 $\pm$ 0.6}                   & 23.2~\cite{Batatia2022}                                    \\
                                              & F & \textbf{17.52 $\pm$ 0.54}   & \textbf{17.18 $\pm$ 0.81}	& 19.31 $\pm$ 0.83                 & \textbf{17.69 $\pm$ 1.29}             & --				        & \textbf{16.5 $\pm$ 1.7}                  & 23.1~\cite{Batatia2022}                                    \\
        \cmidrule(lr){1-2} \cmidrule(lr){3-6} \cmidrule(lr){7-9}
	    Inference time                        &   & 6.45 $\pm$ 0.50			    & 5.31 $\pm$ 0.02			& \textbf{3.51 $\pm$ 0.22}		   & 4.66 $\pm$ 0.05                       & --		                & \textbf{24.3}\textsuperscript{\emph{b}}  & 103.5\textsuperscript{\emph{b}}~\cite{Batatia2022}         \\
	    \cmidrule(lr){1-2} \cmidrule(lr){3-6} \cmidrule(lr){7-9}
	    Memory consumption                    &   & 49.66 $\pm$ 0.00			& 42.01 $\pm$ 0.11			& 39.08 $\pm$ 0.00		           & \textbf{36.26 $\pm$ 0.00}             & --		                & --	                                   & --                                                         \\
	    \bottomrule 
	    \end{tabular}
      }
	\end{center}
	\footnotesize{\textsuperscript{\emph{a}} During inference time measurements with the MACE source code, we were not able to reproduce the original results~\cite{Batatia2022}. Thus, we re-run MACE experiments using a training setup similar to that of ICTP; see \appref{sec:training_appendix}.}\\
	\footnotesize{\textsuperscript{\emph{b}} The original publication did not report the batch size used to measure inference time~\cite{Batatia2022}. Therefore, the values provided are used solely to demonstrate the relative computational cost of MACE and NequIP.}
\end{table*}

\textbf{Extrapolation to out-of-domain data.} We continue to assess the performance of ICTP models using the 3BPA data set~\cite{Kovacs2021}. The training data set comprises 500 configurations, total energies, and atomic forces acquired from molecular dynamics at 300~K. The test data set is obtained from simulations at 300~K, 600~K, and 1200~K. We also test our models using energies and forces along dihedral rotations of the molecule. \Tabref{tab:3bpa-results} shows that ICTP models trained using 450 configurations perform on par with state-of-the-art spherical models, similar to the results for rMD17. However, we were not able to reproduce the original results using the current MACE source code and the described training setup~\cite{Batatia2022}. Therefore, for a fair comparison, we unified the training setup for ICTP and MACE (see \appref{sec:training_appendix}) and \tabref{tab:3bpa-results} reports the corresponding results for 450 training configurations. In \tabref{tab:3bpa-results-50_configs}, we present the results obtained for ICTP and MACE trained using 50 configurations.

From \tabref{tab:3bpa-results}, we observe that ICTP$_\text{full}$ slightly outperforms MACE in total energy and atomic force RMSEs but is $\sim 1.4$ times less computationally efficient. This difference arises from MACE using the generalized Clebsch--Gordan coefficients and pre-computing their product with the weights in the linear expansion in \eqref{eq:messages_uncoupled}~\cite{Batatia2022}, which reduces the effective number of evaluated tensor products. Thus, we may attribute the lower computational efficiency of ICTP$_\text{full}$ to the use of the MACE architecture, which results in a larger pre-factor $\mathcal{K}$ for Cartesian models but facilitates a fair comparison between irreducible Cartesian and spherical tensors. Using the symmetric Cartesian product basis and that in the latent space, for example, we further improve the runtime of our models while maintaining accuracy on par with MACE.

\Tabref{tab:3bpa-results-no_product} presents additional results obtained for $\nu=1$, i.e., focusing on models that rely exclusively on two-body interactions. We observe that Cartesian models exhibit shorter inference times than spherical ones, with MACE and ICTP$_\text{full}$ achieving $2.96 \pm 0.06$~ms and $2.63 \pm 0.02$~ms, respectively. Regarding memory consumption, MACE and ICTP perform similarly despite the larger number of tensor products required for the Cartesian product basis in \eqref{eq:product_basis}. This observation can be attributed to, for example, the Clebsch--Gordan tensor product requiring the computation of intermediate tensors with $(2L+1)^2$ elements, whereas irreducible Cartesian tensors contain $3^L$ elements.

\begin{figure}[t!]
    \begin{center}
        \includegraphics[width=0.99\textwidth]{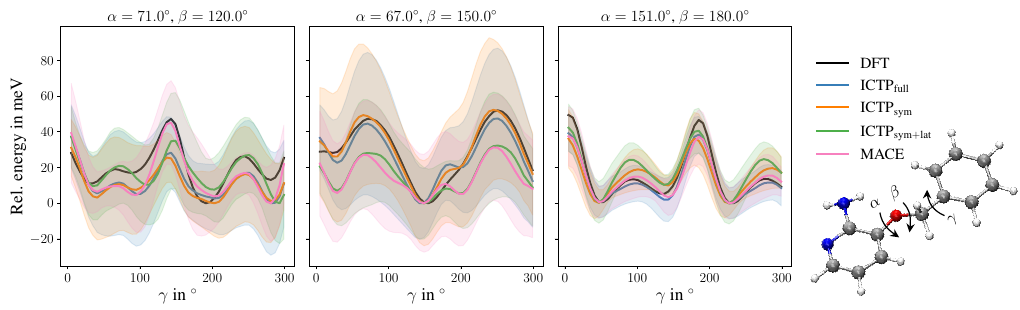}
    \end{center}
    \caption{\textbf{Potential energy profiles for three cuts through the 3BPA molecule's potential energy surface.} All models are trained using 50 configurations, and additional 50 are used for early stopping. The 3BPA molecule, including the three dihedral angles ($\alpha$, $\beta$, and $\gamma$), provided in degrees~$^\circ$, is shown as an inset. The color code of the inset molecule is C grey, O red, N blue, and H white. The reference potential energy profile (DFT) is shown in black. Each profile is shifted such that each model's lowest energy is zero. Shaded areas denote standard deviations across five independent runs.}
    \label{fig:3bpa-results-50_configs}
\end{figure}

\Figref{fig:3bpa-results-50_configs} compares potential energy profiles obtained with ICTP and MACE trained using 50 configurations. For the potential energy cut at $\beta=120^\circ$ (left panel), ICTP and MACE perform similarly, except for the energy barrier at  $\gamma\approx143^\circ$, which ICTP tends to underestimate stronger than MACE. For $\beta=150^\circ$ (middle panel), however, ICTP$_\text{full}$ and ICTP$_\text{sym}$ outperform MACE across nearly the entire range of the dihedral angle $\gamma$. For $\beta=180^\circ$ (right panel), all models perform similarly. \Figref{fig:3bpa-results} shows the corresponding potential energy profiles for models trained with 450 configurations. All models perform similarly in this scenario, with energy profiles close to the reference (DFT).

\begin{table*}[t!]
	\caption{\textbf{Energy (E) and force (F) root-mean-square errors (RMSEs) for the acetylacetone data set.} E- and F-RMSE are given in meV and meV/\AA, respectively. Results are shown for models trained using 450 configurations randomly drawn from the training data set collected at 300~K, with further 50 used for early stopping. All ICTP results are obtained by averaging over five independent runs. For MACE and NequIP, the results are reported for three runs. The standard deviation is provided if it is available. Best performances, considering the standard deviation, are highlighted in bold.
	\label{tab:acac-results}}
	\begin{center}
    \resizebox{\textwidth}{!}{
      \begin{tabular}{lcrrrrrr}
        \toprule 
			    	                &   & ICTP$_\text{full}$        & ICTP$_\text{sym}$          & ICTP$_{\text{sym} + \text{lt}}$  & MACE\textsuperscript{\emph{a}}          & MACE~\cite{Batatia2022}          & NequIP~\cite{Batatia2022design}       \\
        \cmidrule(lr){1-2} \cmidrule(lr){3-6} \cmidrule(lr){7-8}
        \multirow[l]{2}{*}{300~K}   & E & \textbf{0.75 $\pm$ 0.04}	& \textbf{0.76 $\pm$ 0.03}	 & \textbf{0.77 $\pm$ 0.04}         & \textbf{0.75 $\pm$ 0.05}                & 0.9 $\pm$ 0.03		             & \textbf{0.81 $\pm$ 0.05}              \\
                                    & F & \textbf{5.08 $\pm$ 0.11}	& \textbf{5.17 $\pm$ 0.10}	 & \textbf{5.18 $\pm$ 0.16}         & \textbf{5.00 $\pm$ 0.17}                & \textbf{5.1 $\pm$ 0.1}		     & 5.90 $\pm$ 0.46                       \\
        \cmidrule(lr){1-2} \cmidrule(lr){3-6} \cmidrule(lr){7-8}
        \multirow[l]{2}{*}{600~K}	& E & \textbf{5.39 $\pm$ 1.22}	& \textbf{4.43 $\pm$ 0.34}	 & \textbf{5.12 $\pm$ 0.29}         & \textbf{4.96 $\pm$ 0.64}                & \textbf{4.6 $\pm$ 0.3}		     & 6.04 $\pm$ 1.54                       \\
                                    & F & \textbf{23.21 $\pm$ 1.96}	& \textbf{22.90 $\pm$ 1.62}	 & \textbf{24.05 $\pm$ 1.71}        & \textbf{23.25 $\pm$ 1.82}               & \textbf{22.4 $\pm$ 0.9}		     & 27.80 $\pm$ 4.03                      \\
        \cmidrule(lr){1-2} \cmidrule(lr){3-6} \cmidrule(lr){7-8}
	    Number of parameters        &   & 2,774,800				    & 2,736,400				     & 2,648,080                        & 2,803,984         		              & 2,803,984			             & 3,190,488                		     \\
	    \bottomrule 
	    \end{tabular}
      }
	\end{center}
	\footnotesize{\textsuperscript{\emph{a}} Similar to \tabref{tab:3bpa-results}, we re-run MACE experiments using the similar training setup as for ICTP; see \appref{sec:training_appendix}.}
\end{table*}

\textbf{Flexibility and reactivity.} We further use the acetylacetone data set to assess the ICTP models' extrapolation capabilities to higher temperatures (similar to 3BPA), bond breaking, and bond torsions~\cite{Batatia2022design}. \Tabref{tab:acac-results} shows that ICTP models achieve state-of-the-art results while employing fewer parameters than spherical counterparts. \Appref{sec:results_appendix} includes additional results for the acetylacetone data set, such as total energy and atomic force RMSEs for models trained with 50 configurations and details on the potential energy profiles for hydrogen transfer and C-C bond rotation. Overall, ICTP and MACE perform similarly, demonstrating excellent generalization capability. However, when trained using 50 configurations, ICTP$_\text{full}$ is the only MLIP consistently producing the potential energy profile for hydrogen transfer close to the reference (DFT).

\textbf{Multicomponent alloys.} We finally evaluate the ICTP and MACE models using the Ta--V--Cr--W data set, designed to assess the performance of state-of-the-art MLIPs in modeling chemically complex multicomponent systems. In this evaluation, we attempt to predict energies and forces for Ta--V--Cr--W subsystems under two scenarios: The 0 K energies and forces in binary, ternary, and quaternary systems and near-melting temperature energies and forces in 4-component disordered alloys. \Tabref{tab:hea-results} shows that ICTP$_\mathrm{sym}$ outperforms MACE in nearly all subsystems, particularly in energy prediction. ICTP achieves an overall accuracy of 1.38 $\pm$ 0.09~meV/atom for energies and 0.028 $\pm$ 0.001~eV/\AA{} for forces, compared to 2.19 $\pm$ 0.31~meV/atom and 0.029 $\pm$ 0.001~eV/\AA{} by MACE. However, the $\mathcal{K}$ pre-factor from the $\nu$-fold tensor product results in longer inference times for ICTP than MACE, in line with the discussion for 3BPA.

\section{Conclusions and limitations \label{sec:conclusions}}

This work introduces many-body equivariant MPNNs based on higher-rank irreducible Cartesian tensors, offering an alternative to spherical models and addressing the limitations of state-of-the-art Cartesian models. We assess the performance of resulting MPNNs using five benchmark data sets, such as rMD17, MD22, 3BPA, acetylacetone, and Ta--V--Cr--W. In these experiments, MPNNs based on irreducible Cartesian tensors show a lower computational cost of individual operations compared to spherical counterparts. Furthermore, we demonstrate that these Cartesian models achieve accuracy and generalization capability on par with or better than state-of-the-art spherical models while memory consumption is comparable. Our results hold across the typical range of tensor ranks used in modeling many-body interactions and relevant physical properties, i.e., $L\leq 4$.

\textbf{Limitations.} We emphasize our focus on introducing MPNNs based on irreducible Cartesian tensors and prove their equivariance and traceless property. We adapted the MACE architecture, which uses only even tensor products, to enable a fair comparison with state-of-the-art spherical models. Further modifications to the architecture are possible and necessary, e.g., to reduce the pre-factor arising from the Cartesian product basis, before we can fully exploit the potential of irreducible Cartesian tensors.

\newpage

\section*{Data availability}

All data sets used in this study are publicly available: rMD17 (\url{https://doi.org/10.6084/m9.figshare.12672038.v3}), MD22 (\url{http://www.sgdml.org}), 3BPA (\url{https://github.com/davkovacs/BOTNet-datasets}), acetylacetone (\url{https://github.com/davkovacs/BOTNet-datasets}), and Ta--V--Cr--W (\url{https://doi.org/10.18419/darus-3516}).

\section*{Code availability}

The source code is available on GitHub and can be accessed via this link: \url{https://github.com/nec-research/ictp}.

\section*{Acknowledgements}

MN acknowledges support from the Deutsche Forschungsgemeinschaft (DFG, German Research Foundation) under Germany's Excellence Strategy - EXC 2075 – 390740016 and the Stuttgart Center for Simulation Science (SimTech).

\bibliography{references}


\newpage

\begin{appendices}

\listofappendices

\setcounter{equation}{0}
\renewcommand{\theequation}{A\arabic{equation}}

\setcounter{figure}{0}
\renewcommand{\thefigure}{A\arabic{figure}}

\setcounter{table}{0}
\renewcommand{\thetable}{A\arabic{table}}

\newpage

\section{Background \label{sec:background_appendix}}

\textbf{Message-passing neural networks.} Message-passing neural networks (MPNNs) learn node representations in a graph by iteratively processing local information sent by the nodes' neighbors. The initial features of node $u$ are represented as the vector $\mathbf{x}_u$, and undirected edges $\{u, v\}$ connect pairs of nodes $u,v$. A node $v$ belongs to the neighborhood of node $u$, denoted as $\mathcal{N}(u)$, if there exists an edge $\{u, v\}$ in the graph. Typically, the $(t+1)$-th message-passing layer computes a new node $u$'s representation $\mathbf{h}^{(t+1)}_u$ by applying a permutation invariant aggregation function over the neighbors~\citep{Gilmer2017, bacciu_gentle_2020}
\begin{equation*}
  \mathbf{h}_u^{(t+1)} = \phi^{(t)}\Big(\mathbf{h}^{(t)}_u, \sum\nolimits_{v \in \mathcal{N}(u)} \psi^{(t)}\big(\mathbf{h}^{(t)}_u, \mathbf{h}^{(t)}_v\big)\Big),
\end{equation*}
where $\phi^{(t)}, \psi^{(t)}$ are often implemented as learnable fully-connected neural networks (NNs), and $\mathbf{h}_u^{(0)}=\mathbf{x}_u$. To learn a mapping from a learned representation $\mathbf{h}_{u}^{(t)}$ to the atoms' energies, we can couple $T$ message-passing layers with corresponding readout functions $\mathcal{R}_t,\,t\in\{1,\dots,T\}$ such that $E_u = \sum_{t=1}^{T} \mathcal{R}_t\big(\mathbf{h}^{(t)}_u\big)$.

\textbf{Many-body interatomic potentials.} Interatomic potentials approximate the potential energy of atoms---the electronic ground state energy---as a function of their coordinates~\cite{Mueser2023}. Many-body potentials naturally arise because the interaction between two atoms is influenced by the presence of additional atoms, changing their electronic structure. This concept is formalized by expanding the atomic energy $E_u$ of a many-atom system into a series of two-body, three-body, and higher-body-order contributions
\begin{equation*}
    E_u = E^{(1)}(\mathbf{r}_u) + \sum_{v_1} E^{(2)}(\mathbf{r}_u, \mathbf{r}_{v_1}) + \sum_{v_1 < v_2} E^{(3)}(\mathbf{r}_u, \mathbf{r}_{v_1}, \mathbf{r}_{v_2}) + \cdots, 
\end{equation*}
where $\mathbf{r}_u$ represents the position of atom $u$ and the superscript $\nu$ in $E^{(\nu)}$ indicates the order of the many-body interaction. In the absence of external fields, $E^{(\nu)}$ contributions to the atomic energy are invariant to rotations, and the two-body potential depends only on distances $r_{uv} = \lVert \mathbf{r}_u - \mathbf{r}_v \rVert_2$
\begin{equation*}
    E_u = \sum_{v_1} E^{(2)}(r_{uv_1}) + \sum_{v_1 < v_2} E^{(3)}(r_{uv_1}, r_{uv_2}, r_{v_1v_2}) + \cdots.
\end{equation*}
Expansions of this form have found a broad application in constructing machine-learned interatomic potentials (MLIPs), significantly advancing the field~\cite{Shapeev2016, Drautz2019}.

\textbf{Higher-body-order local descriptors.} Recent advances in MLIPs have been influenced by moment tensor potentials (MTPs)~\cite{Shapeev2016} and atomic cluster expansion (ACE)~\cite{Drautz2019}. These approaches enable systematic construction of higher-body-order polynomial basis functions, encompassing representations like atom-centered symmetry functions (ACSFs)~\cite{Behler2007, Behler2011a}, smooth overlap of atomic positions (SOAP)~\cite{Bartok2010, Bartok2013}, Gaussian moments~\cite{Zaverkin2020, Zaverkin2021b}, and embedded atom/multi-scale embedded atom method (EAM/MEAM) potentials~\cite{Daw1984, Kim2006}. Furthermore, a reducible Cartesian tensor can be represented as a linear combination of irreducible spherical counterparts, and vice versa~\cite{Drautz2019, Drautz2020}. More general expressions, including tensor contractions, have also been provided, hinting at the relationship between Cartesian and spherical models~\cite{Stone1975, Stone1976, Normand1982}. Despite the success of MTP and ACE, defining smaller cutoff radii and rigid architecture can result in limited accuracy compared to MPNNs.

\textbf{Many-body message passing.} Designing accurate and computationally efficient interatomic potential models for interacting many-body systems requires including higher-body-order energy contributions and, thus, higher-body-order learnable features. Recently, a new message construction mechanism has been proposed by expanding the messages $\mathbf{m}_u^{(t)}$ to include many-body contributions~\cite{Batatia2022}
\begin{equation*}
    \mathbf{m}_u^{(t)} = \sum\limits_{v_1} \psi_2^{(t)}\big(\mathbf{h}^{(t)}_u, \mathbf{h}^{(t)}_{v_1}\big) + \sum\limits_{v_1, v_2} \psi_3^{(t)}\big(\mathbf{h}^{(t)}_u, \mathbf{h}^{(t)}_{v_1}, \mathbf{h}^{(t)}_{v_2}\big) + \cdots + \sum\limits_{v_1, \cdots, v_\nu} \psi_{\nu+1}^{(t)}\big(\mathbf{h}^{(t)}_u, \mathbf{h}^{(t)}_{v_1}, \cdots, \mathbf{h}^{(t)}_{v_\nu}\big),
\end{equation*}
where $(\nu+1)$ denotes the order of many-body interactions, defining the number of contracted tensors. Using $\sum_{v_1, \cdots, v_\nu}$ instead of $\sum_{v_1 < \cdots < v_\nu}$ circumvents the exponential increase in computational cost with $\nu$. It allows exploiting the product structure of many-body features, which differs from other approaches~\cite{Gasteiger2022a, Gasteiger2022b, Gasteiger2022c}.

\section{Methods \label{sec:methods_appendix}}

\subsection{Normalization constants \label{sec:normalization_constants}}

The following provides the normalization constants $C_{l_1l_2l_3}$ and $D_{l_1l_2l_3}$ for even and odd irreducible Cartesian tensor products in Eqs.~(\ref{eq:product_even}) and (\ref{eq:product_odd}), respectively. The respective normalization constants read~\cite{Lehman1989}
\begin{equation*}
  \begin{split}
    C_{l_1l_2l_3} & = \frac{l_1!l_2!(2l_3-1)!!((L_1+1)/2)!((L_2+1)/2)!}{l_3!L_1!!L_2!!L_3!!(L/2)!},\\
    D_{l_1l_2l_3} & = \frac{2l_1!l_2!(2l_3-1)!!(L_1/2)!(L_2/2)!}{(l_3-1)!(L_1+1)!!(L_2+1)!!(L_3+1)!!((L+1)/2)!},
  \end{split}
\end{equation*}
with $L=l_1 + l_2 + l_3$ and $L_i = L - 2l_i - 1$. Here, $C_{l_1l_2l_3}$ is defined such that an $l_3$-fold contraction of the tensor $\mathbf{z}_{l_3}$, obtained through the irreducible Cartesian tensor product between $\mathbf{x}_{l_1}$ and $\mathbf{y}_{l_2}$ ($\mathbf{x}_{l_1}$ and $\mathbf{y}_{l_2}$ are obtained using \eqref{eq:cartesian_irreps}), with the unit vector $\hat{\mathbf{r}}$ yields unity. We refer to the original publication for the motivation behind $D_{l_1l_2l_3}$~\cite{Lehman1989}.

\subsection{Further details on the equivariant message passing \label{sec:architecture_appendix}}

The following provides additional details on the employed equivariant message-passing architecture. Learnable weights employed in Eqs.~(\ref{eq:atomic_basis}), (\ref{eq:atomic_basis_first}), and (\ref{eq:product_basis}) (i.e., $W_{kk^\prime l_2}^{(t)}$, $W_{k Z_v}^{(t)}$, and $W_{kk^\prime l_\xi}^{(t)}$ with $k,k^\prime$ running over feature channels) are initialized by picking the respective entries from a normal distribution with zero mean and unit variance.

\textbf{Many-body message-passing.} The many-body equivariant features, represented by an irreducible Cartesian tensor of rank $L$ and obtained through the $\nu$-fold tensor product in \eqref{eq:product_basis}, are combined using the linear expansion
\begin{equation}
  \label{eq:messages_uncoupled}
  \big(\mathbf{m}_{ukL}^{(t)}\big)_{i_1i_2\cdots i_L} = \Big(\sum_\nu\sum_{\eta_\nu} W_{Z_u\eta_{\nu}kL}^{(t)} \mathbf{B}_{u\eta_{\nu}kL}^{(t)}\Big)_{i_1i_2\cdots i_L},
\end{equation}
where $W_{Z_u\eta_{\nu}kL}^{(t)}$ denotes a learnable weight matrix which depends on the chemical element $Z_u$ and rank $L$ and which elements are initialized by picking the respective entries from a normal distribution with zero mean and a standard deviation of $1/\mathrm{len}(\eta_\nu)$. The updated node embeddings are further obtained as a linear function of $\big(\mathbf{m}_{ukL}^{(t)}\big)_{i_1i_2\cdots i_L}$ and the residual connection~\cite{He2015b, Batatia2022}
\begin{equation}
  \label{eq:update}
    \big(\mathbf{h}_{ukL}^{(t+1)}\big)_{i_1i_2\cdots i_L} = \frac{1}{\sqrt{d_t}} \sum_{k^\prime} W_{kk^\prime L}^{(t)}\big(\mathbf{m}_{uk^\prime L}^{(t)}\big)_{i_1i_2\cdots i_L} + \frac{1}{\sqrt{d_t N_Z}} \sum_{k^\prime} W_{Z_ukk^\prime L}^{(t)}\big(\mathbf{h}_{uk^\prime L}^{(t)}\big)_{i_1i_2\cdots i_L},
\end{equation}
where $N_Z$ denotes the number of atom types and learnable weights $ W_{kk^\prime L}^{(t)}$ and $W_{Z_ukk^\prime L}^{(t)}$ are initialized by picking the respective entries from a normal distribution with zero mean and unit variance.

\textbf{Full and symmetric product basis.} In the MACE architecture, the authors pre-compute products between the generalized Clebsch--Gordan coefficients, which define the interactions of $\{l_1,\cdots,l_\nu\}$-rank spherical tensors, and the learnable weight $W_{Z_u\eta_{\nu}kL}^{(t)}$ of the linear expansion in \eqref{eq:messages_uncoupled}~\cite{Batatia2022}. This approach reduces the computational cost of constructing the product basis with spherical tensors as the effective number of evaluated tensor products is smaller by $\mathcal{K} = \mathrm{len}\left(\eta_\nu\right)$. For irreducible Cartesian tensors, operations like matrix-vector and matrix-matrix products define the interactions between $\{l_1,\cdots,l_\nu\}$-rank tensors. Thus, an equivalent operation to those proposed for spherical tensors may be generally impossible for irreducible Cartesian tensors or would lead to an architecture different from MACE.

Note that one of our main goals is to demonstrate that a many-body equivariant message-passing architecture defined using higher-rank irreducible Cartesian tensors can be as expressive as the one using spherical tensors. Therefore, we compute  $\mathbf{m}_{ukL}^{(t)}$ by iteratively evaluating all possible $\nu$-fold tensor products, which effectively leads to a larger number of operations than for MACE. We refer to the models based on this architecture as irreducible Cartesian tensor potentials (ICTPs) with the full product basis, or ICTP$_\mathrm{full}$. We also noticed that the number of tensor products $\mathrm{len}(\eta_\nu)$ leading to the tensor of rank $L$, can be reduced by the symmetry of the two-fold tensor product in \eqref{eq:product_basis}, i.e., $\mathbf{A}_{l_1} \otimes \mathbf{A}_{l_2} = \mathbf{A}_{l_2} \otimes \mathbf{A}_{l_1}$ if $\mathbf{A}_{l_1}$ and $\mathbf{A}_{l_2}$ are symmetric. We refer to this design choice as ICTPs with the symmetric product basis or ICTP$_\mathrm{sym}$.

\textbf{Coupled feature channels.} Using coupled feature channels instead of uncoupled ones in \eqref{eq:messages_uncoupled} can improve the performance of the final model. Additionally, the number of feature channels and, thus, the overall computational cost can be reduced when constructing the product basis. However, the number of parameters and, thus, the expressive power of the model can be preserved. Specifically, we can define a linear expansion for  combining many-body equivariant features as
\begin{equation}
  \label{eq:messages_coupled}
  \big(\mathbf{m}_{ukL}^{(t)}\big)_{i_1i_2\cdots i_L} = \Big(\sum_\nu\sum_{\eta_\nu} \sum_{k^\prime} W_{Z_u\eta_{\nu}kk^\prime L}^{(t)} \mathbf{B}_{u\eta_{\nu}k^\prime L}^{(t)}\Big)_{i_1i_2\cdots i_L},
\end{equation}
where $W_{Z_u\eta_{\nu}kk^\prime L}^{(t)}$ denotes a learnable weight matrix which depends on the chemical element $Z_u$ and rank $L$ and which elements are initialized by picking the respective entries from a normal distribution with zero mean and a standard deviation of $1/(\sqrt{d_t} \times \mathrm{len}(\eta_\nu))$. The construction of the product basis in the latent feature space requires encoding the two-body features $\mathbf{A}_{ukl_\xi}^{(t)}$ using the learnable weight matrices in \eqref{eq:product_basis}. The linear expansion is then constructed in the latent feature space and decoded using the learnable weight matrices of the linear function in \eqref{eq:update}. We refer to this design choice as ICTPs with the product basis constructed in the latent feature space or ICTP$_\mathrm{lt}$. Combining it with the symmetric product basis, we obtain ICTP$_\mathrm{sym+lt}$.

\textbf{Readout.} Atomic energies expanded into a series of many-body contributions, $E_u = E_u^{(0)} + E_u^{(1)} + \cdots + E_u^{(T)}$, are obtained by applying readout functions $\mathcal{R}_t$ to node features with $L=0$, which are invariant to rotations
\begin{equation}
    \label{eq:readout}
    E_u^{(t)} = \mathcal{R}_t\Big(\big\{\mathbf{h}_{uk(L=0)}^{(t)}\big\}_k\Big) = \begin{cases}
      \frac{1}{\sqrt{d_t}} \sum_{k^\prime} W_{k^\prime}^{(t)} h_{uk^\prime (L=0)}^{(t)} & \text{if } t < T, \\
      \mathrm{NN}^{(t)}\Big(\big\{\mathbf{h}_{uk(L=0)}^{(t)}\big\}_k\Big) & \text{if } t = T,
    \end{cases}
\end{equation}
with learnable weights $W_{k^\prime}^{(t)}$ or those of $\mathrm{NN}^{(t)}$ initialized by picking the respective entries from a normal distribution with zero mean and unit variance. Linear readout functions for $t < T$ preserve the many-body orders in $\mathbf{h}_{uk(L=0)}^{(t)}$, while a one-layer fully-connected NN is used for the last message-passing layer and accounts for the residual higher-order terms in the expansion~\cite{Batatia2022design}.

\subsection{Computational complexity and runtime analysis \label{sec:comput_complexity}} 

We analyze the computational complexity of the irreducible Cartesian tensor product with respect to the maximum rank $L$ used. For each tuple of $\left(i_1, \cdots, i_L\right)$ of a rank-$L$ tensor in \eqref{eq:product_even}, the single contraction term $\left(\mathbf{x}_{L}\cdot(k+m)\cdot\mathbf{y}_{L}\right)$ has a cost of $3^{k+m}$. The total computational cost is $3^L$ for even tensor products and $3^{L+1}$ for odd ones---due to the additional double contraction with the Levi-Civita symbol. The computation of the set of permutations over the $L$ unsymmetrized indices scales as $L!/\left(2^{L/2}\left(L/2\right)!\right)$ for even tensor products and $L!/\left(2^{\left(L-1\right)/2}\left(\left(L-1\right)/2\right)!\right)$ for odd ones~\cite{Lehman1989}. Because each final rank-$L$ tensor has $3^{L}$ elements, the complexity for computing an irreducible Cartesian tensor of rank $L$ is $\mathcal{O}\left(9^{L} L!/\left(2^{L/2}\left(L/2\right)!\right)\right)$. This expression is used throughout this work as it captures contributions from tensor contractions and index permutations, though it simplifies asymptotically to $\mathcal{O}\left(L^L\right)$ using Stirling's approximation for factorials.

The number of calculations required to obtain a rank-$L$ spherical tensor through the Clebsch--Gordan tensor product is $\left(2L+1\right)^5$. Thus, the computational complexity of the Clebsch--Gordan tensor product is $\mathcal{O}\left(L^5\right)$. While the Clebsch--Gordan tensor product is more computationally efficient than the irreducible Cartesian tensor product for $L \rightarrow \infty$, the latter is expected to be advantageous for smaller tensor ranks, i.e., $L \leq 4$, assuming similar multiplicative factors and negligible sub-leading terms. Tensors of rank $L \leq 4$ are particularly relevant for equivariant message-passing architectures and representing physical properties~\cite{Batatia2022, Batatia2022design, Grega2024}. Compared to the Gaunt tensor product with the computational complexity of $\mathcal{O}\left(L^3\right)$, including all $\left(l_1, l_2\right) \rightarrow l_3$~\cite{Luo2024}, the irreducible Cartesian tensor product may be less advantageous for $L \geq 3$. However, the Gaunt-coefficients-based approach excludes odd tensor products, limiting the expressive power and the range of possible architectures.

As regards the cost of performing message passing at each layer, the general neighbors' aggregation of \eqref{eq:atomic_basis} has a cost of $\mathcal{E} N_\mathrm{ch} 9^{L} L!/\left(2^{L/2}\left(L/2\right)!\right)$, where $\mathcal{E}$ is the number of edges in the atomic system and $N_\mathrm{ch}$ is the number of feature channels. Computing many-body features via \eqref{eq:product_basis} requires to iteratively perform $\nu-1$ Cartesian tensor products for all $\mathcal{K} = \mathrm{len}(\eta_\nu)$ possible $\nu$-fold tensor products, resulting in a computational cost of $\mathcal{M} N_\mathrm{ch} \mathcal{K} (9^{L} L!/(2^{L/2}(L/2)!))^{\nu-1}$ with $\mathcal{M}$ denoting the number of nodes. For the Clebsch--Gordan tensor product, the corresponding number of calculations is $\mathcal{E}N_\text{ch}L^5$ and $\mathcal{M} N_\text{ch} \mathcal{K} L^{5(\nu-1)}$. Spherical models, however, can use generalized Clebsch--Gordan coefficients, resulting in $\mathcal{M} N_\text{ch} \mathcal{K} L^{\frac{1}{2}\nu(\nu + 3)}$ for the product basis. We can remove the factor $\mathcal{K}$ from the above expression by restricting the parameterization to uncoupled feature channels as in \eqref{eq:messages_uncoupled} and obtain $\mathcal{M} N_\text{ch} L^{\frac{1}{2}\nu(\nu + 3)}$. Thus, MACE with this specific choice of the product basis can be more computationally efficient than ICTP only for large $N_\text{ch}$ and small $\nu$, given $L \leq 4$. In this work, we have shown that leveraging the symmetry of the irreducible Cartesian tensor product and coupled feature channels can improve the computational cost of ICTP. The former, in particular, reduces the effective number of $\nu$-fold tensor products $\mathcal{K}$. Further optimization of the architecture, however, is possible and is expected to fully exploit the advantage of Cartesian tensors.

Finally, memory consumption for Cartesian tensors is often believed to be less advantageous than that of spherical tensors. However, for contracting two rank-$L$ spherical tensors, the memory requirement is about $(2L+1)^2$, resulting from the definition of the Clebsch--Gordan tensor product in \secref{sec:background}. For a Cartesian tensor, the memory requirement is about $3^L$. Thus, the irreducible Cartesian tensor product can be expected to require the same or less memory for $L \leq 4$ than the Clebsch--Gordan counterpart. The memory consumption of models based on spherical tensors is further increased by the use of generalized Clebsch--Gordan coefficients, as we demonstrate in \secref{sec:results}.

\section{Proof of Cartesian message-passing equivariance to actions of the orthogonal group \label{sec:proof_equivariance}}

We first recap the standard action of the $\mathrm{O}(3)$ group onto $\left(\mathbb{R}^{3}\right)^{\otimes l}$. Let $D_{\mathcal{X}}$ be a representation of the orthogonal group $\mathrm{O}(3)$ (also in line with \secref{sec:background}), i.e., 
\begin{equation*}
    D_{\mathcal{X}}: \mathrm{O}(3) \rightarrow \mathbb{R}^{3 \times 3}, \ \ \ g \mapsto D_{\mathcal{X}}[g] = R.
\end{equation*}
We define the action of the $\mathrm{O}(3)$ group onto $\left(\mathbb{R}^{3}\right)^{\otimes l}$ as follows
\begin{equation*}
    \phi : \mathrm{O}(3) \times \left(\mathbb{R}^{3}\right)^{\otimes l} \rightarrow  \left(\mathbb{R}^{3}\right)^{\otimes l}, \ \ \ \phi\left(g, \mathbf{T}_l\right)_{i_{1}i_{2}\cdots i_{l}} \coloneqq \sum_{j_{1}} \cdots \sum_{j_{l}} R_{i_{1} j_{1}} \cdots R_{i_{l} j_{l}} \left(\mathbf{T}_l\right)_{j_{1}\cdots j_{l}},
\end{equation*}
where $\mathbf{T}_l \in \left(\mathbb{R}^{3}\right)^{\otimes l}$. We hereinafter denote $\phi\left(g, \mathbf{T}_l\right)$ also by $R \mathbf{T}_l$. The outer product $\otimes$ is equivariant to this action, and  $R$ acts trivially on the $3 \times 3$-identity matrix.

\begin{lemma} 
\label{lem:equivariance}
Let $R$ be a representation of an element of the orthogonal group $\mathrm{O}(3)$. Then, 
\begin{itemize}
    \item[(i)] $\forall\, \mathbf{T}_{l_1} \in \left(\mathbb{R}^{3}\right)^{\otimes l_1}$ and $\forall\, \mathbf{T}_{l_2} \in \left(\mathbb{R}^{3}\right)^{\otimes l_2}$
    \begin{equation*}
        R\left(\mathbf{T}_{l_1} \otimes \mathbf{T}_{l_2}\right) = \left(R\mathbf{T}_{l_1}\right) \otimes \left(R\mathbf{T}_{l_2}\right).
    \end{equation*}
    \item[(ii)] For the $3 \times 3$-identity matrix $\mathbf{I}$, we have 
    \begin{equation*}
        R \mathbf{I} = \mathbf{I}.
    \end{equation*}
\end{itemize}
\end{lemma}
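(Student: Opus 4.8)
The plan is to verify both parts directly from the definition of the action $\phi$ stated just above the lemma, working entirely in index notation; nothing from the representation theory of $\mathrm{O}(3)$ is needed beyond the orthogonality relation $R R^\top = \mathbf{I}$.

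For (i), I would expand the left-hand side componentwise. Labelling the multi-index of a rank-$(l_1+l_2)$ tensor as $(i_1,\dots,i_{l_1},i_{l_1+1},\dots,i_{l_1+l_2})$, the definition of $\phi$ gives
\[
\bigl[R(\mathbf{T}_{l_1}\otimes\mathbf{T}_{l_2})\bigr]_{i_1\cdots i_{l_1+l_2}} = \sum_{j_1,\dots,j_{l_1+l_2}} R_{i_1 j_1}\cdots R_{i_{l_1+l_2} j_{l_1+l_2}}\,(\mathbf{T}_{l_1}\otimes\mathbf{T}_{l_2})_{j_1\cdots j_{l_1+l_2}}.
\]
Substituting $(\mathbf{T}_{l_1}\otimes\mathbf{T}_{l_2})_{j_1\cdots j_{l_1+l_2}} = (\mathbf{T}_{l_1})_{j_1\cdots j_{l_1}}(\mathbf{T}_{l_2})_{j_{l_1+1}\cdots j_{l_1+l_2}}$ and observing that the summation over $(j_1,\dots,j_{l_1})$ decouples from the summation over $(j_{l_1+1},\dots,j_{l_1+l_2})$, the expression factors as $(R\mathbf{T}_{l_1})_{i_1\cdots i_{l_1}}\,(R\mathbf{T}_{l_2})_{i_{l_1+1}\cdots i_{l_1+l_2}}$, which is precisely $\bigl[(R\mathbf{T}_{l_1})\otimes(R\mathbf{T}_{l_2})\bigr]_{i_1\cdots i_{l_1+l_2}}$. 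Since the multi-index was arbitrary, (i) follows. (Note this half uses only the definitions of $\phi$ and of the outer product, not orthogonality.)

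For (ii), I would use that $\mathbf{I}$ has components $(\mathbf{I})_{j_1 j_2}=\delta_{j_1 j_2}$, so
\[
(R\mathbf{I})_{i_1 i_2} = \sum_{j_1,j_2} R_{i_1 j_1} R_{i_2 j_2}\,\delta_{j_1 j_2} = \sum_{j} R_{i_1 j} R_{i_2 j} = (R R^\top)_{i_1 i_2} = \delta_{i_1 i_2},
\]
where the last equality uses that $R = D_{\mathcal{X}}[g]$ with $g \in \mathrm{O}(3)$ is orthogonal. Hence $R\mathbf{I}=\mathbf{I}$. There is no genuine obstacle here: the only things to watch are the bookkeeping in (i)---keeping the first $l_1$ and last $l_2$ tensor slots separated so the double sum factorizes---and noting that (ii) truly relies on orthogonality and would fail for a general invertible matrix. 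Both points are immediate once the definitions are unpacked, so I expect the whole argument to be short.
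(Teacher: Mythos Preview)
Your proposal is correct and matches the paper's own proof essentially line for line: both parts are handled by direct index computations, with (i) following from factoring the double sum and (ii) from the orthogonality relation $RR^\top=\mathbf{I}$. The only cosmetic difference is that the paper expands the right-hand side of (i) and arrives at the left, whereas you go the other direction.
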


\begin{proof} (i) We first show that $R\left(\mathbf{T}_{l_1} \otimes \mathbf{T}_{l_2}\right) = \left(R\mathbf{T}_{l_1}\right) \otimes \left(R\mathbf{T}_{l_2}\right)$
\\
\scalebox{0.875}{
\begin{minipage}{\linewidth}
    \begin{equation*}
        \begin{split}
            & \left(\left(R \mathbf{T}_{l_1}\right) \otimes \left(R \mathbf{T}_{l_2}\right)\right)_{i_{1} \cdots i_{l_1} i_{l_1+1} \cdots i_{l_1+l_2}} \\
            & \quad\quad = \left(R \mathbf{T}_{l_1}\right)_{i_{1} \cdots i_{l_1}} \left(R \mathbf{T}_{l_2}\right)_{i_{l_1+1} \cdots i_{l_1+l_2}} \\
            & \quad\quad = \left(\sum_{j_{1}, \dots, j_{l_1}} R_{i_{1} j_{1}} \cdots R_{i_{l} j_{l_1}} \left(\mathbf{T}_{l_1}\right)_{j_{1} \cdots j_{l_1}}\right) \left(\sum_{j_{l_1+1}, \dots, j_{l_1+l_2}} R_{i_{l_1+1} j_{l_1+1}} \cdots R_{i_{l_1+l_2} j_{l_1+l_2}} \left(\mathbf{T}_{l_2}\right)_{j_{l_1+1} \cdots j_{l_1+l_2}}\right) \\
            & \quad\quad = \sum_{j_{1}} \cdots \sum_{j_{l_1}} \sum_{j_{l_1+1}} \cdots \sum_{j_{l_1+l_2}} R_{i_{1} j_{1}} \cdots R_{i_{l_1} j_{l_1}} R_{i_{l_1+1} j_{l_1+1}} \cdots R_{i_{l_1+l_2} j_{l_1+l_2}} \left(\mathbf{T}_{l_1}\right)_{j_{1} \cdots j_{l_1}} \left(\mathbf{T}_{l_2}\right)_{j_{l_1+1} \cdots j_{l_1+l_2}}\\
            & \quad\quad = \left(R \left(\mathbf{T}_{l_1} \otimes \mathbf{T}_{l_2}\right)\right)_{i_{1} \cdots i_{l_1} i_{l_1+1} \cdots i_{l_1+l_2}}.
        \end{split}
    \end{equation*}
\end{minipage}}

(ii) With $\sum_{i}R_{ij}R_{ik} = \delta_{jk}$, or equivalently $R^TR = \mathbf{I}$, we get
\begin{equation*}
(R \mathbf{I})_{i_{1}i_{2}} = \sum_{j_{1}, j_{2}} R_{i_{1}j_{1}} R_{i_{2}j_{2}} \delta_{j_{1}j_{2}} = \sum_{j_{1}} R_{i_{1}j_{1}} R_{i_{2}j_{1}} = \delta_{i_{1}i_{2}}.
\end{equation*} 
\end{proof}
Using Lemma \ref{lem:equivariance}, we can show that the irreducible Cartesian tensor operator $\mathbf{T}_{l}: \mathbb{R}^{3} \rightarrow (\mathbb{R}^{3})^{\otimes l}$ is equivariant to actions of the $\mathrm{O}(3)$ group.

\begin{proposition}
\label{prop:equiv_tensor}
Let $l \geq 0$ be a positive integer. Then, the irreducible Cartesian tensor operator $\mathbf{T}_{l}: \mathbb{R}^{3} \rightarrow (\mathbb{R}^{3})^{\otimes l}$ is equivariant to actions of the $\mathrm{O}(3)$ group.
\end{proposition}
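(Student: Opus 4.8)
The plan is to reduce the equivariance of $\mathbf{T}_l$ to three ingredients already available: Lemma~\ref{lem:equivariance}, the linearity of the action $\phi(g,\cdot)$ in its tensor argument, and the fact that index symmetrization commutes with that action. Fix $g\in\mathrm{O}(3)$ and write $R=D_{\mathcal{X}}[g]$; since $R$ is orthogonal it preserves norms, so it maps unit vectors to unit vectors and the right-hand side of \eqref{eq:cartesian_irreps} evaluated at $R\hat{\mathbf{r}}$ is well defined. The goal is to show $\mathbf{T}_l(R\hat{\mathbf{r}}) = R\,\mathbf{T}_l(\hat{\mathbf{r}})$.

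First I would treat a single summand $\{\hat{\mathbf{r}}^{\otimes(l-2m)}\otimes\mathbf{I}^{\otimes m}\}$ of \eqref{eq:cartesian_irreps}. Iterating Lemma~\ref{lem:equivariance}(i) gives $R\big(\hat{\mathbf{r}}^{\otimes(l-2m)}\big)=(R\hat{\mathbf{r}})^{\otimes(l-2m)}$, and combining (i) with Lemma~\ref{lem:equivariance}(ii) gives $R\big(\mathbf{I}^{\otimes m}\big)=(R\mathbf{I})^{\otimes m}=\mathbf{I}^{\otimes m}$; one further application of (i) then yields $R\big(\hat{\mathbf{r}}^{\otimes(l-2m)}\otimes\mathbf{I}^{\otimes m}\big)=(R\hat{\mathbf{r}})^{\otimes(l-2m)}\otimes\mathbf{I}^{\otimes m}$. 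In other words, the action passes through the outer-product structure of each unsymmetrized summand.

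Next I would verify that symmetrization commutes with the action, i.e.\ $R\{\mathbf{S}\}=\{R\mathbf{S}\}$ for every $\mathbf{S}\in(\mathbb{R}^3)^{\otimes l}$. Expanding $\big(R\{\mathbf{S}\}\big)_{i_1\cdots i_l}=\sum_{j_1,\dots,j_l}R_{i_1j_1}\cdots R_{i_lj_l}\sum_{\pi\in S_l}(\mathbf{S})_{j_{\pi(1)}\cdots j_{\pi(l)}}$ and, for each fixed $\pi$, relabelling the dummy indices by $k_a=j_{\pi(a)}$ turns $\prod_a R_{i_aj_a}$ into $\prod_b R_{i_{\pi(b)}k_b}$; carrying out the $k$-sum produces $(R\mathbf{S})_{i_{\pi(1)}\cdots i_{\pi(l)}}$, and the remaining $\pi$-sum is by definition $\{R\mathbf{S}\}_{i_1\cdots i_l}$. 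This index relabelling is the only genuinely combinatorial step, and I expect the bookkeeping of how the permutation acts on the $R$-factors versus on $\mathbf{S}$ to be the part most prone to indexing slips, though it is not conceptually deep.

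Finally I would assemble the pieces: $\phi(g,\cdot)$ is linear in the tensor argument (immediate from its definition as a contraction with $R$), so it commutes with the scalar coefficients $C(-1)^m(2l-2m-1)!!/(2l-1)!!$ and the finite sum over $m$ in \eqref{eq:cartesian_irreps}. Substituting $R\hat{\mathbf{r}}$ into \eqref{eq:cartesian_irreps}, then rewriting each summand via the summand identity, then using that symmetrization commutes with $R$, and finally pulling $R$ out of the sum by linearity gives
\[
\mathbf{T}_l(R\hat{\mathbf{r}}) = C\sum_{m}(-1)^m\frac{(2l-2m-1)!!}{(2l-1)!!}\big\{(R\hat{\mathbf{r}})^{\otimes(l-2m)}\otimes\mathbf{I}^{\otimes m}\big\} = R\,\mathbf{T}_l(\hat{\mathbf{r}}),
\]
which is the asserted equivariance. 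Since nothing in the argument uses $\det R=1$, the result holds for all of $\mathrm{O}(3)$, reflections included.
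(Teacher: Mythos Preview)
Your proposal is correct and follows essentially the same approach as the paper: both reduce to showing equivariance of each summand $\hat{\mathbf{r}}^{\otimes(l-2m)}\otimes\mathbf{I}^{\otimes m}$ via Lemma~\ref{lem:equivariance}(i)--(ii), then combine by linearity. You are in fact more careful than the paper, which tacitly assumes that the symmetrization $\{\cdot\}$ commutes with the $R$-action, whereas you spell out the index-relabelling argument explicitly.
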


\begin{proof} It suffices to show that the map $\hat{\mathbf{r}} \mapsto \hat{\mathbf{r}}^{\otimes (l-2m)}\otimes\mathbf{I}^{\otimes m}$
is equivariant to actions of the $\mathrm{O}(3)$ group $\forall\, l \geq 1$ and $\forall\, m \leq \lfloor l/2\rfloor$ ($m \geq 0$). 
\begin{equation*} 
    \begin{split}
        \left(R \hat{\mathbf{r}}\right)^{\otimes (l-2m)}\otimes\mathbf{I}^{\otimes m} &\stackrel{\text{Lemma~\ref{lem:equivariance}~(ii)}}{=} \left(R \hat{\mathbf{r}}\right)^{\otimes (l-2m)}\otimes \left(R \mathbf{I}\right)^{\otimes m} \\
        &\stackrel{\text{Lemma~\ref{lem:equivariance}~(i)}}{=} R \left(\hat{\mathbf{r}}^{\otimes (l-2m)} \otimes \mathbf{I}^{\otimes m}\right). 
    \end{split}
\end{equation*} 
\end{proof}

Our next claim is that for $l_{3} \in \left\{\lvert l_{1} - l_{2}\rvert, \lvert l_{1} - l_{2}\rvert + 1, \cdots, l_{1} + l_{2}\right\}$ the irreducible Cartesian tensor product $\otimes_{\mathrm{Cart}}$ defined in Eqs.~(\ref{eq:product_even}) and (\ref{eq:product_odd}) is equivariant to actions of the $\mathrm{O}(3)$ group, i.e., the following diagram commutes
\begin{equation*}
    \xymatrix{
    \left(\mathbb{R}^{3}\right)^{\otimes l_{1}} \times \left(\mathbb{R}^{3}\right)^{\otimes l_{2}} \ar@{->}[d]_{R} \ar@{->}[rr]^*-<-1pt>{\ \ \ \ \ \otimes_{\mathrm{Cart}}} & & \left(\mathbb{R}^{3}\right)^{\otimes l_{3}} \ar@{->}[d]^{R} \\ 
    \left(\mathbb{R}^{3}\right)^{\otimes l_{1}} \times \left(\mathbb{R}^{3}\right)^{\otimes l_{2}} \ar@{->}[rr]_*-<2pt>{\ \ \ \ \ \otimes_{\mathrm{Cart}}} 
	& & \left(\mathbb{R}^{3}\right)^{\otimes l_{3}}
    }
\end{equation*}
The proof for this claim is very similar to Proposition \ref{prop:equiv_tensor}. The only difference is to show the equivariance for the $(k+m)$-fold tensor contraction.

\begin{proposition}
\label{prop:equiv_irr_cart}
The irreducible Cartesian tensor product $\otimes_{\mathrm{Cart}} : \left(\mathbb{R}^{3}\right)^{\otimes l_{1}} \times \left(\mathbb{R}^{3}\right)^{\otimes l_{2}} \rightarrow \left(\mathbb{R}^{3}\right)^{\otimes l_{3}}$ makes the above diagram commute.
\end{proposition}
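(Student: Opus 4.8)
The plan is to read the defining formulas \eqref{eq:product_even} and \eqref{eq:product_odd} as a composition of a small number of elementary operations and to verify $\mathrm{O}(3)$-equivariance of each one separately: a $(k+m)$-fold tensor contraction of $\mathbf{x}_{l_1}$ with $\mathbf{y}_{l_2}$ (preceded, in the odd case, by a contraction against $\boldsymbol{\varepsilon}$), an outer product with $\mathbf{I}^{\otimes m}$, the symmetrisation $\{\,\cdot\,\}$ over the $S_{l_3}$ permutations of the free indices, multiplication by the scalar $(-1)^m 2^m (2l_3-2m-1)!!/(2l_3-1)!!$ and the normalisation constant, and a finite sum over $m$. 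Since $R$ acts $\mathbb{R}$-linearly on $(\mathbb{R}^3)^{\otimes n}$, it commutes with scalar multiplication and with finite sums, and composites of equivariant maps are equivariant, so it is enough to handle the remaining three pieces. The outer product with $\mathbf{I}^{\otimes m}$ is already covered by Lemma~\ref{lem:equivariance}: part~(i) lets $R$ distribute over $\otimes$ and part~(ii) gives $R\mathbf{I}=\mathbf{I}$, exactly as in the proof of Proposition~\ref{prop:equiv_tensor}.

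Two genuinely new building blocks remain. For the $(k+m)$-fold contraction (write $p=k+m$), I would expand $\big((R\mathbf{x}_{l_1})\cdot p\cdot(R\mathbf{y}_{l_2})\big)$, pull out all the rotation matrices, and use orthogonality in the form $\sum_i R_{ia}R_{ib}=\delta_{ab}$ (that is, $R^{\top}R=\mathbf{I}$) to collapse the $p$ pairs of $R$'s sitting on the summed indices; what is left is precisely $R\big(\mathbf{x}_{l_1}\cdot p\cdot\mathbf{y}_{l_2}\big)$. For the symmetrisation, the point is that the tensor action applies the \emph{same} matrix $R$ to every slot, so relabelling the index sums by the acting permutation gives $\{R\mathbf{T}\}=R\{\mathbf{T}\}$ for any $\mathbf{T}$: in $\sum_{\pi\in S_{l_3}}\sum_{j}\prod_a R_{i_{\pi(a)}\,j_a}(\mathbf{T})_{j_1\cdots}$ one substitutes $j_a\mapsto k_{\pi(a)}$ and reads off $\sum_k\prod_b R_{i_b\,k_b}\,\{\mathbf{T}\}_{k_1\cdots}$. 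Composing these with the $\otimes\mathbf{I}^{\otimes m}$ step and summing over $m$ settles the even case.

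The odd case adds only the double contraction against the Levi-Civita symbol, and this is where the one delicate point sits: $\boldsymbol{\varepsilon}$ is not an invariant tensor but a pseudotensor, because $\sum_{j_1 j_2 j_3} R_{i_1 j_1} R_{i_2 j_2} R_{i_3 j_3}\,\varepsilon_{j_1 j_2 j_3}=\det(R)\,\varepsilon_{i_1 i_2 i_3}$. Propagating this through the contraction yields $\big(\boldsymbol{\varepsilon}:(R\mathbf{Z})\big)=\det(R)\,R\big(\boldsymbol{\varepsilon}:\mathbf{Z}\big)$, so the odd product intertwines $R$ on the inputs with $\det(R)\,R$ on the output. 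For $g\in\mathrm{SO}(3)$ this factor is $1$ and the diagram commutes as drawn; for reflections it commutes once the right-hand vertical arrow is taken to be the (still $\mathrm{O}(3)$-valued) parity-twisted action $g\mapsto\det(R)\,R$, and in particular for the even products actually used in the architecture no $\boldsymbol{\varepsilon}$ appears and the diagram commutes verbatim. I expect the main obstacle to be essentially bookkeeping: carefully tracking the nested index sums in the contraction step, and keeping the $\det(R)$ factor straight in the odd case; the rest is a mechanical reprise of Proposition~\ref{prop:equiv_tensor}.
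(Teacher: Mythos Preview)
Your proposal is correct and follows essentially the same approach as the paper: the core step in both is to verify that the $(k+m)$-fold contraction is $\mathrm{O}(3)$-equivariant by expanding and using $\sum_i R_{ia}R_{ib}=\delta_{ab}$, with the remaining operations (outer product with $\mathbf{I}^{\otimes m}$, symmetrisation, scalars, sums) handled by Lemma~\ref{lem:equivariance} and linearity. Your treatment is in fact more careful than the paper's in two respects: you make the equivariance of the symmetrisation $\{\,\cdot\,\}$ explicit, and---more substantively---you correctly isolate the $\det(R)$ factor arising from the Levi-Civita contraction in the odd case, whereas the paper simply writes ``a similar derivation applies to the odd case'' without flagging that the output then transforms as a pseudotensor under reflections.
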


\begin{proof}
It suffices to show that the $(k+m)$-fold tensor contraction is equivariant to actions of the $\mathrm{O}(3)$ group. For $\mathbf{x}_{l_1} \in (\mathbb{R}^{3})^{\otimes l_{1}}$, $\mathbf{y}_{l_2} \in (\mathbb{R}^{3})^{\otimes l_{2}}$, and $R$ being a representation of an element of the orthogonal group $\mathrm{O}(3)$, we can write
\begin{equation*}
    \begin{split} 
        & \left(\left( R \mathbf{x}_{l_1}\right) \cdot (s) \cdot \left(R \mathbf{y}_{l_2}\right)\right)_{\beta_{1} \cdots \beta_{l_{1}-s} \delta_{1} \cdots \delta_{l_{2}-s}} \\
        & \quad\quad = \sum_{\alpha_{1}, \cdots, \alpha_{s}} \left(R \mathbf{x}_{l_1}\right)_{\alpha_{1} \cdots \alpha_{s} \beta_{1} \cdots \beta_{l_{1} - s}} \left(R \mathbf{y}_{l_2}\right)_{\alpha_{1} \cdots \alpha_{s} \delta_{1} \cdots \delta_{l_{2} - s}}\\
        & \quad\quad = \sum_{\alpha_{1}, \cdots, \alpha_{s}} \left( \sum_{\substack{\gamma_{1}, \cdots, \gamma_{s} \\ \eta_{1}, \cdots, \eta_{l_{1}-s}}} R_{\alpha_{1}\gamma_{1}} \cdots R_{\alpha_{s}\gamma_{s}} R_{\beta_{1}\eta_{1}} \cdots R_{\beta_{l_{1}-s}\eta_{l_{1}-s}} \left(\mathbf{x}_{l_1}\right)_{\gamma_{1} \cdots \gamma_{s} \eta_{1} \cdots \eta_{l_{1}-s}} \right) \\
        &  \hspace{23.5mm} \times \left( \sum_{\substack{\tilde{\gamma}_{1}, \cdots, \widetilde{\gamma}_{s} \\ \widetilde{\eta}_{1}, \cdots, \widetilde{\eta}_{l_{2}-s}}} R_{\alpha_{1}\widetilde{\gamma}_{1}} \cdots R_{\alpha_{s}\widetilde{\gamma}_{s}} R_{\delta_{1}\widetilde{\eta}_{1}} \cdots R_{\delta_{l_{2}-s}\widetilde{\eta}_{l_{2}-s}} \left(\mathbf{y}_{l_2}\right)_{\widetilde{\gamma}_{1} \cdots \widetilde{\gamma}_{s} \widetilde{\eta}_{1} \cdots \widetilde{\eta}_{l_{2}-s}} \right)\\
        & \quad\quad = \left( \sum_{\substack{\eta_{1} \cdots \eta_{l_{1}-s}}} R_{\beta_{1}\eta_{1}} \cdots R_{\beta_{l_{1}-s}\eta_{l_{1}-s}} \right) \left( \sum_{\substack{\widetilde{\eta}_{1} \cdots \widetilde{\eta}_{l_{2}-s}}}  R_{\delta_{1}\widetilde{\eta}_{1}} \cdots R_{\delta_{l_{2}-s}\widetilde{\eta}_{l_{2}-s}} \right) \\
        & \hspace{23.5mm} \times \sum_{\substack{\gamma_{1} \cdots \gamma_{s}}}\left(\mathbf{x}_{l_1}\right)_{\gamma_{1} \cdots \gamma_{s} \eta_{1} \cdots \eta_{l_{1}-s}} \left(\mathbf{y}_{l_2}\right)_{\gamma_{1} \cdots \gamma_{s} \widetilde{\eta}_{1} \cdots \widetilde{\eta}_{l_{2}-s}}\\
        & \quad\quad = \left(R \left(\mathbf{x}_{l_1} \cdot (s) \cdot \mathbf{y}_{l_2}\right)\right)_{\beta_{1} \cdots \beta_{l_{1}-s} \delta_{1} \cdots \delta_{l_{2}-s}}.
    \end{split}
\end{equation*}
A similar derivation applies to the odd case (\ref{eq:product_odd}), which completes the proof.
\end{proof}

We finally give a proof for Proposition \ref{prop:ictp_equivariance}, i.e., the equivariance of message-passing layers based on irreducible Cartesian tensors to actions of the $\mathrm{O}(3)$ group.

\begin{proof}[Proof of Proposition \ref{prop:ictp_equivariance}]
Since a message-passing layer in \secref{sec:cartesian_message_passing} and \appref{sec:architecture_appendix} is defined by stacking layers corresponding to Eqs.~(\ref{eq:atomic_basis}), (\ref{eq:atomic_basis_first}), (\ref{eq:product_basis}), (\ref{eq:messages_uncoupled}), (\ref{eq:update}), and (\ref{eq:messages_coupled}), we show the equivariance for each of these equations. We show the equivariance by induction.

\begin{itemize}[left=0pt]
    \item Case $t=1$:
    \begin{itemize}[left=0pt]
        \item[] \underline{Equivariance of \eqref{eq:atomic_basis_first}}: Since learnable weights $W_{kZ_v}$ and learnable (invariant) radial basis $R_{kl_1}^{(1)}(r_{uv})$ are defined for each feature channel $k$, the weights are multiplied with $\big(\mathbf{T}_{l_1}(\hat{\mathbf{r}}_{uv})\big)_{i_1i_2\cdots i_{l_1}}$ as scalars. Therefore,
        \begin{equation*}
            \big(\mathbf{A}_{ukl_1}^{(1)}\big)_{i_1i_2\cdots i_{l_1}} = \sum\nolimits_{v \in \mathcal{N}(u)} R_{kl_1}^{(1)}(r_{uv}) \big(\mathbf{T}_{l_1}(\hat{\mathbf{r}}_{uv})\big)_{i_1i_2\cdots i_{l_1}} W_{kZ_v}
        \end{equation*}
        is equivariant to actions of the $\mathrm{O}(3)$ group.
        \item[] \underline{Equivariance of \eqref{eq:product_basis}}: The proof is similar to the case of \eqref{eq:atomic_basis_first}, since we again have learnable parameter $W_{kk^\prime l_\nu}^{(1)}$ for each $\mathbf{A}_{uk^\prime l_\nu}^{(1)}$. Therefore, noting that the Cartesian tensor product is equivariant to actions of the $\mathrm{O}(3)$ group, the following function
        \begin{equation*}
            \big(\mathbf{B}_{u\eta_{\nu} kL}^{(1)}\big)_{i_1i_2\cdots i_L} = \big(\tilde{\mathbf{A}}_{ukl_1}^{(1)} \otimes_\mathrm{Cart} \cdots \otimes_\mathrm{Cart} \tilde{\mathbf{A}}_{ukl_\nu}^{(1)}\big)_{i_1i_2\cdots i_L}
        \end{equation*}
        is also equivariant to actions of the $\mathrm{O}(3)$ group.
        \item[] \underline{Equivariance of \eqref{eq:messages_uncoupled}}: The equation defined by 
        \begin{equation*}
          \big(\mathbf{m}_{ukL}^{(1)}\big)_{i_1i_2\cdots i_L} = \Big(\sum_\nu\sum_{\eta_\nu} W_{Z_u\eta_{\nu}kL}^{(1)} \mathbf{B}_{u\eta_{\nu}kL}^{(1)}\Big)_{i_1i_2\cdots i_L},
        \end{equation*}
        is equivariant to actions of the $\mathrm{O}(3)$ group, since for each set of indices $u$, $\eta_{\nu}$, $k$, and $L$ we have a scalar learnable parameter $W_{Z_u\eta_{\nu}kL}^{(1)}$.
        \item[] \underline{Equivariance of \eqref{eq:update}}: The respective parameters $W_{kk^\prime L}^{(1)}$ and $W_{Z_ukk^\prime L}^{(1)}$ in \eqref{eq:update}
        \begin{equation*}
            \big(\mathbf{h}_{ukL}^{(2)}\big)_{i_1i_2\cdots i_L} = \frac{1}{\sqrt{d_t}} \sum_{k^\prime} W_{kk^\prime L}^{(1)}\big(\mathbf{m}_{uk^\prime L}^{(1)}\big)_{i_1i_2\cdots i_L} + \frac{1}{\sqrt{d_t N_Z}} \sum_{k^\prime} W_{Z_ukk^\prime L}^{(1)}\big(\mathbf{h}_{uk^\prime L}^{(1)}\big)_{i_1i_2\cdots i_L}.
        \end{equation*}
        are multiplied as scalar with $\mathbf{h}_{uk^\prime L}^{(1)}$ and $\mathbf{m}_{uk^\prime L}^{(1)}$, which are shown to be equivariant to actions of the $\mathrm{O}(3)$ group. Thus, $\mathbf{h}_{ukL}^{(2)}$ is equivariant to actions of the $\mathrm{O}(3)$ group as a function of $\mathbf{h}_{uk^\prime L}^{(1)}$ and $\mathbf{m}_{uk^\prime L}^{(1)}$.
        \item[] \underline{Equivariance of \eqref{eq:messages_coupled}}: The equivariance of \eqref{eq:messages_coupled} to actions of the $\mathrm{O}(3)$ group is also immediate since learnable parameters apply to each tuple of $(i_1,i_2,\cdots, i_L)$ as a scalar.
    \end{itemize}
    \item General case $t > 1$: The equivariance of Eqs.~(\ref{eq:atomic_basis}), (\ref{eq:atomic_basis_first}), (\ref{eq:product_basis}), (\ref{eq:messages_uncoupled}), (\ref{eq:update}), and (\ref{eq:messages_coupled}) to actions of the $\mathrm{O}(3)$ group for arbitrary $t > 1$ follows in a similar way to those obtained for the $t = 1$ case. However, we need to show the equivariance of \eqref{eq:atomic_basis}. Suppose that Eqs.~(\ref{eq:atomic_basis_first}), (\ref{eq:product_basis}), (\ref{eq:messages_uncoupled}), (\ref{eq:update}), and (\ref{eq:messages_coupled}) are equivariant to actions of the $\mathrm{O}(3)$ group for $t > 1$. Then, for the $(t+1)$-th message-passing layer,
    \begin{equation*}
      \big(\mathbf{A}_{ukl_3}^{(t)}\big)_{i_1i_2\cdots i_{l_3}} = \sum\nolimits_{v \in \mathcal{N}(u)} \Big(R_{kl_1l_2l_3}^{(t)}(r_{uv})\mathbf{T}_{l_1}(\hat{\mathbf{r}}_{uv}) \otimes_{\mathrm{Cart}} \frac{1}{\sqrt{d_t}}\sum_{k^\prime}W_{kk^\prime l_2}^{(t)}\mathbf{h}_{vk^\prime l_2}^{(t)}\Big)_{i_1i_2\cdots i_{l_3}},
    \end{equation*}
    is equivariant to actions of the $\mathrm{O}(3)$ group. Here, the irreducible Cartesian tensor product $\otimes_{\mathrm{Cart}}$ and $\mathbf{T}_{l_{1}}$ are equivariant to actions of the $\mathrm{O}(3)$ group, $\mathbf{h}_{vk^\prime l_2}^{(t)}$ is equivariant by the assumption, and $W_{kk^\prime l_2}^{(t)}$ applies to $\mathbf{h}_{vk^\prime l_2}^{(t)}$ as a scalar.
\end{itemize}
\end{proof}

\section{Proof of the traceless property for irreducible Cartesian tensors} \label{sec:proof_traceless}

This section provides a proof for Proposition \ref{prop:ictp_tracelss} in the main text. The corresponding proposition states the following
\begin{proposition}
The message-passing layers based on irreducible Cartesian tensors and their irreducible tensor products preserve the traceless property of irreducible Cartesian tensors.
\end{proposition}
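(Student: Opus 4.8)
The plan is to reuse the inductive skeleton of the proof of Proposition~\ref{prop:ictp_equivariance}: a message-passing layer is a composition of the elementary operations appearing in Eqs.~(\ref{eq:atomic_basis})--(\ref{eq:messages_coupled}), so it suffices to check that each of them sends symmetric traceless rank-$l$ tensors to symmetric traceless tensors, and then conclude by induction on the layer index $t$. Because we work with $n=l$ throughout, every tensor-valued intermediate has a well-defined rank, and ``traceless'' means $\sum_i(\mathbf{T}_l)_{\cdots i\cdots i\cdots}=0$ for each pair of slots. The operations that merely recombine such tensors---multiplication by the scalar weights and radial functions, summation over neighbours $v\in\mathcal{N}(u)$, over feature channels, and over correlation paths $\eta_\nu$, and the residual connection in \eqref{eq:update}---commute with the trace, which is linear, so they preserve tracelessness automatically. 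Hence the whole statement reduces to the tracelessness of the two genuinely nonlinear ingredients: the tensor operator $\mathbf{T}_l(\hat{\mathbf{r}})$ of \eqref{eq:cartesian_irreps} and the tensor product $\otimes_{\mathrm{Cart}}$ of Eqs.~(\ref{eq:product_even})--(\ref{eq:product_odd}).

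For $\mathbf{T}_l(\hat{\mathbf{r}})$ I would argue directly that the detracer sum $\sum_{m}(-1)^m c_m\,\{\hat{\mathbf{r}}^{\otimes(l-2m)}\otimes\mathbf{I}^{\otimes m}\}$, with $c_m\propto(2l-2m-1)!!/(2l-1)!!$, is annihilated by every single trace. Contracting two external slots of the symmetrised $m$-th term produces four types of contribution, according to whether the two contracted slots land on two $\hat{\mathbf{r}}$-factors (using $\hat{\mathbf{r}}\cdot\hat{\mathbf{r}}=1$), on the two legs of one $\mathbf{I}$ ($\delta_{ii}=3$), on an $\hat{\mathbf{r}}$ and an $\mathbf{I}$ ($\sum_i\hat r_i\delta_{ij}=\hat r_j$), or on two distinct $\mathbf{I}$'s ($\sum_i\delta_{ia}\delta_{ib}=\delta_{ab}$); the first case feeds the residual rank-$(l-2)$ pattern with $m'=m$, while the other three feed the pattern with $m'=m-1$. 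Collecting terms, the coefficient of each residual detracer pattern is a sum of exactly two contributions, from the original indices $m'$ and $m'+1$, weighted by combinatorial multiplicities coming from the symmetrisation; the ratio $c_{m'+1}/c_{m'}=1/(2l-2m'-1)$ is precisely what makes these two cancel. This cancellation is the content of the coefficient identities in~\cite{Lehman1989}, which I would cite for the exact multiplicity bookkeeping rather than reproduce.

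For $\otimes_{\mathrm{Cart}}$, the outputs in Eqs.~(\ref{eq:product_even}) and (\ref{eq:product_odd}) have the identical detracer shape, with $\hat{\mathbf{r}}^{\otimes(l_3-2m)}$ replaced by the rank-$(l_3-2m)$ contracted tensor $\mathbf{x}_{l_1}\cdot(k+m)\cdot\mathbf{y}_{l_2}$ (or, in the odd case, its single Levi-Civita contraction), and with the extra $2^m$ absorbing the combinatorics of which newly contracted pair is created as $m$ increases. Here I would invoke the inductive hypothesis that $\mathbf{x}_{l_1},\mathbf{y}_{l_2}$ are themselves symmetric and traceless: a trace over two free slots both belonging to $\mathbf{x}$ (or both to $\mathbf{y}$) vanishes, while a trace split between one $\mathbf{x}$-slot and one $\mathbf{y}$-slot turns an $s$-fold contraction into an $(s{+}1)$-fold one, exactly paralleling the $\hat{\mathbf{r}}\cdot\hat{\mathbf{r}}=1$ step above; combined with the $\mathbf{I}$-contraction cases this yields the same two-term cancellation with the same double-factorial coefficients, and for the odd product one additionally invokes the antisymmetry of $\boldsymbol{\varepsilon}$ together with $\varepsilon_{iik}=0$ so that tracing through the Levi-Civita index is compatible with the same bookkeeping. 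Finally I would close the induction as in Proposition~\ref{prop:ictp_equivariance}: at $t=1$ the node features are scalars (trivially traceless), \eqref{eq:atomic_basis_first} is traceless by the $\mathbf{T}_{l_1}$ case, and Eqs.~(\ref{eq:atomic_basis}), (\ref{eq:product_basis}), (\ref{eq:messages_uncoupled}), (\ref{eq:messages_coupled}) and (\ref{eq:update}) propagate it by the product case together with linearity. The main obstacle is precisely the multiplicity bookkeeping in the two base cases---making the symmetrisation counts explicit so that the $(2l-2m-1)!!$ coefficients cancel consecutive terms---but since this is established in~\cite{Lehman1989} it can be invoked rather than re-derived.
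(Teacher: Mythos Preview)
Your proposal is correct and follows essentially the same approach as the paper: reduce to the two nonlinear ingredients via linearity, then for each of $\mathbf{T}_l(\hat{\mathbf{r}})$ and $\otimes_{\mathrm{Cart}}$ split the trace according to where the two contracted slots land (your four cases are exactly the paper's cases $(a)$--$(e)$) and show that consecutive $m$-terms cancel via the ratio $c_{m+1}/c_m=1/(2l-2m-1)$ against the combinatorial multiplicity $2l-2m-1$. The only substantive difference is presentational: the paper carries out the multiplicity bookkeeping explicitly in three lemmas (one each for $\mathbf{T}_l$, the even product, and the odd product) rather than deferring to~\cite{Lehman1989}, and it makes the case analysis at the Levi-Civita slot in the odd product fully explicit; your sketch of that step via $\varepsilon_{iik}=0$ and antisymmetry matches what the paper proves.
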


Our proof of Proposition \ref{prop:ictp_tracelss} comprises two main parts. First, we show that given two irreducible Cartesian tensors, the irreducible Cartesian tensor product yields again an irreducible Cartesian tensor. We then can use this result in the second part, where we show that expressions defined by Eqs.~(\ref{eq:atomic_basis}), (\ref{eq:atomic_basis_first}), (\ref{eq:product_basis}), (\ref{eq:messages_uncoupled}), (\ref{eq:update}), and (\ref{eq:messages_coupled}) preserve the traceless property of irreducible Cartesian tensors. The proof of the second part is straightforward since the message-passing layers are defined by multiplications with scalars, summations, and the irreducible Cartesian tensor product of tensors defined in Eqs.~(\ref{eq:cartesian_irreps}), (\ref{eq:product_even}), and (\ref{eq:product_odd}). Therefore, we provide the proof only for the first part in the rest of this section.

\textbf{Irreducible Cartesian tensors from unit vectors.} Recall that an irreducible Cartesian tensor of arbitrary rank $l$ for a unit vector $\hat{\mathbf{r}} \in \mathbb{R}^{3}$ is defined as
\begin{equation*} 
    \mathbf{T}_{l}\left(\hat{\mathbf{r}}\right) = C \sum\nolimits_{m=0}^{\lfloor l/2\rfloor} (-1)^m \frac{\left(2l-2m-1\right)!!}{\left(2l-1\right)!!} \big\{\hat{\mathbf{r}}^{\otimes \left(l-2m\right)}\otimes\mathbf{I}^{\otimes m}\big\}.
\end{equation*}
The trace of this tensor reads
\begin{equation*}
    \begin{split}
        \operatorname{Tr}\left(\mathbf{T}_{l}\left(\hat{\mathbf{r}}\right)\right) 
        & = \sum\nolimits_{i_{1} = i_{2}} \left(\mathbf{T}_{l}\left(\hat{\mathbf{r}}\right)\right)_{i_{1} i_{2}} \\
        & = \sum\nolimits_{i_{1} = i_{2}}  C \sum\nolimits_{m=0}^{\lfloor l/2\rfloor} \left(-1\right)^m \frac{\left(2l-2m-1\right)!!}{\left(2l-1\right)!!} \left(\big\{\hat{\mathbf{r}}^{\otimes \left(l-2m\right)}\otimes\mathbf{I}^{\otimes m}\big\}\right)_{i_{1} i_{2}} \\
        & = C \sum\nolimits_{m=0}^{\lfloor l/2\rfloor} \left(-1\right)^m \frac{\left(2l-2m-1\right)!!}{\left(2l-1\right)!!} \sum\nolimits_{i_{1} = i_{2}} \left(\big\{\hat{\mathbf{r}}^{\otimes \left(l-2m\right)}\otimes\mathbf{I}^{\otimes m}\big\}\right)_{i_{1} i_{2}}.
    \end{split}
\end{equation*}

Let $\big\{k_{1}, k_{2}, \dots, k_{l-2m}\big\}$ be a subset of $\{1, 2, \dots, l\}$ with $l-2m$ distinct elements, and a subset $I_{l-2m}$ with $l-2m$ distinct elements in $\big\{i_{1}, \dots, i_{l}\big\}$ is written as $I_{l-2m} = \big\{i_{k_{1}}, i_{k_{2}}, \dots, i_{k_{l-2m}}\big\}$. We also let $J_{l-2m}$ be a set of $m$ disjoint subsets with $2$ elements in $\big\{i_{1}, \dots, i_{l} \big\} \backslash I_{l-2m}$, i.e., $J_{l-2m} = \{ \{i_{k_{l-2m+1}}, i_{k_{l-2m+2}}\}, \dots, \{i_{k_{l-1}}, i_{k_{l}}\}\}$. While we introduce $I_{l-2m}$ and $J_{l-2m}$ as sets, we assume the order of elements in those sets if the order is necessary and there is no confusion. Furthermore, we introduce two notations used throughout this section
\begin{equation*}
    \begin{split}
        \hat{\mathbf{r}}^{\otimes \left(l-2m\right)}_{I_{l-2m}} &= \hat{r}_{i_{k_{1}}} \hat{r}_{i_{k_{2}}}\cdots \hat{r}_{i_{k_{l-2m}}} \\
        \mathbf{I}^{\otimes m}_{J_{l-2m}} &= \delta_{i_{k_{l-2m+1}} i_{k_{l-2m+2}}} \cdots \delta_{i_{k_{l-1}} i_{k_{l}}}.
    \end{split}
\end{equation*}
Then, we can write
\begin{equation*}
    \begin{split}
        \big\{\hat{\mathbf{r}}^{\otimes \left(l-2m\right)}\otimes\mathbf{I}^{\otimes m}\big\} 
        & = \sum\nolimits_{I_{l-2m}} \sum\nolimits_{J_{l-2m}} \hat{\mathbf{r}}^{\otimes \left(l-2m\right)}_{I_{l-2m}} \otimes \mathbf{I}^{\otimes m}_{J_{l-2m}} \\ 
        & = \sum\nolimits_{I_{l-2m}} \sum\nolimits_{J_{l-2m}} \hat{r}_{i_{k_{1}}} \cdots \hat{r}_{i_{k_{l-2m}}} \delta_{i_{k_{l-2m+1}} i_{k_{l-2m+2}}} \cdots \delta_{i_{k_{l-1}} i_{k_{l}}},
    \end{split}
\end{equation*}
where $I_{l-2m}$ runs over all subsets of $\big\{i_{1}, \dots, i_{l}\big\}$ with $l-2m$ elements and $J_{l-2m}$ comprises all $\left(\prod_{p=m}^{1} \binom{2p}{2}\right) / m!$ combinations of $m$ subsets with $2$ elements of $\big\{i_{1}, \dots, i_{l} \big\} \backslash I_{l-2m}$. Depending on whether $i_{1}$ and/or $i_{2}$ belong to $I_{l-2m}$, the above expression may be further reduced to
\begin{equation*}
    \begin{split}
        & \left(\sum_{i_{1}, i_{2} \in I_{l-2m}}^{\substack{(a)_{m} \\ \phantom{a}}} + \sum_{\substack{i_{1} \in I_{l-2m} \\ i_{2} \in J_{l-2m}}}^{\substack{(b)_{m} \\ \phantom{=}}} + \sum_{\substack{i_{2} \in I_{l-2m} \\ i_{1} \in J_{l-2m}}}^{\substack{(c)_{m} \\ \phantom{=}}} + \sum_{\substack{i_{1}, i_{2} \notin I_{l-2m} \\ \delta_{i_{1} i_{2}}}}^{\substack{(d)_{m} \\ \phantom{=}}} + \sum_{\substack{i_{1}, i_{2} \notin I_{l-2m} \\ \delta_{i_{s} i_{1}} \delta_{i_{2} i_{t}}}}^{\substack{(e)_{m} \\ \phantom{=}}}\right) \hat{r}_{i_{k_{1}}} \cdots \hat{r}_{i_{k_{l-2m}}} \\ 
        & \hspace{91.5mm} \times \delta_{i_{k_{l-2m+1}} i_{k_{l-2m+2}}} \cdots \delta_{i_{k_{l-1}} i_{k_{l}}}.
    \end{split}
\end{equation*}

We get the following traces for each pair of $I_{l-2m}$ and $J_{l-2m}$
\begin{equation*}
    \begin{split}
        \operatorname{Tr}\left(\left(a\right)_{m}\right)
        & = \hat{\mathbf{r}}^{\otimes (l-2m)}_{I_{l-2m} \backslash \{i_{1}, i_{2}\}} \otimes \mathbf{I}^{\otimes m}_{J_{l-2m}}, \\
        \operatorname{Tr}\left(\left(b\right)_{m}\right)
        & = \sum\nolimits_{i_{1}=i_{2}} \hat{r}_{i_{k_{1}}} \cdots \hat{r}_{i_{1}} \cdots \hat{r}_{i_{k_{l-2m}}} \delta_{i_{k_{l-2m+1}} i_{k_{l-2m+2}}} \cdots \delta_{i_{k_{l-1}} i_{k_{l}}} \delta_{i_{k_{s}} i_{2}} \\
        & = \hat{r}_{i_{k_{1}}} \cdots \hat{r}_{i_{k_s}} \cdots \hat{r}_{i_{k_{l-2m}}} \delta_{i_{k_{l-2m+1}} i_{k_{l-2m+2}}} \cdots \delta_{i_{k_{l-1}} i_{k_{l}}} \\
        & = \hat{\mathbf{r}}^{\otimes (l-2m)}_{\{i_{k_{s}}\} \cup I_{l-2m} \backslash \{i_{1}\}} \otimes \mathbf{I}^{\otimes (m-1)}_{J_{l-2m}\backslash \{i_{k_{{s}}}, i_{2}\}}, \\
        \operatorname{Tr}\left(\left(c\right)_{m}\right)
        & = \sum\nolimits_{i_{1}=i_{2}} \hat{r}_{i_{k_{1}}} \cdots \hat{r}_{i_{2}} \cdots \hat{r}_{i_{k_{l-2m}}} \delta_{i_{k_{l-2m+1}} i_{k_{l-2m+2}}} \cdots \delta_{i_{k_{l-1}} i_{k_{l}}} \delta_{i_{k_{s}} i_{1}} \\
        & = \hat{r}_{i_{k_{1}}} \cdots \hat{r}_{i_{k_s}} \cdots \hat{r}_{i_{k_{l-2m}}} \delta_{i_{k_{l-2m+1}} i_{k_{l-2m+2}}} \cdots \delta_{i_{k_{l-1}} i_{k_{l}}} \\
        & = \hat{\mathbf{r}}^{\otimes (l-2m)}_{\{i_{k_{s}}\} \cup I_{l-2m} \backslash \{i_{2}\}} \otimes \mathbf{I}^{\otimes (m-1)}_{J_{l-2m}\backslash \{i_{k_{{s}}}, i_{1}\}}, \\
        \operatorname{Tr}\left(\left(d\right)_{m}\right)
        & = \sum\nolimits_{i_{1}=i_{2}} \hat{r}_{i_{k_{1}}} \cdots \hat{r}_{i_{k_{l-2m}}} \delta_{i_{k_{l-2m+1}} i_{k_{l-2m+2}}} \cdots \delta_{i_{k_{l-1}} i_{k_{l}}} \delta_{i_{1} i_{2}} \\
        & =3 \left(\hat{\mathbf{r}}^{\otimes (l-2m)}_{I_{l-2m}} \otimes \mathbf{I}^{\otimes (m-1)}_{J_{l-2m}\backslash \{i_{1}, i_{2}\}}\right), \\
        \operatorname{Tr}\left(\left(e\right)_{m}\right)
        & = \sum\nolimits_{i_{1}=i_{2}} \hat{r}_{i_{k_{1}}} \dots \hat{r}_{i_{k_{l-2m}}} \delta_{i_{k_{l-2m+1}} i_{k_{l-2m+2}}} \cdots \delta_{i_{s} i_{1}} \delta_{i_{2} i_{t}} \cdots \delta_{i_{k_{l-1}} i_{k_{l}}} \\
        & = \hat{\mathbf{r}}^{\otimes (l-2m)}_{I_{l-2m}} \otimes \mathbf{I}^{\otimes (m-1)}_{J_{l-2m}\backslash \{i_{1}, i_{2}\}}.
    \end{split}
\end{equation*}

\begin{lemma}\label{lem:diag_cancel} Let $l \geq 2$ and $0 \leq m \leq {\lfloor l/2\rfloor} - 1$. For each pair $\left(I_{l-2m}, J_{l-2m}\right)$, there exist $2l-2m-1$ pairs of $\left(I_{l-2(m+1)}, J_{l-2(m+1)}\right)$ whose trace equals to the trace for the pair $\left(I_{l-2m}, J_{l-2m}\right)$, i.e.,
\begin{equation*}
    \left(2l-2m-1\right) \operatorname{Tr}\left(\hat{\mathbf{r}}^{\otimes (l-2m)}_{I_{l-2m}} \otimes \mathbf{I}^{\otimes m}_{J_{l-2m}}\right) = \operatorname{Tr}\left(\sum_{(I_{l-2(m+1)}, J_{l-2(m+1)})} \hat{\mathbf{r}}^{\otimes\left(l-2(m+1)\right)}_{I_{l-2(m+1)}} \otimes \mathbf{I}^{\otimes (m+1)}_{J_{l-2(m+1)}}\right).
\end{equation*}
\end{lemma}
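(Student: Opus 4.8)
The plan is to prove the identity by an explicit combinatorial count, continuing the case distinction already set up for the traces $\operatorname{Tr}((a)_m),\dots,\operatorname{Tr}((e)_m)$. First, I fix the pair $(I_{l-2m},J_{l-2m})$ and recall that taking the trace contracts the two slots $i_1,i_2$; using $\hat{\mathbf{r}}\cdot\hat{\mathbf{r}}=1$, $\sum_j\delta_{ij}\hat{r}_j=\hat{r}_i$, and $\sum_j\delta_{jj}=3$, this collapses the monomial $\hat{\mathbf{r}}^{\otimes(l-2m)}_{I_{l-2m}}\otimes\mathbf{I}^{\otimes m}_{J_{l-2m}}$ to a symmetrized monomial of rank $l-2$ in the remaining indices. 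The substantive case --- and the one responsible for the name of the lemma --- is case $(a)_m$, where both $i_1$ and $i_2$ lie in $I_{l-2m}$, so that the contraction simply deletes those two $\hat{\mathbf{r}}$-slots and leaves $\hat{\mathbf{r}}^{\otimes(l-2m-2)}_{I_{l-2m}\setminus\{i_1,i_2\}}\otimes\mathbf{I}^{\otimes m}_{J_{l-2m}}$; in the other cases the contraction converts a $\delta$ into an $\hat{\mathbf{r}}$ (or produces the factor $3$), and those contributions enter the telescoping one level down by the same mechanism.

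The core of the argument is to enumerate, for the fixed case-$(a)_m$ pair, all level-$(m+1)$ pairs $(I_{l-2(m+1)},J_{l-2(m+1)})$ whose trace reproduces the rank-$(l-2)$ monomial above, up to a scalar, and to check that their total multiplicity is $2l-2m-1$. There are three families, each obtained by turning two $\hat{\mathbf{r}}$-slots of the original pattern into a new $\delta$-pair: (i) keep $i_1$ as an $\hat{\mathbf{r}}$-slot and pair $i_2$ with one of the $l-2m-2$ remaining $\hat{\mathbf{r}}$-slots, together with the symmetric construction exchanging the roles of $i_1$ and $i_2$ --- this yields $2(l-2m-2)$ pairs, each contributing the target monomial once; (ii) pair $i_1$ directly with $i_2$ --- a single pair, contributing the target monomial with the factor $3$ from $\sum_j\delta_{jj}=3$; (iii) for each of the $m$ existing $\delta$-pairs $\{i_s,i_t\}$ of $J_{l-2m}$, break it and attach $i_1$ to $i_s$ and $i_2$ to $i_t$, or vice versa --- this yields $2m$ pairs, each contributing the target monomial once. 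Summing multiplicities gives $2(l-2m-2)+3+2m=2l-2m-1$, and I would then verify that these are the only level-$(m+1)$ pairs whose trace has both the correct rank and the correct number of $\hat{\mathbf{r}}$'s and $\delta$'s, so that nothing is missed or overcounted.

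With the lemma in hand, $\operatorname{Tr}(\mathbf{T}_l(\hat{\mathbf{r}}))=0$ follows by substituting it into $\operatorname{Tr}(\mathbf{T}_l(\hat{\mathbf{r}}))=C\sum_{m}(-1)^m\frac{(2l-2m-1)!!}{(2l-1)!!}\operatorname{Tr}(\{\hat{\mathbf{r}}^{\otimes(l-2m)}\otimes\mathbf{I}^{\otimes m}\})$ and summing over all pairs: since $\frac{(2l-2m-1)!!}{(2l-1)!!}=(2l-2m-1)\frac{(2l-2m-3)!!}{(2l-1)!!}$, the factor $2l-2m-1$ produced by the lemma matches the level-$m$ term against the level-$(m+1)$ term, and the alternating sum telescopes to zero. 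The main obstacle I anticipate is the bookkeeping in the enumeration step: showing rigorously that families (i)--(iii) are exhaustive, correctly attributing the multiplicity $3$ to the $\delta_{i_1 i_2}$ term so that the phrase ``$2l-2m-1$ pairs'' is read as a weighted count, and handling the boundary ranges of $m$ --- e.g., $m=0$, where family (iii) is empty, or $m=\lfloor l/2\rfloor-1$ with $l-2m-2\in\{0,1\}$, where family (i) contributes little or nothing.
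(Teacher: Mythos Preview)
Your proposal is correct and follows essentially the same approach as the paper: your families (i), (ii), (iii) correspond exactly to the paper's cases $(b)_{m+1}+(c)_{m+1}$, $(d)_{m+1}$, and $(e)_{m+1}$, with the same multiplicities $2(l-2m-2)$, $3$, and $2m$ summing to $2l-2m-1$. Your explicit remark that the ``$2l-2m-1$ pairs'' should be read as a weighted count (the $\delta_{i_1 i_2}$ term contributing with weight $3$) and your attention to the boundary cases are, if anything, more careful than the paper's presentation.
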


\begin{proof}
Without loss of generality, we may assume $I_{l-2m} = \left(i_{1}, i_{2}, i_{k_{3}}, \dots, i_{k_{l-2m}}\right)$ and $J_{l-2m} = \big\{\big\{i_{k_{l-2m+1}}, i_{k_{l-2m+2}}\big\}, \dots, \big\{i_{k_{l-1}}, i_{k_{l}}\big\}\big\}$. For $\forall\,m$, $\operatorname{Tr}\left(\left(a\right)_{m}\right)$ has the following expression 
\begin{equation*}
    \begin{split}
        \operatorname{Tr}\left(\left(a\right)_{m}\right) 
        & = \sum\nolimits_{i_{1} = i_{2}} \hat{r}_{i_{1}} \hat{r}_{i_{2}} \hat{r}_{i_{k_{3}}} \cdots \hat{r}_{i_{k_{l-2m}}} \delta_{i_{k_{l-2m+1}} i_{k_{l-2m+2}}} \cdots \delta_{i_{k_{l-1}} i_{k_{l}}} \\
        & = \hat{r}_{i_{k_{3}}} \cdots \hat{r}_{i_{k_{l-2m}}} \delta_{i_{k_{l-2m+1}} i_{k_{l-2m+2}}} \cdots \delta_{i_{k_{l-1}} i_{k_{l}}}.
    \end{split}
\end{equation*}

For $\operatorname{Tr}\big((b)_{m+1}\big)$ and $m+1$, we have a set of $\left(I_{l-2(m+1)}, J_{l-2(m+1)}\right)$ with the cardinality of $l-2m-2$ and the corresponding trace equals to $\operatorname{Tr}\left(\left(a\right)_{m}\right)$. For $3 \leq \forall\,s \leq l-2m$, let
\begin{equation*}
    \begin{split}
        I^{(s)}_{l - 2(m+1)} &= \big\{i_{k_{3}}, \dots, \underbrace{i_{1}}_{s\text{-th index}}, \dots, i_{k_{l-2m}}\big\}, \\
        J^{(s)}_{l - 2(m+1)} &= \big\{\big\{i_{k_{l-2m+1}}, i_{k_{l-2m+2}}\big\}, \dots, \big\{i_{k_{l-1}}, i_{k_{l}}\big\}, \big\{i_{k_{s}}, i_{2}\big\}\big\}.
    \end{split}
\end{equation*}
Here, we note that we assume the order of elements in $I^{(s)}_{l - 2(m+1)}$. Then, we obtain
\begin{equation*}
    \begin{split}
        \operatorname{Tr}\big((b)_{m+1}\big) 
        & = \sum\nolimits_{i_{1} = i_{2}} \sum\nolimits_{s=3}^{l-2m} \hat{r}_{i_{k_{3}}} \cdots \underbrace{\hat{r}_{i_{1}}}_{s\text{-th vector}} \cdots \hat{r}_{i_{k_{l-2m}}} \delta_{i_{k_{l-2m+1}} i_{k_{l-2m+2}}} \cdots \delta_{i_{k_{l-1}} i_{k_{l}}} \delta_{i_{k_{s}} i_{2}} \\
        & = \sum\nolimits_{s=3}^{l-2m} \hat{r}_{i_{k_{3}}} \cdots \hat{r}_{i_{k_{s}}} \cdots \hat{r}_{i_{k_{l-2m}}} \delta_{i_{k_{l-2m+1}} i_{k_{l-2m+2}}} \cdots \delta_{i_{k_{l-1}} i_{k_{l}}} \\
        & = \left(l - 2m - 2\right) \operatorname{Tr}\left(\left(a\right)_{m}\right).
    \end{split}
\end{equation*}

The same derivation as above holds for $\operatorname{Tr}\big((c)_{m+1}\big)$. For $\operatorname{Tr}\big((d)_{m+1}\big)$, let
\begin{equation*}
    \begin{split}
        I_{l - 2(m+1)} &= \big\{i_{k_{3}}, \dots, i_{k_{l-2m}}\big\}, \\
        J_{l - 2(m+1)} &= \big\{\big\{i_{k_{l-2m+1}}, i_{k_{l-2m+2}}\big\}, \dots, \big\{i_{k_{l-1}}, i_{k_{l}}\big\}, \big\{i_{1}, i_{2}\big\}\big\}.
    \end{split}
\end{equation*}
and we get
\begin{equation*}
    \begin{split}
        \operatorname{Tr}\big((d)_{m+1}\big) 
        & = \sum\nolimits_{i_{1} = i_{2}} \hat{r}_{i_{k_{3}}} \cdots \hat{r}_{i_{k_{l-2m}}} \delta_{i_{k_{l-2m+1}} i_{k_{l-2m+2}}} \cdots \delta_{i_{k_{l-1}} i_{k_{l}}} \delta_{i_{1} i_{2}} \\ 
        & = 3 \operatorname{Tr}\left(\left(a\right)_{m}\right).
    \end{split}
\end{equation*}

For $\operatorname{Tr}\big((e)_{m+1}\big)$ and $1 \leq \forall\,s \leq m$, we let
\begin{equation*}
    \begin{split}
        I^{(s)}_{l - 2(m+1)} &= \big\{i_{k_{3}}, \dots, i_{k_{l-2m}}\big\}, \\
        J^{(s)}_{l - 2(m+1)} &= \big\{\big\{i_{k_{l-2m+1}}, i_{k_{l-2m+2}}\big\}, \dots ,\big\{i_{k_{l-2m+2s-1}}, i_{1}\big\}, \big\{i_{2}, i_{k_{l-2m+2s}}\big\}, \dots ,\big\{i_{k_{l-1}}, i_{k_{l}}\big\}\big\},
    \end{split}
\end{equation*}
and obtain
\\
\scalebox{0.9}{
\begin{minipage}{\linewidth}
    \begin{equation*}
        \begin{split}
            \operatorname{Tr}\big((e)_{m+1}\big) 
            & = \sum\nolimits_{i_{1} = i_{2}} \sum\nolimits_{s=1}^{m} \hat{r}_{i_{k_{3}}} \cdots \hat{r}_{i_{k_{l-2m}}} \delta_{i_{k_{l-2m+1}} i_{k_{l-2m+2}}} \cdots \delta_{i_{k_{l-2m+2s-1}} i_{1}} \delta_{i_{2} i_{k_{l-2m+2s}}} \cdots \delta_{i_{k_{l-1}} i_{k_{l}}} \\
            & = \sum\nolimits_{s=1}^{m} \hat{r}_{i_{k_{3}}} \cdots \hat{r}_{i_{k_{l-2m}}} \delta_{i_{k_{l-2m+1}} i_{k_{l-2m+2}}} \cdots \delta_{k_{l-2m+2s-1} k_{l-2m+2s}} \cdots \delta_{i_{k_{l-1}} i_{k_{l}}} \\ &= 2m \operatorname{Tr}\left(\left(a\right)_{m}\right).
        \end{split}
    \end{equation*}
\end{minipage}}
\\
The additional factor of two originates from the permutation of indices $i_1$ and $i_2$. 

Using all the above results, we obtain
\begin{equation*}
    \begin{split}
        & \operatorname{Tr}((b)_{m+1}) + \operatorname{Tr}((c)_{m+1}) + \operatorname{Tr}((d)_{m+1}) + \operatorname{Tr}((e)_{m+1}) \\
        & \quad\quad = \left(l - 2m -2\right) \operatorname{Tr}\left(\left(a\right)_{m}\right) + \left(l - 2m -2\right) \operatorname{Tr}\left(\left(a\right)_{m}\right) + 3 \operatorname{Tr}\left(\left(a\right)_{m}\right) + 2m \operatorname{Tr}\left(\left(a\right)_{m}\right) \\
        & \quad\quad = \left(2l-2m-1\right) \operatorname{Tr}\left(\left(a\right)_{m}\right),
    \end{split}
\end{equation*}
which completes the proof of the Lemma.
\end{proof}

For each $m$, define $A(m) = \operatorname{Tr}\left(\left(a\right)_{m}\right)$ and $B(m) = \operatorname{Tr}\left(\left(b\right)_{m}\right) + \operatorname{Tr}\left(\left(c\right)_{m}\right) + \operatorname{Tr}\left(\left(d\right)_{m}\right) + \operatorname{Tr}\left(\left(e\right)_{m}\right)$. Then, the trace of $\mathbf{T}_{l}\left(\hat{\mathbf{r}}\right)$ may be written as a summation of $A(m)$ and $B(m)$ over $m$
\begin{equation*}
    \begin{split}
        \operatorname{Tr}\left(\mathbf{T}_{l}\left(\hat{\mathbf{r}}\right)\right) 
        & = C \sum\nolimits_{m=0}^{\lfloor l/2\rfloor} \left(-1\right)^m \frac{(2l-2m-1)!!}{(2l-1)!!} \sum\nolimits_{i_{1} = i_{2}} \left(\big\{\hat{\mathbf{r}}^{\otimes (l-2m)}\otimes\mathbf{I}^{\otimes m}\big\}\right)_{i_{1}, i_{2}} \\
        & = C ( \hspace{144pt} \left(A(0) + B(0)\right)  \\
        & \hspace{75pt} +\left(-1\right) \frac{(2l-2-1)!!}{(2l-1)!!} \left(A(1) + B(1)\right) \\
        & \hspace{140pt} \cdots \\
        & \hspace{58pt} +\left(-1\right)^{\tilde{m}} \frac{(2l-2\tilde{m}-1)!!}{(2l-1)!!} \left(A(\tilde{m}) + B(\tilde{m})\right)  \\
        & \hspace{23pt} +\left(-1\right)^{\tilde{m}+1} \frac{(2l-2(\tilde{m}+1)-1)!!}{(2l-1)!!}  \left(A(\tilde{m}+1) + B(\tilde{m}+1)\right)  \\
        & \hspace{140pt} \cdots .
    \end{split}
\end{equation*}
By Lemma \ref{lem:diag_cancel}, each pair of the diagonal terms $\left(A\left(\tilde{m}\right), B\left(\tilde{m}+1\right)\right)$ is cancelled, and $B(0)$ and $A\left(\lfloor l/2\rfloor\right)$ are left, which by definition are trivial. This result completes the proof of the traceless property for \eqref{eq:cartesian_irreps}.

\textbf{Even irreducible Cartesian tensor product.} The trace of an even irreducible Cartesian tensor product may be written as
\\
\scalebox{0.965}{
\begin{minipage}{\linewidth}
    \begin{equation*}
        \begin{split}
            & \operatorname{Tr}\left(\left(\mathbf{x}_{l_1} \otimes_{\mathrm{Cart}} \mathbf{y}_{l_2}\right)_{l_3}\right) \\
            & \quad\quad = \sum_{i_{1} = i_{2}} C_{l_1l_2l_3}
            \sum\nolimits_{m=0}^{\mathrm{min}(l_1,l_2) - k}(-1)^m 2^m \frac{(2l_3-2m-1)!!}{(2l_3-1)!!}\left(\big\{\left(\mathbf{x}_{l_1}\cdot(k+m)\cdot\mathbf{y}_{l_2}\right)\otimes\mathbf{I}^{\otimes m}\big\}\right)_{i_{1} i_{2}} \\
            & \quad\quad = C_{l_1l_2l_3}
            \sum\nolimits_{m=0}^{\mathrm{min}(l_1,l_2) - k}(-1)^m 2^m \frac{(2l_3-2m-1)!!}{(2l_3-1)!!} \underbrace{\sum_{i_{1} = i_{2}} \left(\big\{\left(\mathbf{x}_{l_1}\cdot(k+m)\cdot\mathbf{y}_{l_2}\right)\otimes\mathbf{I}^{\otimes m}\big\}\right)_{i_{1} i_{2}}}_{(*)_{\text{even}}}.
        \end{split}
    \end{equation*}
\end{minipage}}

In the following, we let 
\begin{equation*}
    \begin{split}
        I_{l_{1} - (k+m)} & = \big\{i_{k_{1}}, \dots, i_{k_{l_{1}-(k+m)}}\big\} \subset \big\{i_{1}, \dots, i_{l_{3}}\big\}, \\
        J_{l_{2} - (k+m)} & = \big\{i_{k_{l_{1}-(k+m)+1}}, \dots, i_{k_{l_3 - 2m}} \big\} \subset \big\{i_{1}, \dots, i_{l_{3}}\big\} \backslash I_{l_{1} - (k+m)}, \\
        K_{l_{3} - 2m} & = \big\{\big\{i_{k_{l_{3} - 2m+1}}, i_{k_{l_{3} - 2m+2}} \big\},  \dots, \big\{i_{k_{l_{3}-1}}, i_{k_{l_{3}}}\big\}\big\}.
    \end{split}
\end{equation*}
For simplicity, we omit the subscripts $l_{1} - (k+m)$, $l_{2} - (k+m)$, and $l_{3} - 2m$ for $I$, $J$, and $K$, if there is no confusion. We also introduce a new notation $\mathbf{x}_{j_{1} \cdots j_{p}, Q} = \mathbf{x}_{j_{1} \cdots j_{p}, q_{1} \cdots q_{u}},$ where $\mathbf{x}$ is an irreducible Cartesian tensor and $Q = \big\{q_{1}, \dots, q_{u}\big\}$, which is used throughout this section. Then, non-trivial terms in $(*)_{\text{even}}$ may be written as 
\begin{equation*}
    \begin{split}
        \sum_{i_{1}=i_{2}}\left( \sum_{\substack{i_{1} \in I\\i_{2} \in J}}^{\substack{(a)_{m} \\ \phantom{a}}} 
        + \sum_{\substack{i_{2} \in I\\i_{1} \in J}}^{\substack{(b)_{m} \\ \phantom{a}}} 
        +\sum_{\substack{i_{1} \in I \cup J \\ i_{2} \in K}}^{\substack{(c)_{m} \\ \phantom{=}}} 
        +\sum_{\substack{i_{2} \in I \cup J \\ i_{1} \in K}}^{\substack{(d)_{m} \\ \phantom{=}}} 
        + \sum_{i_{1}, i_{2} \in K}^{\substack{(e)_{m} \\ \phantom{=}}} \right) \sum_{j_{1}, \dots, j_{k+m}} \left(\mathbf{x}_{l_1}\right)_{j_{1} \cdots j_{k+m}, I} \left(\mathbf{y}_{l_2}\right)_{j_{1} \cdots j_{k+m}, J} \mathbf{I}^{\otimes m}_{K}.
    \end{split}
\end{equation*}
We note that the trace is trivial when both of $i_{1}$ and $i_{2}$ belong to either of $I$ or $J$, because $\mathbf{x}_{l_1}$ and $\mathbf{y}_{l_2}$ are traceless by assumption.

\begin{lemma} \label{lem:diag_cancel_even} Let $0 \leq m \leq \mathrm{min}(l_1,l_2) - k - 1$. For each triplet $\left(I_{l_{1} - (k+m)}, J_{l_{2} - (k+m)}, K_{l_{3} - 2m}\right)$, there exist $2l_{3}-2m-1$ triplets of $\left(I_{l_{1} - (k+m+1)}, J_{l_{2} - (k+m+1)}, K_{l_{3} - 2(m+1)}\right)$ such that the trace for each triplet equals to the trace for $\left(I_{l_{1} - (k+m)}, J_{l_{2} - (k+m)}, K_{l_{3} - 2m}\right)$.
\end{lemma}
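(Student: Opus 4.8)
The plan is to prove Lemma~\ref{lem:diag_cancel_even} by transplanting the argument of Lemma~\ref{lem:diag_cancel}, replacing the single block of $\hat{\mathbf{r}}$-factors there by the ``main object'' $\mathbf{x}_{l_1}\cdot(k+m)\cdot\mathbf{y}_{l_2}$, which now carries two distinct families of free indices: the $l_1-(k+m)$ uncontracted $\mathbf{x}$-slots collected in $I$ and the $l_2-(k+m)$ uncontracted $\mathbf{y}$-slots collected in $J$. Fix a triplet $(I_{l_1-(k+m)},J_{l_2-(k+m)},K_{l_3-2m})$ at level $m$ and call its \emph{reference trace} the quantity produced by case $(a)_m$, i.e.\ by taking $i_1$ in $I$ and $i_2$ in $J$ (case $(b)_m$ is the $i_1\leftrightarrow i_2$ mirror, with the same value). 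Setting $i_1=i_2$ and summing glues one free $\mathbf{x}$-slot to one free $\mathbf{y}$-slot: the $(k+m)$-fold contraction becomes a $(k+m+1)$-fold one, all $m$ Kronecker pairs survive untouched, and the outcome is the rank-$(l_3-2)$ tensor $\{(\mathbf{x}_{l_1}\cdot(k+m+1)\cdot\mathbf{y}_{l_2})\otimes\mathbf{I}^{\otimes m}\}$ --- precisely the level-$m$ building block of the smaller product $(\mathbf{x}_{l_1}\otimes_{\mathrm{Cart}}\mathbf{y}_{l_2})_{l_3-2}$. Placements with both of $i_1,i_2$ inside $I$, or both inside $J$, give zero because $\mathbf{x}_{l_1}$ and $\mathbf{y}_{l_2}$ are traceless, which is why only cases $(a)$--$(e)$ occur.

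Next I would enumerate the level-$(m+1)$ triplets whose trace in cases $(c)_{m+1},(d)_{m+1},(e)_{m+1}$ collapses onto this fixed reference, reading off the trace values from the formulas already displayed in the excerpt. Case $(c)_{m+1}$ ($i_1$ a free main-object slot, $i_2$ a Kronecker slot) is obtained from the reference by moving any one of its $l_1-(k+m+1)$ free $\mathbf{x}$-slots or $l_2-(k+m+1)$ free $\mathbf{y}$-slots into a fresh Kronecker pair, so it accounts for $l_3-2m-2$ triplets; case $(d)_{m+1}$ contributes the same $l_3-2m-2$ by the $i_1\leftrightarrow i_2$ mirror; case $(e)_{m+1}$ with $i_1,i_2$ in a common pair contributes multiplicity $3$ (from $\sum_i\delta_{ii}=3$); and case $(e)_{m+1}$ with $i_1,i_2$ in two distinct pairs contributes $2$ for each of the $m$ pairs of the reference, i.e.\ $2m$. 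Summing gives $2(l_3-2m-2)+3+2m=2l_3-2m-1$, which is the claim. This combinatorial identity is exactly what makes the full sum for $\operatorname{Tr}((\mathbf{x}_{l_1}\otimes_{\mathrm{Cart}}\mathbf{y}_{l_2})_{l_3})$ telescope: the double-factorial weights in \eqref{eq:product_even} obey $(2l_3-2m-1)!!/(2l_3-2(m+1)-1)!!=2l_3-2m-1$, while the factor $2$ separating $2^{m+1}$ from $2^m$ is absorbed by the reference appearing with multiplicity two (cases $(a)_m$ and $(b)_m$); hence the level-$m$ reference term and the pooled level-$(m+1)$ terms cancel in pairs. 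Only the two boundary contributions survive, and both vanish trivially: at $m=0$ there are no Kronecker pairs so cases $(c)_0,(d)_0,(e)_0$ are empty, and at $m=\min(l_1,l_2)-k$ the main object is fully contracted so $I$ or $J$ is empty and case $(a)$ cannot occur.

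The step I expect to be the main obstacle is the bookkeeping in the second paragraph: one has to verify that the assignment ``level-$m$ reference $\mapsto$ level-$(m+1)$ triplet in case $(c)/(d)/(e)$'' lands in the right fibers and, crucially, is both surjective and injective enough (counting the $3$ and the $2m$ as multiplicities) that each term of the level-$(m+1)$ sum is hit exactly once --- so that nothing is double-counted or dropped in the telescoping. Tracking the two separate families of free slots $I$ (from $\mathbf{x}$) and $J$ (from $\mathbf{y}$), absent in Lemma~\ref{lem:diag_cancel}, is the only genuine source of extra casework. Once Lemma~\ref{lem:diag_cancel_even} is in place, tracelessness of the even product follows exactly as for \eqref{eq:cartesian_irreps}, and the odd product \eqref{eq:product_odd} is treated identically after noting that the extra single contraction against the Levi-Civita symbol commutes with the trace over $i_1=i_2$ and leaves every multiplicity count unchanged.
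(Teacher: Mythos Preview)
Your proposal is correct and follows essentially the same route as the paper's own proof: fix a reference triplet at level $m$, compute its trace via case $(a)_m$ (which raises the contraction order by one), and then count the level-$(m+1)$ triplets in cases $(c)_{m+1}$, $(d)_{m+1}$, $(e)_{m+1}$ that reproduce this reference, obtaining $(l_3-2m-2)+(l_3-2m-2)+(2m+3)=2l_3-2m-1$. The paper merges your two $(e)$-subcases into a single count of $2m+3$, but the enumeration and the telescoping conclusion are identical.
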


\begin{proof}
Without loss of generality, we may assume
\begin{equation*}
    \begin{split}
        I_{l_{1} - (k+m)} & = \big\{i_{1}, i_{k_{2}}, \dots, i_{k_{l_{1}-(k+m)}}\big\} \subset \big\{i_{1}, \dots, i_{l_{3}}\big\}, \\
        J_{l_{2} - (k+m)} & = \big\{i_{2}, i_{k_{l_{1}-(k+m)+2}}, \dots, i_{k_{l_{3} - 2m}} \big\} \subset \big\{i_{1}, \dots, i_{l_{3}}\big\} \backslash I_{l_{1} - (k+m)}, \\
        K_{l_{3} - 2m} & = \big\{\big\{i_{k_{l_{3} - 2m+1}}, i_{k_{l_{3} - 2m+2}} \big\},  \dots, \big\{i_{k_{l_{3}-1}}, i_{k_{l_{3}}}\big\}\big\}.
    \end{split}
\end{equation*}
Then, $\operatorname{Tr}\left(\left(a\right)_{m}\right)$ has the following expression 
\begin{equation*}
    \begin{split}
        & \operatorname{Tr}\left(\left(a\right)_{m}\right) \\
        & \quad\quad = \sum_{i_{1} = i_{2}} \sum_{j_{1}, \dots, j_{k+m}} \left(\mathbf{x}_{l_1}\right)_{j_{1} \cdots j_{k+m}, i_{1} i_{k_{2}} \cdots i_{k_{l_{1}-(k+m)}}} \left(\mathbf{y}_{l_2}\right)_{j_{1} \cdots j_{k+m}, i_{2} i_{k_{l_{1}-(k+m)+2}} \cdots i_{k_{l_{3} - 2m}}} \mathbf{I}^{\otimes m}_{K} \\
        & \quad\quad = \sum_{\substack{j_{1}, \dots, j_{k+m}, \\ j_{k+m+1}}} \left(\mathbf{x}_{l_1}\right)_{j_{1} \cdots j_{k+m+1}, i_{k_{2}} \cdots i_{k_{l_{1}-(k+m)}}} \left(\mathbf{y}_{l_2}\right)_{j_{1} \cdots j_{k+m+1}, i_{k_{l_{1}-(k+m)+2}} \cdots i_{k_{l_{3} - 2m}}} \mathbf{I}^{\otimes m}_{K}.
    \end{split}
\end{equation*}
The same derivation applies to $\operatorname{Tr}\left(\left(b\right)_{m}\right)$, and it has the same result as above. 

For $\operatorname{Tr}((c)_{m+1})$ and $m+1$, we have $l_{3}-2m-2$ of $\left(I_{l_{1} - (k+m+1)}, J_{l_{2} - (k+m+1)}, K_{l_{3} - 2(k+m+1)}\right)$ triplets whose trace equals to $\operatorname{Tr}\left(\left(a\right)_{m}\right)$. Indeed, like in the proof of Lemma \ref{lem:diag_cancel}, we let 
\begin{equation*}
    \begin{split}
        I_{l_{1} - (k+m+1)} \cup J_{l_{2} - (k+m+1)} &= \big\{i_{k_{2}}, \dots, i_{1} ,\dots, i_{k_{l_{1}-(k+m)}}, i_{k_{l_{1}-(k+m)+2}}, \dots, i_{k_{l_{3} - 2m}}\big\}, \\
        K_{l_{3} - 2(m+1)} &= \big\{\big\{i_{k_{l_3-2m+1}}, i_{k_{l_3-2m+2}}\big\}, \dots, \big\{i_{k_{l_3-1}}, i_{k_{l_3}}\big\}, \big\{ i_{k_{s}}, i_{2}\big\}\big\}.
    \end{split}
\end{equation*}
Then, by letting $\tilde{J} = \big\{j_{1}, \dots, j_{k+m}, j_{k+m+1}\big\},$ we obtain
\begin{equation*}
    \begin{split}
        & \operatorname{Tr}((c)_{m+1}) \\ 
        & \quad\quad = \sum_{i_{1}=i_{2}} \sum_{s} \sum_{\tilde{J}} \left(\mathbf{x}_{l_1}\right)_{\tilde{J} i_{k_{2}} \cdots i_{1} \cdots i_{k_{l_{1}-(k+m)}}} \left(\mathbf{y}_{l_2}\right)_{\tilde{J} i_{k_{l_{1}-(k+m)+2}} \cdots i_{k_{l_{3} - 2m}}} \\ 
        & \hspace{73.5mm} \times \delta_{i_{k_{l_{3}-2m+1}} i_{k_{l_{3}-2m+2}}} \cdots \delta_{i_{k_{l_{3}-1}} i_{k_{l_{3}}}} \delta_{i_{k_{s}} i_{2}} \\
        & \quad\quad = \sum_{s} \sum_{\tilde{J}} \left(\mathbf{x}_{l_1}\right)_{\tilde{J}, i_{k_{2}} \cdots i_{k_{s}} \cdots i_{k_{l_{1}-(k+m)}}} \left(\mathbf{y}_{l_2}\right)_{\tilde{J}, i_{k_{l_{1}-(k+m)+2}} \cdots i_{k_{l_{3} - 2m}}} \\ 
        & \hspace{73.5mm} \times \delta_{i_{k_{l_{3}-2m+1}}, i_{k_{l_{3}-2m+2}}} \cdots \delta_{i_{k_{l_{3}-1}}, i_{k_{l_{3}}}}\\
        & \quad\quad = (l_{3} - 2m -2) \operatorname{Tr}\left(\left(a\right)_{m}\right).
    \end{split}
\end{equation*}
The same derivation applies to $\operatorname{Tr}((d)_{m+1})$, and we get $\operatorname{Tr}((d)_{m+1})= (l_{3} - 2m -2) \operatorname{Tr}\left(\left(a\right)_{m}\right)$.

For $\operatorname{Tr}((e)_{m+1})$, the same derivation as for $\operatorname{Tr}((d)_{m+1})$ and $\operatorname{Tr}((e)_{m+1})$ in the proof of Lemma \ref{lem:diag_cancel} applies. Therefore, we have
\begin{equation*}
    \operatorname{Tr}((e)_{m+1}) = (2m+3)\operatorname{Tr}\left(\left(a\right)_{m}\right).
\end{equation*}

Hence, we obtain
\begin{equation*}
    \begin{split}
        & \operatorname{Tr}((c)_{m+1}) + \operatorname{Tr}((d)_{m+1}) + \operatorname{Tr}((e)_{m+1}) \\
        & \quad = (l_3 - 2m -2) \operatorname{Tr}((a)_m) + (l_3 - 2m -2) \operatorname{Tr}((a)_m) + (2m+3) \operatorname{Tr}((a)_m) \\
        & \quad = (2l_3-2m-1) \operatorname{Tr}((a)_m) = (2l_3-2m-1) \operatorname{Tr}((b)_m),
    \end{split}
\end{equation*}
which completes the proof of the Lemma.
\end{proof}

For each $m$, define $A(m) = \operatorname{Tr}\left(\left(a\right)_{m}\right) + \operatorname{Tr}\left(\left(b\right)_{m}\right)$ and $B(m) = \operatorname{Tr}\left(\left(c\right)_{m}\right) + \operatorname{Tr}\left(\left(d\right)_{m}\right) + \operatorname{Tr}\left(\left(e\right)_{m}\right)$. Then, the trace for $\left(\mathbf{x}_{l_1} \otimes_{\mathrm{Cart}} \mathbf{y}_{l_2}\right)_{l_3}$ may be written as a summation of $A(m)$ and $B(m)$ over $m$
\\
\scalebox{0.925}{
\begin{minipage}{\linewidth}
    \begin{equation*}
        \begin{split}
            & \operatorname{Tr}\left(\left(\mathbf{x}_{l_1} \otimes_{\mathrm{Cart}} \mathbf{y}_{l_2}\right)_{l_3}\right) \\
            & \quad\quad = C_{l_1l_2l_3}
            \sum\nolimits_{m=0}^{\mathrm{min}(l_1,l_2) - k}(-1)^m 2^m \frac{(2l_3-2m-1)!!}{(2l_3-1)!!} \sum\nolimits_{i_{1} = i_{2}} \left(\big\{\left(\mathbf{x}_{l_1}\cdot(k+m)\cdot\mathbf{y}_{l_2}\right)\otimes\mathbf{I}^{\otimes m}\big\}\right)_{i_{1} i_{2}}\\
            & \quad\quad = C_{l_1l_2l_3} ( \hspace{149pt} \left(A(0) + B(0)\right)  \\
            & \quad\quad \hspace{92pt} +(-1) 2 \frac{(2l_3-2-1)!!}{(2l_3-1)!!} \left(A(1) + B(1)\right) \\
            & \quad\quad \hspace{140pt} \cdots \\
            & \quad\quad \hspace{68pt} +(-1)^{\tilde{m}} 2^{\tilde{m}} \frac{(2l_3-2\tilde{m}-1)!!}{(2l_3-1)!!} \left(A(\tilde{m}) + B(\tilde{m})\right)  \\
            & \quad\quad \hspace{23pt} +(-1)^{\tilde{m}+1} 2^{\tilde{m}+1} \frac{(2l_3-2(\tilde{m}+1)-1)!!}{(2l_3-1)!!}  \left(A(\tilde{m}+1) + B(\tilde{m}+1)\right)  \\
            & \quad\quad \hspace{140pt} \cdots .
        \end{split}
    \end{equation*}
\end{minipage}}
\\
By Lemma \ref{lem:diag_cancel_even}, each pair of the diagonal terms $\left(A(\tilde{m}), B(\tilde{m}+1)\right)$ is cancelled taking into account an additional factor of $2^m$ in front of $B(\tilde{m}+1)$; $B(0)$ and $A\left(\min(l_1, l_2) - (l_1 + l_2 - l_3)/ 2\right)$ are left, which by definition are trivial. This result completes the proof of the traceless property for \eqref{eq:product_even}.

\textbf{Odd irreducible Cartesian tensor product.} The trace of an odd irreducible Cartesian tensor product may be written as
\\
\scalebox{0.9}{
\begin{minipage}{\linewidth}
    \begin{equation*}
        \begin{split}
            & \operatorname{Tr}\left(\left(\mathbf{x}_{l_1} \otimes_{\mathrm{Cart}} \mathbf{y}_{l_2}\right)_{l_3}\right) \\
            & \quad\quad = \sum_{i_{1} = i_{2}} D_{l_1l_2l_3}
            \sum\nolimits_{m=0}^{\mathrm{min}(l_1,l_2)-k-1}(-1)^m 2^m \frac{(2l_3-2m-1)!!}{(2l_3-1)!!}\left(\big\{\left(\boldsymbol{\varepsilon}:\mathbf{x}_{l_1}\cdot(k+m)\cdot\mathbf{y}_{l_2}\right)\otimes\mathbf{I}^{\otimes m}\big\}\right)_{i_{1}, i_{2}} \\
            & \quad\quad = D_{l_1l_2l_3}
            \sum\nolimits_{m=0}^{\mathrm{min}(l_1,l_2)-k-1}(-1)^m 2^m \frac{(2l_3-2m-1)!!}{(2l_3-1)!!} \underbrace{\sum_{i_{1} = i_{2}}\left(\big\{\left(\boldsymbol{\varepsilon}:\mathbf{x}_{l_1}\cdot(k+m)\cdot\mathbf{y}_{l_2}\right)\otimes\mathbf{I}^{\otimes m}\big\}\right)_{i_{1}, i_{2}}}_{(*)_{\text{odd}}}
        \end{split}
    \end{equation*}
\end{minipage}}

In the following, we let 
\begin{equation*}
    \begin{split}
        E_{l_{3}} &= \{i_{k_{0}}\} \subset \big\{i_{1}, \dots, i_{l_{3}}\big\}\\
        I_{l_{1} - (k+m)} &= \big\{i_{k_{1}}, \dots, i_{k_{l_{1}-(k+m)}}\big\} \subset \big\{i_{1}, \dots, i_{l_{3}}\big\} \backslash E_{l_{3}}, \\
        J_{l_{2} - (k+m)} &= \big\{i_{k_{l_{1}-(k+m)+1}}, \dots, i_{k_{l_{3} - 2m}} \big\} \subset \big\{i_{1}, \dots, i_{l_{3}}\big\} \backslash \left( I_{l_{1} - (k+m)} \cup E_{l_{3}}\right), \\
        K_{l_{3} - 2m} &= \big\{\big\{i_{k_{l_{3} - 2m+1}}, i_{k_{l_{3} - 2m+2}} \big\},  \dots, \big\{i_{k_{l_{3}-1}}, i_{k_{l_{3}}}\big\}\big\}.
    \end{split}
\end{equation*}
For simplicity, we omit the subscripts $l_3$, $l_{1} - (k+m)$, $l_{2} - (k+m)$, and $l_{3} - 2m$ for $E$, $I$, $J$, and $K$, if there is no confusion. Then, non-trivial terms inside $(*)_{\text{odd}}$ can be written as
\\
\scalebox{0.76}{
\begin{minipage}{\linewidth}
    \begin{equation*}
        \begin{split}
            \sum_{i_{1}=i_{2}}\left(\sum_{\substack{i_{1} \in I\\i_{2} \in J}}^{\substack{(a)_{m} \\ \phantom{a}}} 
            + \sum_{\substack{i_{2} \in I\\i_{1} \in J}}^{\substack{(b)_{m} \\ \phantom{a}}} 
            +\sum_{\substack{i_{1} \in E \cup I \cup J \\ i_{2} \in K}}^{\substack{(c)_{m} \\ \phantom{=}}} 
            +\sum_{\substack{i_{2} \in E \cup I \cup J \\ i_{1} \in K}}^{\substack{(d)_{m} \\ \phantom{=}}} 
            + \sum_{i_{1}, i_{2} \in K}^{\substack{(e)_{m} \\ \phantom{=}}} \right) \sum_{j_{1}, \dots, j_{k+m}} \sum_{a_2,a_3} \boldsymbol{\varepsilon}_{E a_{2} a_{3}}\left(\mathbf{x}_{l_1}\right)_{j_{1} \cdots j_{k+m} a_{2},I\backslash \{a_{2}\}} \left(\mathbf{y}_{l_2}\right)_{j_{1} \cdots j_{k+m} a_{3},J\backslash \{a_{3}\}} \mathbf{I}^{\otimes m}_{K}.
        \end{split}
    \end{equation*}
\end{minipage}}

Note that the trace is trivial when $i_{1}$ or $i_{2}$ belongs to $E$ and the remaining index belongs to $I \cup J$. Indeed, let $\tilde{J} = \big\{j_{1}, \dots, j_{k+m}\big\}$ and suppose $i_{1} \in E$ and $i_{2} \in I$. Then, and we have
\\
\scalebox{0.975}{
\begin{minipage}{\linewidth}
    \begin{equation*}
        \begin{split}
            & \left(\operatorname{Tr}\left(\left(\mathbf{x}_{l_1} \otimes_{\mathrm{Cart}} \mathbf{y}_{l_2}\right)_{l_3}\right)\right)_{\left(I \cup J\right) \backslash \{i_{2}\}} \\
            & \quad\quad = \sum_{i_1 = i_2} \left(\sum_{\tilde{J}} \sum_{a_2,a_3} \boldsymbol{\varepsilon}_{i_{1} a_{2} a_{3}}\left(\mathbf{x}_{l_1}\right)_{\tilde{J}, a_{2}, i_{2}, I\backslash \{i_{2}\}} \left(\mathbf{y}_{l_2}\right)_{\tilde{J}, a_{3}, J} \right)  \\
            & \quad\quad = \sum_{\tilde{J}} \left(\sum_{i_1 = i_2} \sum_{a_2 < a_3} \boldsymbol{\varepsilon}_{i_{1} a_{2} a_{3}} \left(\left(\mathbf{x}_{l_1}\right)_{\tilde{J}, a_{2}, i_{2}, I\backslash \{i_{2}\}} \left(\mathbf{y}_{l_2}\right)_{\tilde{J}, a_{3},J} - \left(\mathbf{x}_{l_1}\right)_{\tilde{J}, a_{3}, i_{2}, I\backslash \{i_{2}\}} \left(\mathbf{y}_{l_2}\right)_{\tilde{J}, a_{2},J} \right)\right)  \\
            & \quad\quad = \sum_{\tilde{J}} \left(\sum_{\substack{(i_1, a_2, a_3) = \\ (1, 2, 3) \\ (2, 1, 3) \\ (3, 1, 2)}} \boldsymbol{\varepsilon}_{i_{1} a_{2} a_{3}} \left(\left(\mathbf{x}_{l_1}\right)_{\tilde{J}, a_{2}, i_{1}, I\backslash \{i_{1}\}} \left(\mathbf{y}_{l_2}\right)_{\tilde{J}, a_{3},J} - \left(\mathbf{x}_{l_1}\right)_{\tilde{J}, a_{3}, i_{1}, I\backslash \{i_{1}\}} \left(\mathbf{y}_{l_2}\right)_{\tilde{J}, a_{2},J} \right)\right)  \\
            & \quad\quad = 0.
        \end{split}
    \end{equation*}
\end{minipage}}
\\
For the non-trivial terms inside $(*)_{\text{odd}}$ we have the lemma below, whose proof is identical to that of Lemma \ref{lem:diag_cancel_even}.

\begin{lemma} \label{lem:diag_cancel_odd} Let $0 \leq m \leq \mathrm{min}(l_1,l_2) - k - 2$. For each tuple $\left(E_{l_{3}}, I_{l_{1} - (k+m)}, J_{l_{2} - (k+m)}, K_{l_{3} - 2m}\right)$, there exist $2l_{3}-2m-1$ tuples of $\left(E_{l_{3}}, I_{l_{1} - (k+m+1)}, J_{l_{2} - (k+m+1)}, K_{l_{3} - 2(m+1)}\right)$ such that the trace for each of the tuples equals to the trace for $\left(E_{l_{3}}, I_{l_{1} - (k+m)}, J_{l_{2} - (k+m)}, K_{l_{3} - 2m}\right)$.
\end{lemma}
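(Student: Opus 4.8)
The plan is to rerun, essentially verbatim, the counting argument of Lemma~\ref{lem:diag_cancel_even}, treating the single index in $E_{l_3}=\{i_{k_0}\}$ produced by the contraction with $\boldsymbol{\varepsilon}$ as an inert spectator that is carried through every step unchanged. The only genuinely new ingredient is already supplied by the displayed computation immediately preceding the lemma: whenever one of the traced indices $i_1,i_2$ lands in $E$ while the other lands in $I\cup J$, the term vanishes by the antisymmetry of $\boldsymbol{\varepsilon}$. Together with tracelessness of $\mathbf{x}_{l_1}$ and $\mathbf{y}_{l_2}$ (which kills the terms with both of $i_1,i_2$ in $I$ or both in $J$), this is exactly what isolates the five families $(a)_m,\dots,(e)_m$ from which the cancellation is built, so that from this point on the bookkeeping becomes word-for-word the even case.

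First I would fix an arbitrary level-$m$ tuple $\left(E_{l_3},I_{l_1-(k+m)},J_{l_2-(k+m)},K_{l_3-2m}\right)$ and, without loss of generality, place it in the $(a)_m$ configuration with $i_1\in I$ and $i_2\in J$; the $(b)_m$ configuration is handled identically after exchanging the roles of $\mathbf{x}_{l_1}$ and $\mathbf{y}_{l_2}$, so $\operatorname{Tr}((a)_m)=\operatorname{Tr}((b)_m)$. Setting $i_1=i_2$ and summing identifies one free slot of $\mathbf{x}_{l_1}$ with one free slot of $\mathbf{y}_{l_2}$; since $\boldsymbol{\varepsilon}$ and $\mathbf{I}^{\otimes m}$ are untouched, the only effect is to raise the $\mathbf{x}_{l_1}$--$\mathbf{y}_{l_2}$ contraction order from $k+m$ to $k+m+1$, so $\operatorname{Tr}((a)_m)$ takes the same closed form as in the even case, now with an extra $\boldsymbol{\varepsilon}_{E\,a_2\,a_3}$ factor riding along.

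Next I would enumerate the level-$(m+1)$ tuples whose trace reduces, after setting $i_1=i_2$, to $\operatorname{Tr}((a)_m)$. As in Lemma~\ref{lem:diag_cancel_even} these fall into three families: $(c)_{m+1}$, in which the extra Kronecker pair ties $i_2$ to one of the $l_3-2m-2$ spectator indices of $E\cup I\cup J$ other than the slots of $i_1$ and $i_2$, re-labeling the merged slot and contributing $(l_3-2m-2)\operatorname{Tr}((a)_m)$; $(d)_{m+1}$, the mirror image, contributing another $(l_3-2m-2)\operatorname{Tr}((a)_m)$; and $(e)_{m+1}$, in which both $i_1,i_2$ lie inside the Kronecker block, either forming the new pair $\{i_1,i_2\}$ (factor $3$ from $\delta_{i_1i_2}\delta_{i_1i_2}$) or split across one of the $m$ existing pairs (factor $2$ each from the two orderings), contributing $(2m+3)\operatorname{Tr}((a)_m)$. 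Summing gives $(l_3-2m-2)+(l_3-2m-2)+(2m+3)=2l_3-2m-1$ tuples, each with trace $\operatorname{Tr}((a)_m)=\operatorname{Tr}((b)_m)$, which is exactly the assertion of the lemma.

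With the lemma established, the traceless property of the odd product follows exactly as in the even case: writing $A(m)=\operatorname{Tr}((a)_m)+\operatorname{Tr}((b)_m)$ and $B(m)=\operatorname{Tr}((c)_m)+\operatorname{Tr}((d)_m)+\operatorname{Tr}((e)_m)$, the factor $2^{m+1}(2l_3-2(m+1)-1)!!$ multiplying $B(m+1)$ versus $2^{m}(2l_3-2m-1)!!$ multiplying $A(m)$ supplies precisely the $2/(2l_3-2m-1)$ that, via Lemma~\ref{lem:diag_cancel_odd}, makes $A(\tilde m)$ cancel $B(\tilde m+1)$ pairwise; the only survivors are $B(0)$ (no Kronecker pairs at $m=0$) and $A$ at the maximal value of $m$ (no spectator $\mathbf{x}_{l_1}$- or $\mathbf{y}_{l_2}$-slot left to host $i_1,i_2$), both trivially zero, so $\operatorname{Tr}\!\left((\mathbf{x}_{l_1}\otimes_{\mathrm{Cart}}\mathbf{y}_{l_2})_{l_3}\right)=0$. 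I expect the main obstacle to be the combinatorial bookkeeping of the case split: unlike in the even case the $\boldsymbol{\varepsilon}$-output index in $E$ is \emph{not} uniformly inert --- it behaves passively when it is paired off inside the Kronecker block but annihilates the term when it is traced against an $I\cup J$ slot --- so one must set up the partition of configurations carefully and verify that the three surviving families really carry the multiplicities $l_3-2m-2$, $l_3-2m-2$, and $2m+3$; once this matches the even-case template, the remaining steps (and the trivial second part of Proposition~\ref{prop:ictp_tracelss}, which only uses scalar multiplications, sums, and the product just shown to be traceless) are mechanical.
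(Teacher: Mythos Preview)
Your proposal is correct and matches the paper's approach essentially line for line: both treat the odd case as the even-case argument with the extra $\boldsymbol{\varepsilon}$-output index carried along, rely on the vanishing computation (already displayed before the lemma) to eliminate the configurations where one traced index lands in $E$ and the other in $I\cup J$, and then recover the same multiplicities $(l_3-2m-2)+(l_3-2m-2)+(2m+3)=2l_3-2m-1$ for the $(c)_{m+1},(d)_{m+1},(e)_{m+1}$ families. Your closing remark that the $E$-index is not uniformly inert is exactly the right caveat---in one of the $(c)_{m+1}$ (resp.\ $(d)_{m+1}$) tuples the spectator paired with $i_2$ (resp.\ $i_1$) is $i_{k_0}$ itself, so $E$ shifts to $\{i_1\}$ (resp.\ $\{i_2\}$) and only reverts to $i_{k_0}$ after the trace; this is implicit in the paper's count as well and is what makes the multiplicity $l_3-2m-2$ rather than $l_3-2m-3$.
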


\begin{proof}
Without loss of generality, we may assume
\begin{equation*}
    \begin{split}
        E_{l_{3}} &= \{i_{k_{0}}\} \subset \big\{i_{1}, \dots, i_{l_{3}}\big\}\\
        I_{l_{1} - (k+m)} &= \big\{i_{1}, i_{k_{2}}, \dots, i_{k_{l_{1}-(k+m)}}\big\} \subset \big\{i_{1}, \dots, i_{l_{3}}\big\} \backslash E_{l_{3}}, \\
        J_{l_{2} - (k+m)} &= \big\{i_{2}, i_{k_{l_{1}-(k+m)+2}}, \dots, i_{k_{l_{3} - 2m}} \big\} \subset \big\{i_{1}, \dots, i_{l_{3}}\big\} \backslash \left( I_{l_{1} - (k+m)} \cup E_{l_{3}}\right), \\
        K_{l_{3} - 2m} &= \big\{\big\{i_{k_{l_{3} - 2m+1}}, i_{k_{l_{3} - 2m+2}} \big\},  \dots, \big\{i_{k_{l_{3}-1}}, i_{k_{l_{3}}}\big\}\big\}.
    \end{split}
\end{equation*}
Then, $\operatorname{Tr}\left(\left(a\right)_{m}\right)$ has the following expression
\\
\scalebox{0.85}{
\begin{minipage}{\linewidth}
    \begin{equation*}
        \begin{split}
            & \operatorname{Tr}\left(\left(a\right)_{m}\right) \\
            & \quad = \sum_{i_{1} = i_{2}} \sum_{a_{2}, a_{3}} \sum_{j_{1}, \dots, j_{k+m}} \boldsymbol{\varepsilon}_{i_{k_{0}} a_{2} a_{3}} \left(\mathbf{x}_{l_1}\right)_{j_{1} \cdots j_{k+m}, a_{2} i_{1} i_{k_{3}} \cdots i_{k_{l_{1}-(k+m)}}} \left(\mathbf{y}_{l_2}\right)_{j_{1} \cdots j_{k+m}, a_{3} i_{2} i_{k_{l_{1}-(k+m)+3}} \cdots i_{k_{l_{3} - 2m}}} \mathbf{I}^{\otimes m}_{K} \\
            & \quad = \sum_{a_{2}, a_{3}} \sum_{\substack{j_{1}, \dots, j_{k+m}, \\ j_{k+m+1}}} \boldsymbol{\varepsilon}_{i_{k_{0}} a_{2} a_{3}} \left(\mathbf{x}_{l_1}\right)_{j_{1} \cdots j_{k+m+1}, a_{2} i_{k_{3}} \cdots i_{k_{l_{1}-(k+m)}}} \left(\mathbf{y}_{l_2}\right)_{j_{1} \cdots j_{k+m+1}, a_{3} i_{k_{l_{1}-(k+m)+3}} \cdots i_{k_{l_{3} - 2m}}} \mathbf{I}^{\otimes m}_{K}.
        \end{split}
    \end{equation*}
\end{minipage}}
\\
Here, we assume that $a_{2} \in I_{l_{1} - (k+m)}$ and $a_{3} \in J_{l_{2} - (k+m)}$. The same derivation holds for $\operatorname{Tr}\left(\left(b\right)_{m}\right)$. 

For $\operatorname{Tr}((c)_{m+1})$ and $m+1$, we have $l_{3}-2m-2$ of $\left(E_{l_{3}}, I_{l_{1} - (k+m+1)}, J_{l_{2} - (k+m+1)}, K_{l_{3} - 2(m+1)}\right)$ triplets whose trace equals to $\operatorname{Tr}\left(\left(a\right)_{m}\right)$. Indeed, like in the proof of Lemma \ref{lem:diag_cancel}, we let 
\begin{equation*}
    \begin{split}
        E_{l_{3}} \cup I_{l_{1} - (k+m+1)} \cup J_{l_{2} - (k+m+1)} &= \big\{i_{k_{0}}, i_{k_{2}}, \dots, i_{1} ,\dots, i_{k_{l_{1}-(k+m)}}, i_{k_{l_{1}-(k+m)+2}}, \dots, i_{k_{l_{3} - 2m}}\big\}, \\
        K_{l_{3} - 2(m+1)} &= \big\{\big\{i_{k_{l_3-2m+1}}, i_{k_{l_3-2m+2}}\big\}, \dots, \big\{i_{k_{l_3-1}}, i_{k_{l_3}}\big\}, \big\{i_{k_{s}}, i_{2}\big\}\big\}.
    \end{split}
\end{equation*}
Then, by letting $\tilde{J} = \big\{j_{1}, \dots, j_{k+m}, j_{k+m+1}\big\},$ we obtain
\begin{equation*}
    \begin{split}
        & \operatorname{Tr}((c)_{m+1}) \\ 
        & \quad\quad = \sum_{i_{1}=i_{2}} \sum_{s} \sum_{a_{2}, a_{3}} \sum_{\tilde{J}} \boldsymbol{\varepsilon}_{i_{k_{0}} a_{2} a_{3}}\left(\mathbf{x}_{l_1}\right)_{\tilde{J} a_{2} i_{k_{2}} \cdots i_{1} \cdots i_{k_{l_{1}-(k+m)}}} \left(\mathbf{y}_{l_2}\right)_{\tilde{J} a_{3} i_{k_{l_{1}-(k+m)+3}} \cdots i_{k_{l_{3} - 2m}}} \\ 
        & \hspace{73.5mm} \times \delta_{i_{k_{l_{3}-2m+1}} i_{k_{l_{3}-2m+2}}} \cdots \delta_{i_{k_{l_{3}-1}} i_{k_{l_{3}}}} \delta_{i_{k_{s}} i_{2}} \\
        & \quad\quad = \sum_{s} \sum_{a_{2}, a_{3}} \sum_{\tilde{J}} \boldsymbol{\varepsilon}_{i_{k_{0}} a_{2} a_{3}}\left(\mathbf{x}_{l_1}\right)_{\tilde{J} a_{2} i_{k_{2}} \cdots i_{k_{s}} \cdots i_{k_{l_{1}-(k+m)}}} \left(\mathbf{y}_{l_2}\right)_{\tilde{J} a_{3} i_{k_{l_{1}-(k+m)+3}} \cdots i_{k_{l_{3} - 2m}}} \\ 
        & \hspace{73.5mm} \times \delta_{i_{k_{l_{3}-2m+1}} i_{k_{l_{3}-2m+2}}} \cdots \delta_{i_{k_{l_{3}-1}} i_{k_{l_{3}}}} \\
        & \quad\quad = (l_{3} - 2m -2) \operatorname{Tr}\left(\left(a\right)_{m}\right).
    \end{split}
\end{equation*}
The same derivation applies to $\operatorname{Tr}((d)_{m+1})$, and we get $\operatorname{Tr}((d)_{m+1})= (l_{3} - 2m -2) \operatorname{Tr}\left(\left(a\right)_{m}\right)$.

For $\operatorname{Tr}((e)_{m+1})$, the same derivation as for $\operatorname{Tr}((d)_{m+1})$ and $\operatorname{Tr}((e)_{m+1})$ in the proof of Lemma \ref{lem:diag_cancel} applies. Therefore, we obtain
\begin{equation*}
    \operatorname{Tr}((e)_{m+1}) = (2m+3)\operatorname{Tr}\left(\left(a\right)_{m}\right).
\end{equation*}

Finally, we can write
\begin{equation*}
    \begin{split}
        & \operatorname{Tr}((c)_{m+1}) + \operatorname{Tr}((d)_{m+1}) + \operatorname{Tr}((e)_{m+1}) \\
        & \quad\quad = (l_3 - 2m -2) \operatorname{Tr}((a)_m) + (l_3 - 2m -2) \operatorname{Tr}((a)_m) + (2m+3) \operatorname{Tr}((a)_m) \\
        & \quad\quad = (2l_3-2m-1) \operatorname{Tr}((a)_m) = (2l_3-2m-1) \operatorname{Tr}((b)_m),
    \end{split}
\end{equation*}
which completes the proof of the Lemma.
\end{proof}

Noting that $\operatorname{Tr}\left(\left(a\right)_{m}\right)$ is doubled for every $m$ due to the factor $2^m$, the proof is completed similarly to the proof of Lemma~\ref{lem:diag_cancel_even}. This result completes the proof of the traceless property for \eqref{eq:product_odd}.

\section{Experiments and results \label{sec:experiments_results_appendix}}

\subsection{Description of the data sets \label{sec:datasets_appendix}}

\textbf{rMD17 data set.} The revised MD17 (rMD17) data set is a collection of structures, energies, and atomic forces of ten small organic molecules obtained from ab initio molecular dynamics (AIMD)~\cite{Christensen2020b}. These molecules are derived from the original MD17 data set~\cite{Christensen2020b, Chmiela2017, Schuett2017_2, Chmiela2018}, with 100,000 structures sampled for each. Our models are trained using 950 and 50 configurations for each molecule randomly sampled from the original data set using five random seeds, with 50 additional configurations randomly sampled for early stopping. We use the remaining configurations to test the final models. \Tabref{tab:rmd17-results} reports the mean absolute errors (MAEs) in total energies and atomic forces averaged over five independent runs, including the standard deviation between them.

\textbf{MD22 data set.} The MD22 data set includes structures, energies, and atomic forces of seven molecular systems derived from AIMD simulations at elevated temperatures, spanning four major classes of biomolecules and supramolecules~\cite{Chmiela2023}. These molecular systems range from a small peptide containing 42 atoms to a double-walled nanotube comprising 370 atoms. Characterized by complex intermolecular interactions, this data set was designed to challenge short-range models. The training set sizes used in this study are consistent with those in the original publication~\cite{Chmiela2023}, selected to ensure that the sGDML model stays within a target accuracy of 1 kcal/mol ($\approx 43.37$ meV). We randomly selected an additional set of 500 structures for each molecule in the data set for early stopping, while the remaining configurations were reserved for testing the final models. The corresponding training and validation data sets were randomly selected using three random seeds. \Tabref{tab:md22-results} reports MAEs in total energies per atom and atomic forces averaged over three independent runs, including the standard deviation between them.

\textbf{3BPA data set.} The 3BPA data set comprises structures, energies, and atomic forces of a flexible drug-like organic molecule obtained from AIMD at various temperatures~\cite{Kovacs2021}. The training data set consists of 500 configurations sampled at 300~K, while three separate test data sets are obtained from AIMD simulations at 300~K, 600~K, and 1200~K. An additional test data set provides energy values along dihedral rotations of the molecule. This test directly assesses the smoothness and accuracy of the potential energy surface, influencing properties such as binding free energies to protein targets. Our models are trained using 450 and 50 configurations randomly sampled from the training data set using five random seeds, with further 50 configurations reserved for early stopping. \Tabref{tab:3bpa-results} reports the root-mean-square errors (RMSEs) in total energies and atomic forces averaged over five independent runs for a training set size of 450 structures, including the standard deviation between them. \Tabref{tab:3bpa-results-50_configs} presents the corresponding results obtained with a training set size of 50 structures.

\textbf{Acetylacetone data set.} The acetylacetone data set includes structures, energies, and atomic forces of a small reactive molecule obtained from AIMD at various temperatures~\cite{Batatia2022design}. The training data set comprises configurations sampled at 300~K, while the test data sets are sampled at 300~K and 600~K. The generalization ability of final models is evaluated using an elevated temperature of 600~K and along two internal coordinates of the molecule: The hydrogen transfer path and a partially conjugated double bond rotation featuring a high rotation barrier. Our models are trained using 450 and 50 configurations randomly sampled from the training dataset using five random seeds, with further 50 configurations reserved for early stopping. \Tabref{tab:acac-results} reports RMSEs in total energies and atomic forces averaged over five independent runs for a training set size of 450 structures, including the standard deviation between them. \Tabref{tab:acac-results-50_configs} presents the corresponding results obtained with a training set size of 50 structures.

\textbf{Ta--V--Cr--W data set.} The Ta--V--Cr--W data set includes 0~K energies, atomic forces, and stresses for binaries, ternaries, and quaternary and near-melting temperature properties in four-component disordered high-entropy alloys~\cite{Gubaev2023}. In total, this benchmark data set comprises 6711 configurations, with energies, atomic forces, and stresses computed at the density functional theory (DFT) level. More precisely, there are 5680 0 K structures: 4491 binary, 595 ternary, and 594 quaternary structures, along with 1031 structures sampled from MD at 2500 K. Structure sizes range from 2 to 432 atoms in the periodic cell. All models are trained using 5373 configurations, with 4873 used for training and 500 for early stopping. The remaining configurations are reserved for testing the models' performance. The performance is evaluated separately using 0 K binaries, ternaries, quaternaries, and near-melting temperature four-component disordered alloys. The original data set provides ten different training-test splits. In our experiments, each training set is further split into training and validation subsets using a random seed. Furthermore, we use additional binary structures strained along the $[100]$ direction as a part of the test data set. Note that the final models, in this case, were obtained by training using the whole data set of 6711 configurations (training + test), 500 of which were reserved as a validation data set. \Tabref{tab:hea-results} reports RMSEs in total energies per atom and atomic forces averaged over ten independent runs, including the standard deviation between them.

\subsection{Training details \label{sec:training_appendix}}

All ICTP and MACE models employed in this work were trained on a single NVIDIA A100 GPU with 80 GB of RAM. Training times for ICTP and MACE models typically ranged from 30 minutes to a few days, depending on the data set, the data set size, and the employed precision. We used double precision for 3BPA and acetylacetone data sets and single precision for rMD17, in line with the original experiments~\cite{Batatia2022}. Double precision was also used for MD22, while single precision was employed in our Ta--V--Cr--W experiments. Unless stated otherwise, we used two message-passing layers and irreducible Cartesian tensors or spherical tensors of a maximal rank of $l_\mathrm{max} = 3$ to embed the directional information of atomic distance vectors. 

For ICTP models with the full (ICTP$_\mathrm{full}$) and symmetric (ICTP$_\mathrm{sym}$) product basis and MACE, we employ 256 uncoupled feature channels. Exceptions include our experiments with the 3BPA data set, aimed at investigating scaling and computational cost, and the Ta--V--Cr--W experiments, where we used eight and 32 feature channels, respectively. For ICTP models with the symmetric product basis evaluated in the latent feature space (ICTP$_\mathrm{sym+lt}$), we use 64 coupled feature channels for the Cartesian product basis and 256 for two-body features. Radial features are derived from eight Bessel basis functions with polynomial envelope for the cutoff with $p = 5$~\cite{Gasteiger2022c}. These features are fed into a fully connected NN of size $[64, 64, 64]$. We apply SiLU non-linearities to the outputs of the hidden layers~\cite{Elfwing2018, Ramachandran2018}. The readout function of the first message-passing layer is implemented as a linear layer. The readout function of the second layer is a single-layer fully connected NN with 16 hidden neurons. A cutoff radius of 5.0~\AA{} is used across all data sets except MD22, where we used a cutoff radius of 5.5~\AA{} for the double-walled nanotube and 6.0~\AA{} for the other molecules in the data set.

All parameters of ICTP and MACE models were optimized by minimizing the combined squared loss on training data $\mathcal{D}_\mathrm{train} = \left(\mathcal{X}_\mathrm{train}, \mathcal{Y}_\mathrm{train}\right)$, where $\mathcal{X}_\mathrm{train} = \{\mathcal{S}^{(k)}\}_{k=1}^{N_\mathrm{train}}$ and $\mathcal{Y}_\mathrm{train} = \{E_k^\mathrm{ref}, \{\mathbf{F}_{u,k}^\mathrm{ref}\}_{u=1}^{N_\mathrm{at}}, \boldsymbol{\sigma}_k^\mathrm{ref}\}_{k=1}^{N_\mathrm{train}}$
\begin{equation}
    \label{eq:loss}
    \begin{split}
    \mathcal{L}\left( \boldsymbol{\theta}, \mathcal{D}_\mathrm{train}\right) = \sum_{k=1}^{N_\mathrm{train}} \Bigg[ C_\mathrm{e} \Big\lVert E_k^\mathrm{ref} - E(S^{(k)}, \boldsymbol{\theta})\Big\rVert_2^2 & + C_\mathrm{f} \sum_{u=1}^{N_\mathrm{at}^{(k)}} \Big\lVert \mathbf{F}_{u,k}^\mathrm{ref} - \mathbf{F}_u\left(S^{(k)}, \boldsymbol{\theta}\right)\Big\rVert_2^2 \\ & + C_\mathrm{s} \Big\lVert V_k \boldsymbol{\sigma}_{k}^\mathrm{ref} - V_k \boldsymbol{\sigma}\left(S^{(k)}, \boldsymbol{\theta}\right)\Big\rVert_2^2\Bigg].
    \end{split}
\end{equation}
Here, $\boldsymbol{\sigma}_k^\mathrm{ref}$ is the stress tensor defined as $\boldsymbol{\sigma} = \frac{1}{V}\left.\nabla_{\boldsymbol{\epsilon}} E\right|_{\boldsymbol{\epsilon} = \mathbf{0}}$, where $E$ denotes total energy after a strain deformation with symmetric tensor $\boldsymbol{\epsilon} \in \mathbb{R}^{3 \times 3}$ and $V$ is the volume of the periodic box.

When training ICTP models on rMD17, 3BPA, and acetylacetone data sets, we neglected the stress loss and set $C_\mathrm{e}=1 / N_\mathrm{at}^{(k)}$ and $C_\mathrm{f} = 10~\AA^2$ to balance the relative contributions of total energies and atomic forces, respectively. For MACE and 3BPA/acetylacetone, $C_\mathrm{e}=1/(B \times N_\mathrm{at}^{(k)})$ and $C_\mathrm{f} = 1000/(B \times 3 \times N_\mathrm{at}^{(k)})~\AA^2$ were used with $B$ denoting the batch size. For ICTP models trained on the MD22 data set, we set $C_\mathrm{e}=10 / N_\mathrm{at}^{(k)}$ and $C_\mathrm{f} = 1~\AA^2$, using energies and forces in eV and eV/\text{\AA}, respectively. For the Ta--V--Cr--W dataset, the stress loss was incorporated into the combined loss in \eqref{eq:loss}, along with the energy and force losses. For ICTP, we used $C_\mathrm{e}=1 / N_\mathrm{at}^{(k)}$, $C_\mathrm{f} = 0.01~\AA^2$, and $C_\mathrm{s} = 0.001 / N_\mathrm{at}^{(k)}$ to balance the relative contributions of total energies, atomic forces, and stresses, respectively. For MACE, we chose $C_\mathrm{e}=1/(B \times N_\mathrm{at}^{(k)} \times N_\mathrm{at}^{(k)})$, $C_\mathrm{f} = 1/(B \times 3 \times N_\mathrm{at}^{(k)})~\AA^2$, and $C_\mathrm{s} = 0.05/(B \times 9 \times N_\mathrm{at}^{(k)} \times N_\mathrm{at}^{(k)})$. Here, $E(\mathcal{S}^{(k)}, \boldsymbol{\theta})$, $\mathbf{F}_u\left(\mathcal{S}^{(k)}, \boldsymbol{\theta}\right)$, and $\boldsymbol{\sigma}\left(S^{(k)}, \boldsymbol{\theta}\right)$ are total energies, atomic forces, and stresses predicted by ICTP or MACE.

All models for rMD17, 3BPA, and acetylacetone were trained for 2000 epochs using the AMSGrad variant of Adam~\cite{Reddi2018}, with default parameters of $\beta_1 = 0.9$, $\beta_2 = 0.999$, and $\varepsilon = 10^{-8}$. For MD22 and Ta--V--Cr--W, all models were trained for 1000 epochs. For rMD17, 3BPA, and acetylacetone data sets, we used a learning rate of 0.01 and a batch size of 5. For MD22 and Ta--V--Cr--W, we again chose a learning rate of 0.01 but a mini-batch of 2 and 32, respectively. For evaluations on the validation and test data sets, we used a batch size of 10 for rMD17, 3BPA, and acetylacetone, while mini-batches of 2 and 32 were used for MD22 and Ta--V--Cr--W, respectively.

The learning rate was reduced using an on-plateau scheduler based on the validation loss with a patience of 50 and a decay factor of 0.8 for all data sets except for MD22, for which we used a patience of 10. We utilize an exponential moving average with a weight of 0.99 for evaluation on the validation set and for the final model. Additionally, in line with MACE~\cite{Batatia2022}, we apply exponential weight decay of $5 \times 10^{-7}$ on the weights of Eqs.~(\ref{eq:product_basis}), (\ref{eq:messages_uncoupled}), and (\ref{eq:messages_coupled}). Furthermore, we incorporate a per-atom shift of total energies via the average per-atom energy over all the training configurations, including the energies of individual atoms for 3BPA and acetylacetone datasets. If no atomic energies are provided, as for rMD17, MD22, and Ta--V--Cr--W, the per-atom shift is obtained by solving a linear regression problem~\cite{Zaverkin2021b}. Additionally, a per-atom scale is determined as the root-mean-square of the components of the forces over the training configurations.

\subsection{Additional results \label{sec:results_appendix}}

\begin{table}[t!]
    \caption{\textbf{Inference times and memory consumption as a function of the tensor rank $L$ and the correlation order $\nu$ for the 3BPA data set.} All values for ICTP and MACE models are obtained by averaging over five independent runs. The standard deviation is provided if it is available. Best performances are highlighted in bold. Inference time and memory consumption are measured for a batch size of 10. Inference time is reported per structure in ms; memory consumption is provided for the entire batch in GB. \label{tab:3bpa-results-scaling}}
	\begin{center}
    \resizebox{\textwidth}{!}{
    \begin{tabular}{lrrrrrr}
    \toprule 
    			    & \multicolumn{2}{c}{$L=1$}                             & \multicolumn{2}{c}{$L=2$}                                   & \multicolumn{2}{c}{$L=3$}                                 \\
        \cmidrule(lr){2-3} \cmidrule(lr){4-5} \cmidrule(lr){6-7}
                    & ICTP$_\text{full}$        & MACE                      & ICTP$_\text{full}$            & MACE                        & ICTP$_\text{full}$           & MACE                       \\
        \midrule
        \multicolumn{7}{c}{Inference times}                                                                                                                                                           \\
        \midrule
        $\nu=1$     & \textbf{0.76 $\pm$ 0.17}  & 1.02 $\pm$ 0.03	        & \textbf{0.87 $\pm$ 0.18}	    & 1.38 $\pm$ 0.04	          & \textbf{0.98 $\pm$ 0.26}     & 1.88 $\pm$ 0.03            \\
        $\nu=2$     & \textbf{0.59 $\pm$ 0.20}  & 1.12 $\pm$ 0.03	        & \textbf{1.03 $\pm$ 0.21}	    & 1.52 $\pm$ 0.05	          & \textbf{1.34 $\pm$ 0.08}     & 2.0 $\pm$ 0.10             \\
        $\nu=3$     & \textbf{0.79 $\pm$ 0.22}  & 1.23 $\pm$ 0.03	        & \textbf{1.15 $\pm$ 0.08}	    & 1.67 $\pm$ 0.03	          & \textbf{1.85 $\pm$ 0.13}     & 2.23 $\pm$ 0.03            \\
        $\nu=4$     & \textbf{0.94 $\pm$ 0.17}  & 1.41 $\pm$ 0.11	        & \textbf{1.31 $\pm$ 0.21}	    & 1.83 $\pm$ 0.01	          & \textbf{2.07 $\pm$ 0.20}     & 2.53 $\pm$ 0.01            \\
        $\nu=5$     & \textbf{1.02 $\pm$ 0.17}  & 1.52 $\pm$ 0.08	        & \textbf{1.72 $\pm$ 0.07}	    & 2.26 $\pm$ 0.03	          & \textbf{3.61 $\pm$ 0.02}     & OOM                        \\
        $\nu=6$     & \textbf{1.0 $\pm$ 0.07}   & 1.77 $\pm$ 0.05	        & \textbf{1.83 $\pm$ 0.16}	    & 27.85 $\pm$ 0.01	          & \textbf{16.76 $\pm$ 0.35}    & OOM                        \\
        \midrule
        \multicolumn{7}{c}{Memory consumption}                                                                                                                                                        \\
        \midrule
        $\nu=1$     & 0.05 $\pm$ 0.00           & \textbf{0.04 $\pm$ 0.00}	& 0.08 $\pm$ 0.00	           & \textbf{0.06 $\pm$ 0.00}	  & 0.21 $\pm$ 0.00              & \textbf{0.13 $\pm$ 0.00}   \\
        $\nu=2$     & 0.05 $\pm$ 0.00           & \textbf{0.04 $\pm$ 0.00}	& 0.08 $\pm$ 0.00	           & \textbf{0.07 $\pm$ 0.00}	  & 0.28 $\pm$ 0.09              & \textbf{0.13 $\pm$ 0.00}   \\
        $\nu=3$     & 0.05 $\pm$ 0.00           & \textbf{0.04 $\pm$ 0.00}	& 0.10 $\pm$ 0.00	           & \textbf{0.08 $\pm$ 0.00}	  & 0.51 $\pm$ 0.03              & \textbf{0.23 $\pm$ 0.00}   \\
        $\nu=4$     & \textbf{0.05 $\pm$ 0.00}  & \textbf{0.05 $\pm$ 0.00}	& \textbf{0.18 $\pm$ 0.08}	   & 0.30 $\pm$ 0.00	          & \textbf{1.07 $\pm$ 0.10}     & 4.16 $\pm$ 0.00            \\
        $\nu=5$     & \textbf{0.05 $\pm$ 0.00}  & 0.07 $\pm$ 0.00	        & \textbf{0.35 $\pm$ 0.07}	   & 3.18 $\pm$ 0.00	          & \textbf{5.07 $\pm$ 0.02}     & OOM                        \\
        $\nu=6$     & \textbf{0.11 $\pm$ 0.09}  & 0.22 $\pm$ 0.00    	    & \textbf{0.93 $\pm$ 0.00}	   & 50.49 $\pm$ 0.00	          & \textbf{28.48 $\pm$ 0.03}    & OOM                        \\
	    \bottomrule 
	    \end{tabular}}
	\end{center}
\end{table}

\textbf{Scaling and computational cost.} \Tabref{tab:3bpa-results-scaling} provides the numerical results complementing \figref{fig:3bpa-results-scaling} in the main text. All results in \tabref{tab:3bpa-results-scaling} and \figref{fig:3bpa-results-scaling} were obtained using irreducible Cartesian tensors with a maximal rank of $l_\mathrm{max} = L$ to represent local atomic environments. Here, $L$ denotes the tensor rank of employed equivariant messages. We facilitated the exploration of larger $\nu$ values by setting the number of uncoupled feature channels to eight. In the MACE model, intermediate spherical tensors with a rank of $l > l_\mathrm{max}$ are used to construct the product basis. However, the pre-computation of generalized Clebsch--Gordan coefficients for $\nu > 4$, in some cases, would require more than 2~TB of RAM. Therefore, in our experiments, we fixed the maximum rank of intermediate tensors to $l = l_\mathrm{max}$. We also used the full product basis for ICTP to calculate the same number of $\nu$-fold tensor products, i.e., $\mathcal{K} = \mathrm{len}\left(\eta_\nu\right)$, as used in MACE with $l = l_\mathrm{max}$. Finally, we note that ICTP and MACE use different approaches to optimize their runtimes; however, the scaling with respect to the tensor rank and the correlation order is independent of these optimization methods.

\begin{table*}[t!]
	\caption{\textbf{Energy (E) and force (F) mean absolute errors (MAEs) for the MD22 data set.}\textsuperscript{\emph{a}} E- and F-MAE are given in meV/atom and meV/\AA, respectively. Results are shown for models trained using training set sizes defined in the original publication~\cite{Chmiela2023}. All values for ICTP models are obtained by averaging over three independent runs. We also use an additional subset of 500 configurations drawn randomly from the original data set for early stopping. The standard deviation is provided if available. Best performances, considering the standard deviation, are highlighted in bold.
	\label{tab:md22-results}}
	\begin{center}
    \resizebox{\textwidth}{!}{
      \begin{tabular}{lclllllrrr}
        \toprule
			    	                                    &   & ICTP$_\text{sym}$             & ViSNet-LSRM~\cite{Li2024} & Equiformer~\cite{Li2024}  & So3krates~\cite{Frank2024}   & MACE~\cite{Kovacs2023b}    & Allegro~\cite{Li2024}    & TorchMD-Net~\cite{Li2024} & sGDML~\cite{Chmiela2023} \\		    	                            
        \cmidrule(lr){1-2} \cmidrule(lr){3-10}
        \multirow[l]{2}{*}{Ac--Ala3--NHMe}              & E & 0.068 $\pm$ 0.000	            & 0.068		                & 0.085		                & 0.348		                   & \textbf{0.064}		         & 0.105                    & 0.116	                   & 0.403                     \\
                                                        & F & \textbf{3.28 $\pm$ 0.02}	    & 3.91		                & 3.49		                & 10.58		                   & 3.8		                & 4.63		               & 8.15                      & 34.26                     \\
        \cmidrule(lr){1-2} \cmidrule(lr){3-10}
        \multirow[l]{2}{*}{DHA}	                        & E & 0.080 $\pm$ 0.001	            & \textbf{0.068}            & 0.138		                & 0.293		                   & 0.102		                & 0.089		               & 0.093                     & 0.997                     \\
                                                        & F & 2.62 $\pm$ 0.01	            & 2.59		                & \textbf{2.19}		        & 10.49		                   & 2.8		                & 3.17		               & 5.24                      & 32.52                     \\
        \cmidrule(lr){1-2} \cmidrule(lr){3-10}
        \multirow[l]{2}{*}{Stachyose}	                & E & \textbf{0.053 $\pm$ 0.001}    & \textbf{0.053}            & 0.070	                    & 0.22	                       & 0.062	                    & 0.124	                   & 0.069                     & 1.995                     \\
                                                        & F & \textbf{2.51 $\pm$ 0.03}	    & 3.33                      & 2.75	                    & 18.86		                   & 3.8		                & 4.21	                   & 8.33                      & 29.49                     \\
        \cmidrule(lr){1-2} \cmidrule(lr){3-10}
	    \multirow[l]{2}{*}{AT--AT}	                    & E & \textbf{0.057 $\pm$ 0.002}    & \textbf{0.056}		    & 0.095				        & 0.129			               & 0.079			            & 0.103	                   & 0.081                     & 0.52                      \\
                                                        & F & \textbf{3.02 $\pm$ 0.07}      & 3.39		                & 4.16				        & 9.37		                   & 4.3			            & 4.13                     & 8.83                      & 29.92                     \\
        \cmidrule(lr){1-2} \cmidrule(lr){3-10}
        \multirow[l]{2}{*}{AT--AT--CG--CG}	            & E & 0.045 $\pm$ 0.001             & \textbf{0.042}		    & 0.055		                & 0.127		                   & 0.058	                    & 0.145                    & 0.076                     & 0.52                      \\
                                                        & F & \textbf{3.37 $\pm$ 0.05}      & 4.61		                & 5.43	                    & 14.40			               & 5.0		                & 5.55                     & 14.13                     & 30.36                     \\
        \cmidrule(lr){1-2} \cmidrule(lr){3-10}
        \multirow[l]{2}{*}{Buckyball catcher}	        & E & 0.123 $\pm$ 0.001		        & 0.124		                & 0.117 	                & \textbf{0.112}		       & 0.141		                & 0.154	                   & 0.152                     & 0.343                     \\
                                                        & F & \textbf{3.58 $\pm$ 0.07}      & 4.45                      & 4.83			            & 10.28			               & 3.7			            & 3.85                     & 14.39                     & 29.49                     \\
        \cmidrule(lr){1-2} \cmidrule(lr){3-10}
        Double-walled	                                & E & 0.352 $\pm$ 0.003\textsuperscript{\emph{b}}		        & 0.214	                    & 0.140			            & \textbf{0.116}		       & 0.194	                    & 0.259                    & 0.173                     & 0.468                     \\
        nanotube                                        & F & 12.64 $\pm$ 0.23\textsuperscript{\emph{b}}              & 14.71	                    & \textbf{11.91}			& 31.53			               & 12.0			            & 14.87                    & 43.5                      & 22.55                     \\
	    \bottomrule 
	    \end{tabular}
      }
	\end{center}
    \footnotesize{\textsuperscript{\emph{a}} ICTP, MACE, Equiformer, Allegro, and TorchMD-Net are short-range models (i.e., they use local or semi-local atomic representations), ViSNet-LSRM and SO3krates include long-range information, and sGDML is a global model.} \\
    \footnotesize{\textsuperscript{\emph{b}} For consistency, these results are obtained for relative energy and force weights of $C_\mathrm{e}=10 / N_\mathrm{at}^{(k)}$ and $C_\mathrm{f} = 1~\AA^2$, based on our experiments with the Ac--Ala3--NHMe molecule. For comparison, we also trained ICTP for 800 epochs with $C_\mathrm{e}=100 / N_\mathrm{at}^{(k)}$ and $C_\mathrm{f} = 1~\AA^2$, achieving MAEs of 0.209 $\pm$ 0.007~meV/atom for energy and 13.98 $\pm$ 0.30~meV/\AA{} for force.}
\end{table*}

\textbf{Molecular dynamics trajectories.} \Tabref{tab:md22-results} presents the energy and force mean absolute errors (MAEs) for the ICTP models trained on the MD22 data set. This data set was designed to challenge short-range potential models---i.e., those based on local or semi-local atomic representations---and includes large molecular systems with complex intermolecular interactions. We compare errors obtained for ICTP models with those of state-of-the-art approaches that incorporate global, short- and long-range information. From \tabref{tab:md22-results}, we see that ICTP achieves accuracy in predicted energies and atomic forces on par with or better than state-of-the-art methods. The ICTP model uses a cutoff of 6.0~\AA{} (5.5~\AA{} for the double-walled nanotube), resulting in a receptive field of 12.0~\AA{} (11.0~\AA{}), considering the two message-passing layers. This receptive field is, in most cases, smaller than the diameter of molecular systems in the data set. Thus, ICTP is, at most, a semi-local potential model---similar to MACE, which used a 5.0~\AA{} cutoff and two message-passing layers.

We chose a larger cutoff for ICTP than the one used by MACE motivated by the results in the recent work~\cite{Kovacs2023}. In particular, the authors compared MACE to VisNet-LSRM---the best model reported to date for MD22, which employs mixed short- and long-range information in their message passing---and reported that MACE can achieve lower force errors. However, VisNet-LSRM typically had lower energy errors, attributed to the improvement from considering atomic interactions beyond 10.0~\AA{}. Using a larger cutoff radius for ICTP compared to MACE, we expected results closer to those of VisNet-LSRM. 
\Tabref{tab:md22-results} shows that using a receptive field of 12.0~\AA{} is, in most cases, sufficient to achieve accuracy in predicted energies close to the one obtained by VisNet-LSRM.

\begin{table*}[t!]
	\caption{\textbf{Energy (E) and force (F) root-mean-square errors (RMSEs) for the 3BPA data set (results for $N_\mathrm{train}=50$).} E- and F-RMSE are given in meV and meV/\AA, respectively. Results are shown for models trained using 50 molecules randomly drawn from the training data set collected at 300~K, with further 50 used for early stopping. All ICTP and MACE results are obtained by averaging over five independent runs, with the standard deviation provided if available. Best performances, considering the standard deviation, are highlighted in bold.
	\label{tab:3bpa-results-50_configs}}
	\begin{center}
        \begin{tabular}{lcrrrr}
        \toprule 
			    	                          &   & ICTP$_\text{full}$          & ICTP$_\text{sym}$            & ICTP$_{\text{sym} + \text{lt}}$   & MACE                               \\
        \cmidrule(lr){1-2} \cmidrule(lr){3-6}
        \multirow[l]{2}{*}{300~K}             & E & \textbf{14.98 $\pm$ 1.62}	& \textbf{13.43 $\pm$ 1.00}	   & \textbf{16.03 $\pm$ 1.26}		   & \textbf{14.54 $\pm$ 1.02}          \\
                                              & F & \textbf{37.21 $\pm$ 2.14}	& \textbf{37.27 $\pm$ 1.63}	   & \textbf{38.38 $\pm$ 1.55}	       & \textbf{37.68 $\pm$ 1.33}          \\
        \cmidrule(lr){1-2} \cmidrule(lr){3-6}
        \multirow[l]{2}{*}{600~K}	          & E & \textbf{31.68 $\pm$ 3.56}	& \textbf{31.63 $\pm$ 3.94}	   & \textbf{30.74 $\pm$ 1.82}		   & \textbf{30.71 $\pm$ 3.43}          \\
                                              & F & \textbf{69.87 $\pm$ 3.04}	& \textbf{68.87 $\pm$ 2.94}	   & \textbf{69.62 $\pm$ 2.88}		   & \textbf{69.88 $\pm$ 3.88}          \\
        \cmidrule(lr){1-2} \cmidrule(lr){3-6}
        \multirow[l]{2}{*}{1200~K}	          & E & \textbf{92.16 $\pm$ 9.33}	& \textbf{86.0 $\pm$ 12.03}	   & \textbf{78.51 $\pm$ 8.88}		   & \textbf{83.99 $\pm$ 8.20}          \\
                                              & F & \textbf{157.72 $\pm$ 6.53}	& \textbf{153.16 $\pm$ 9.62}   & \textbf{151.37 $\pm$ 9.59}		   & \textbf{154.46 $\pm$ 10.84}        \\
        \cmidrule(lr){1-2} \cmidrule(lr){3-6}
	    \multirow[l]{2}{*}{Dihedral slices}	  & E & \textbf{33.69 $\pm$ 8.03}	& \textbf{31.06 $\pm$ 6.27}	   & \textbf{33.44 $\pm$ 5.56}		   & \textbf{27.79 $\pm$ 7.41}          \\
                                              & F & \textbf{47.12 $\pm$ 5.27}   & \textbf{47.68 $\pm$ 2.21}	   & \textbf{49.08 $\pm$ 4.05}         & \textbf{48.91 $\pm$ 4.71}          \\
	    \bottomrule 
	    \end{tabular}
	\end{center}
\end{table*}

\begin{table*}[t!]
	\caption{\textbf{Energy (E) and force (F) root-mean-square errors (RMSEs) for the 3BPA data set (results for $\nu = 1$).} E- and F-RMSE are given in meV and meV/\AA, respectively. Results are shown for models trained using 450 configurations randomly drawn from the training data set collected at 300~K, with further 50 used for early stopping. For all models, $\nu=1$ is used. Thus, the employed models rely exclusively on two-body interactions. All ICTP and MACE results are obtained by averaging over five independent runs. Best performances, considering the standard deviation, are highlighted in bold. Inference time and memory consumption are measured for a batch size of 100. Inference time is reported per structure in ms, while memory consumption is provided for the entire batch in GB.}
	\label{tab:3bpa-results-no_product}
	\begin{center}
    \resizebox{\textwidth}{!}{
      \begin{tabular}{lcrrrrr}
        \toprule 
			    	                          &   & ICTP$_\text{full}$          & ICTP$_\text{sym}$             & ICTP$_{\text{sym} + \text{lt}}$  & MACE                           \\
        \cmidrule(lr){1-2} \cmidrule(lr){3-6}
        \multirow[l]{2}{*}{300~K}             & E & \textbf{12.90 $\pm$ 1.06}	& \textbf{12.90 $\pm$ 1.06}	    & \textbf{14.97 $\pm$ 1.64}		   & \textbf{13.50 $\pm$ 1.71}      \\
                                              & F & \textbf{29.90 $\pm$ 0.25}	& \textbf{29.90 $\pm$ 0.25}	    & 30.93 $\pm$ 0.47	               & \textbf{30.18 $\pm$ 0.38}      \\
        \cmidrule(lr){1-2} \cmidrule(lr){3-6}
        \multirow[l]{2}{*}{600~K}	          & E & \textbf{29.97 $\pm$ 0.94}	& \textbf{29.97 $\pm$ 0.94}	    & \textbf{31.64 $\pm$ 1.83}		   & \textbf{31.32 $\pm$ 2.16}      \\
                                              & F & \textbf{62.80 $\pm$ 0.45}	& \textbf{62.80 $\pm$ 0.45}	    & 64.54 $\pm$ 0.95		           & \textbf{63.04 $\pm$ 0.73}      \\
        \cmidrule(lr){1-2} \cmidrule(lr){3-6}
        \multirow[l]{2}{*}{1200~K}	          & E & \textbf{81.03 $\pm$ 1.64}	& \textbf{81.03 $\pm$ 1.64}	    & \textbf{79.26 $\pm$ 4.66}		   & \textbf{81.54 $\pm$ 2.02}      \\
                                              & F & \textbf{146.96 $\pm$ 1.30}	& \textbf{146.96 $\pm$ 1.30}    & 151.45 $\pm$ 5.21		           & 149.44 $\pm$ 1.94              \\
        \cmidrule(lr){1-2} \cmidrule(lr){3-6}
	    \multirow[l]{2}{*}{Dihedral slices}	  & E & \textbf{22.84 $\pm$ 2.96}	& \textbf{22.84 $\pm$ 2.96}	    & 29.16 $\pm$ 7.96		           & 28.08 $\pm$ 4.04               \\
                                              & F & \textbf{48.82 $\pm$ 5.25}   & \textbf{48.82 $\pm$ 5.25}	    & \textbf{52.53 $\pm$ 5.27}        & \textbf{49.62 $\pm$ 2.92}      \\
        \cmidrule(lr){1-2} \cmidrule(lr){3-6}
	    Inference time                        &   & 2.63 $\pm$ 0.02			    & 2.62 $\pm$ 0.02			    & \textbf{2.53 $\pm$ 0.01}		   & 2.96 $\pm$ 0.06                \\
	    \cmidrule(lr){1-2} \cmidrule(lr){3-6}
	    Memory consumption                    &   & 32.57 $\pm$ 0.00			& 32.57 $\pm$ 0.00			    & 32.34 $\pm$ 0.09		           & \textbf{23.32 $\pm$ 0.00}      \\
	    \bottomrule 
	    \end{tabular}
      }
	\end{center}
\end{table*}

\begin{figure}[t!]
    \begin{center}
        \includegraphics[width=\textwidth]{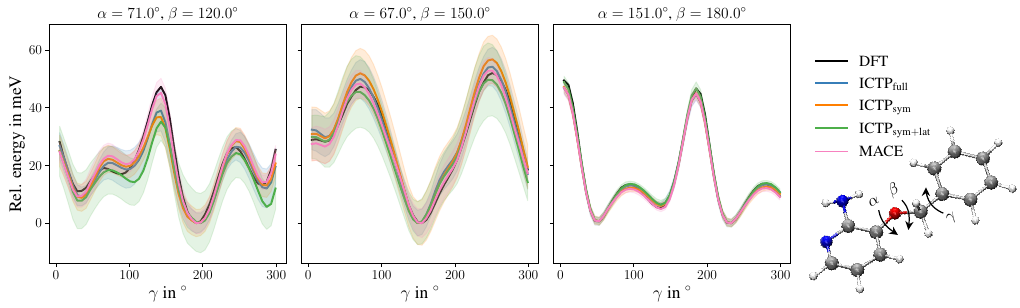}
    \end{center}
    \caption{\textbf{Potential energy profiles for three cuts through the 3BPA molecule's potential energy surface (results for $N_\mathrm{train}=450$).} All models are trained using 450 configurations, and the remaining 50 are used for early stopping. The 3BPA molecule, including the three dihedral angles ($\alpha$, $\beta$, and $\gamma$), provided in degrees~$^\circ$, is shown as an inset. The color code of the inset molecule is C grey, O red, N blue, and H white. The reference potential energy profile (DFT) is shown in black. Each profile is shifted such that each model's lowest energy is zero. Shaded areas denote standard deviations across five independent runs.}
    \label{fig:3bpa-results}
\end{figure}

\textbf{Extrapolation to out-of-domain data.} \Tabref{tab:3bpa-results-50_configs} demonstrates total energy and atomic force RMSEs obtained for ICTP and MACE models trained using 50 configurations randomly drawn from the original data set. ICTP and MACE models perform similarly, considering the standard deviation obtained across five independent runs. However, ICTP models often have lower mean RMSE values compared to MACE. Furthermore, \tabref{tab:3bpa-results-no_product} presents the total energy and atomic force RMSEs for ICTP and MACE models that use $\nu=1$ to compute the product basis. Thus, we provide results for models which rely exclusively on two-body interactions. We note that for $\nu=1$, ICTP$_\text{full}$ and ICTP$_\text{sym}$ are identical; though, we include both results for completeness.

\Figref{fig:3bpa-results} compares potential energy profiles obtained with ICTP and MACE models trained using 450 configurations. Potential energy cuts at $\beta=120^\circ$ and $\beta=180^\circ$ are easier tasks for MLIPs, as there are training points in the data set with similar combinations of dihedral angles~\cite{Batatia2022design}. In contrast, the potential energy cut at $\beta=150^\circ$ is more challenging, with no training points close to it. Notably, \figref{fig:3bpa-results} shows that all models produce smooth potential energy profiles close to the reference ones (DFT) for all values of $\beta$. These results again demonstrate excellent extrapolation capabilities of irreducible Cartesian models that are on par with the spherical MACE model.

\begin{table*}[t!]
	\caption{\textbf{Energy (E) and force (F) root-mean-square errors (RMSEs) for the acetylacetone data set (results for $N_\mathrm{train}=50$).} E- and F-RMSE are given in meV and meV/\AA, respectively. Results are shown for models trained using 50 configurations randomly drawn from the training data set collected at 300~K, with further 50 used for early stopping. All ICTP and MACE results are obtained by averaging over five independent runs. Best performances, considering the standard deviation, are highlighted in bold.
	\label{tab:acac-results-50_configs}}
	\begin{center}
        \begin{tabular}{lcrrrr}
        \toprule 
			    	                &   & ICTP$_\text{full}$        & ICTP$_\text{sym}$          & ICTP$_{\text{sym} + \text{lt}}$  & MACE                       \\
        \cmidrule(lr){1-2} \cmidrule(lr){3-6}
        \multirow[l]{2}{*}{300~K}   & E & \textbf{4.42 $\pm$ 0.39}	& \textbf{4.45 $\pm$ 0.36}	 & \textbf{4.38 $\pm$ 0.24}         & \textbf{4.22 $\pm$ 0.52}   \\
                                    & F & \textbf{28.85 $\pm$ 3.00}	& \textbf{28.28 $\pm$ 1.45}	 & \textbf{29.17 $\pm$ 1.65}        & \textbf{28.38 $\pm$ 2.74}  \\
        \cmidrule(lr){1-2} \cmidrule(lr){3-6}
        \multirow[l]{2}{*}{600~K}	& E & \textbf{17.61 $\pm$ 3.14}	& \textbf{16.13 $\pm$ 1.37}	 & \textbf{17.3 $\pm$ 2.05}         & \textbf{17.53 $\pm$ 3.58}  \\
                                    & F & \textbf{75.09 $\pm$ 8.70}	& \textbf{69.99 $\pm$ 5.12}	 & \textbf{74.96 $\pm$ 4.68}        & \textbf{74.93 $\pm$ 9.91}  \\
	    \bottomrule 
	    \end{tabular}
	\end{center}
\end{table*}

\begin{figure}[t!]
    \begin{center}
        \includegraphics[width=\textwidth]{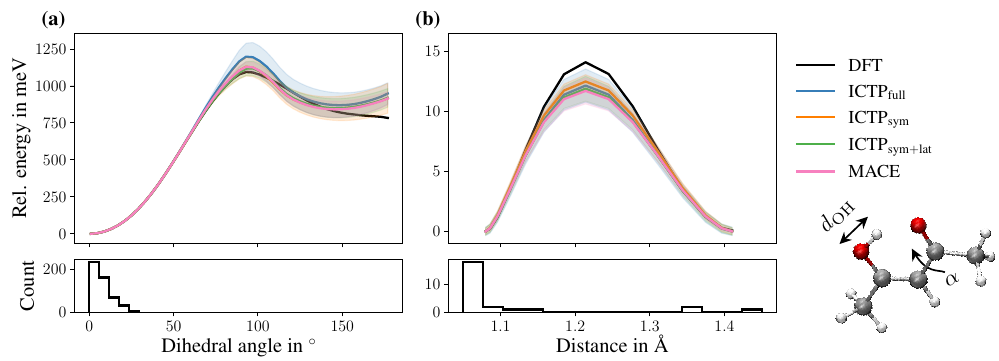}
    \end{center}
    \caption{\textbf{Potential energy profiles of (a) the dihedral angle describing the rotation around the C-C bond and (b) hydrogen transfer between two oxygen atoms (results for $N_\mathrm{train}=450$).} All models are trained using 450 molecules, and the remaining 50 are used for early stopping. The acetylacetone molecule, including the dihedral angle in degrees $^\circ$ describing the rotation around the C-C bond ($\alpha$), is shown as an inset in (a). The color code of the inset molecule is C grey, O red, and H white. The reference potential energy profile (DFT) is shown in black. Each profile is shifted such that each model's lowest energy is zero. The histograms demonstrate the distribution of dihedral angles and O-H distances in the training data. Shaded areas denote standard deviations across five independent runs.}
    \label{fig:acac-results}
\end{figure}

\begin{figure}[t!]
    \begin{center}
        \includegraphics[width=\textwidth]{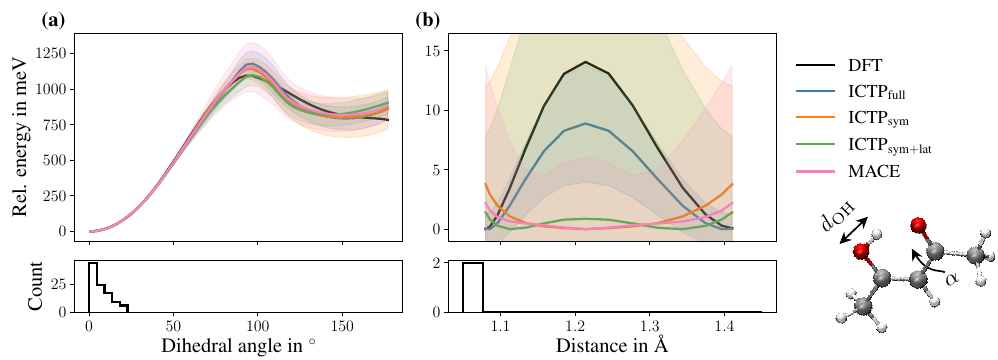}
    \end{center}
    \caption{\textbf{Potential energy profiles of (a) the dihedral angle describing the rotation around the C-C bond and (b) hydrogen transfer between two oxygen atoms (results for $N_\mathrm{train}=50$).} All models are trained using 50 molecules, and additional 50 are used for early stopping. The acetylacetone molecule, including the dihedral angle in degrees $^\circ$ describing the rotation around the C-C bond ($\alpha$), is shown as an inset in (a). The color code of the inset molecule is C grey, O red, and H white. The reference potential energy profile (DFT) is shown in black. Each profile is shifted such that each model's lowest energy is zero. The histograms demonstrate the distribution of dihedral angles and O-H distances in the training data set for one random seed used to split the original data. Shaded areas denote standard deviations across five independent runs.}
    \label{fig:acac-results-50_configs}
\end{figure}

\textbf{Flexibility and reactivity.} \Tabref{tab:acac-results-50_configs} demonstrates total energy and atomic force RMSEs obtained for ICTP and MACE models trained using 50 configurations randomly drawn from the original data set. Similar to the 3BPA data set, ICTP and MACE models demonstrate comparable accuracy in predicted energies and forces, considering the standard deviation obtained across five independent runs.

\Figref{fig:acac-results} further investigates the generalization capabilities of ICTP models trained using 450 configurations, demonstrating potential energy profiles for the rotation around the C-C bond, i.e., the C-C-C-O dihedral angle ($\alpha$), and for the hydrogen transfer (i.e., the O--H distance $d_\mathrm{OH}$ in \figref{fig:acac-results}). The training data set encompasses dihedral angles less than $30^\circ$. Furthermore, the energy barrier of 1~eV is outside the energy range of the training data set obtained at 300~K. As for the hydrogen transfer, the training data does not contain transition geometries, but the reaction still occurs in a region that is not too far from the training data. Overall, ICTP models perform on par with MACE for predicting potential energy profiles for the rotation around the corresponding C-C bond and for the hydrogen transfer.

\Figref{fig:acac-results-50_configs} shows potential energy profiles for MLIPs trained with 50 configurations, similar to \figref{fig:acac-results}. Notably, \figref{fig:acac-results} (b) demonstrates that ICTP$_\text{full}$ is the only MLIP consistently producing the potential energy profile for the hydrogen transfer close to the reference (DFT). This task is particularly challenging, as most data splits do not include configurations sufficiently close to the transition structure as in \figref{fig:acac-results}.

\begin{table}
\caption{\textbf{Energy (E) and force (F) root-mean-square errors (RMSEs) for the Ta--V--Cr--W data set.} E- and F-RMSEs are given in meV/atom and eV/\AA. Results are obtained by averaging over ten splits of the original data set, except for the deformed structures. For the latter, the results are obtained using the whole data set (training $+$ test). For the ICTP, MACE, and GM-NN models, we randomly selected a validation data set of 500 structures from the corresponding training data sets. Best performances, considering the standard deviation, are highlighted in bold. Inference time and memory consumption are measured for a batch size of 50. Inference time\textsuperscript{\emph{a}} is reported per atom in $\mu$s; memory consumption is provided for the entire batch in GB. \label{tab:hea-results}}
\begin{center}
\resizebox{\textwidth}{!}{
\begin{tabular}{lcrrrrrrrrrr}  
\toprule
Subsystem                                   &    & ICTP$_\text{sym}$ ($L=2$)        & ICTP$_\text{sym}$ ($L=1$)         & ICTP$_\text{sym}$ ($L=0$)         & MACE ($L=2$)                  & MACE ($L=1$)                  & MACE ($L=0$)              & ICTP$_\text{sym}$ ($\nu=2$)       & MTP~\cite{Gubaev2023} & GM-NN~\cite{Gubaev2023}   & EAM~\cite{Gubaev2023} \\
\cmidrule(lr){1-2} \cmidrule(lr){3-9} \cmidrule(lr){10-12}
\multirow[l]{2}{*}{TaV}                     & E  & \textbf{1.02 $\pm$ 0.27}         & \textbf{1.21 $\pm$ 0.54}          & 1.65 $\pm$ 1.06                   & 1.72 $\pm$ 0.67               & 1.76 $\pm$ 0.53               & 2.24 $\pm$ 1.34           & \textbf{1.24 $\pm$ 0.50}          & 1.94                  & 1.54                      & 32.00                 \\
                                            & F  & \textbf{0.020 $\pm$ 0.002}       & 0.022 $\pm$ 0.002                 & 0.024 $\pm$ 0.002                 & \textbf{0.019 $\pm$ 0.002}    & \textbf{0.020 $\pm$ 0.003}    & 0.022 $\pm$ 0.002         & 0.023 $\pm$ 0.002                 & 0.050                 & 0.029                     & 0.404                 \\
\cmidrule(lr){1-2} \cmidrule(lr){3-9} \cmidrule(lr){10-12}
\multirow[l]{2}{*}{TaCr}                    & E  & \textbf{1.81 $\pm$ 0.29}         & \textbf{1.94 $\pm$ 0.23}          & 2.13 $\pm$ 0.19                   & 3.26 $\pm$ 0.42               & 3.31 $\pm$ 0.44               & 4.18 $\pm$ 0.56           & 2.40 $\pm$ 0.33                    & 3.26                  & 2.98                      & 43.60                 \\
                                            & F  & \textbf{0.025 $\pm$ 0.007}       & \textbf{0.024 $\pm$ 0.006}        & 0.027 $\pm$ 0.005                 & 0.029 $\pm$ 0.01              & \textbf{0.026 $\pm$ 0.007}    & 0.028 $\pm$ 0.007         & \textbf{0.026 $\pm$ 0.006}        & 0.057                 & 0.038                     & 0.343                 \\
\cmidrule(lr){1-2} \cmidrule(lr){3-9} \cmidrule(lr){10-12}
\multirow[l]{2}{*}{TaW}                     & E  & \textbf{1.75 $\pm$ 0.11}         & 1.87 $\pm$ 0.14                   & 2.45 $\pm$ 0.31                   & 2.73 $\pm$ 0.53               & 3.21 $\pm$ 0.55               & 3.57 $\pm$ 0.48           & 2.19 $\pm$ 0.54                   & 2.72                  & 2.99                      & 44.80                 \\
                                            & F  & \textbf{0.017 $\pm$ 0.002}       & \textbf{0.018 $\pm$ 0.002}        & 0.020 $\pm$ 0.002                 & \textbf{0.017 $\pm$ 0.002}    & \textbf{0.018 $\pm$ 0.002}    & 0.019 $\pm$ 0.002         & \textbf{0.018 $\pm$ 0.002}        & 0.038                 & 0.025                     & 0.248                 \\
\cmidrule(lr){1-2} \cmidrule(lr){3-9} \cmidrule(lr){10-12}
\multirow[l]{2}{*}{VCr}                     & E  & \textbf{1.74 $\pm$ 1.20}          & 2.52 $\pm$ 2.43                   & \textbf{2.13 $\pm$ 1.24}          & \textbf{2.19 $\pm$ 0.78}      & 2.82 $\pm$ 1.28               & 3.11 $\pm$ 1.42           & \textbf{1.89 $\pm$ 1.27}          & \textbf{2.29}         & 2.82                      & 44.80                 \\
                                            & F  & \textbf{0.016 $\pm$ 0.002}       & 0.018 $\pm$ 0.001                 & 0.019 $\pm$ 0.001                 & \textbf{0.016 $\pm$ 0.001}    & \textbf{0.017 $\pm$ 0.001}    & 0.018 $\pm$ 0.002         & 0.019 $\pm$ 0.001                 & 0.036                 & 0.025                     & 0.270                 \\
\cmidrule(lr){1-2} \cmidrule(lr){3-9} \cmidrule(lr){10-12}
\multirow[l]{2}{*}{VW}                      & E  & \textbf{1.32 $\pm$ 0.2}          & \textbf{1.46 $\pm$ 0.16}          & 1.69 $\pm$ 0.21                   & 1.90 $\pm$ 0.19                & 1.94 $\pm$ 0.23               & 2.42 $\pm$ 0.24           & 1.61 $\pm$ 0.16                   & 2.50                  & 2.00                      & 21.30                 \\
                                            & F  & \textbf{0.014 $\pm$ 0.002}       & \textbf{0.015 $\pm$ 0.002}        & 0.018 $\pm$ 0.003                 & \textbf{0.014 $\pm$ 0.002}    & \textbf{0.015 $\pm$ 0.002}    & 0.017 $\pm$ 0.002         & 0.016 $\pm$ 0.002                 & 0.037                 & 0.023                     & 0.292                 \\
\cmidrule(lr){1-2} \cmidrule(lr){3-9} \cmidrule(lr){10-12}
\multirow[l]{2}{*}{CrW}                     & E  & \textbf{2.18 $\pm$ 0.93}         & \textbf{2.45 $\pm$ 1.53}          & 2.76 $\pm$ 1.15                   & \textbf{2.31 $\pm$ 1.18}      & 2.84 $\pm$ 0.98               & 4.14 $\pm$ 1.38           & 3.12 $\pm$ 1.90                   & 4.35                  & 2.87                      & 23.40                 \\
                                            & F  & \textbf{0.018 $\pm$ 0.004}       & \textbf{0.020 $\pm$ 0.005}        & 0.024 $\pm$ 0.008                 & \textbf{0.020 $\pm$ 0.009}     & \textbf{0.019 $\pm$ 0.006}   & 0.023 $\pm$ 0.007         & 0.022 $\pm$ 0.006                 & 0.041                 & 0.029                     & 0.248                 \\
\cmidrule(lr){1-2} \cmidrule(lr){3-9} \cmidrule(lr){10-12}
\multirow[l]{2}{*}{TaVCr}                   & E  & \textbf{0.79 $\pm$ 0.08}         & 0.92 $\pm$ 0.17                   & 1.00 $\pm$ 0.24                   & 2.26 $\pm$ 0.54               & 2.71 $\pm$ 0.66               & 3.92 $\pm$ 0.77           & 0.97 $\pm$ 0.13                   & 2.43                  & 1.97                      & 34.10                 \\
                                            & F  & 0.027 $\pm$ 0.001                & 0.029 $\pm$ 0.002                 & 0.033 $\pm$ 0.002                 & \textbf{0.023 $\pm$ 0.002}    & \textbf{0.024 $\pm$ 0.001}    & 0.028 $\pm$ 0.001         & 0.031 $\pm$ 0.002                 & 0.054                 & 0.045                     & 0.313                 \\
\cmidrule(lr){1-2} \cmidrule(lr){3-9} \cmidrule(lr){10-12}
\multirow[l]{2}{*}{TaVW}                    & E  & \textbf{1.00 $\pm$ 0.20}          & \textbf{0.98 $\pm$ 0.18}          & 1.26 $\pm$ 0.23                   & 1.80 $\pm$ 0.35                & 1.97 $\pm$ 0.44               & 2.29 $\pm$ 0.86           & \textbf{0.95 $\pm$ 0.25}          & 1.67                  & 1.70                      & 39.60                 \\
                                            & F  & \textbf{0.021 $\pm$ 0.001}       & 0.022 $\pm$ 0.001                 & 0.025 $\pm$ 0.001                 & \textbf{0.021 $\pm$ 0.002}    & 0.023 $\pm$ 0.001             & 0.026 $\pm$ 0.001         & 0.023 $\pm$ 0.001                 & 0.043                 & 0.034                     & 0.321                 \\
\cmidrule(lr){1-2} \cmidrule(lr){3-9} \cmidrule(lr){10-12}
\multirow[l]{2}{*}{TaCrW}                   & E  & \textbf{1.16 $\pm$ 0.15}         & \textbf{1.28 $\pm$ 0.13}          & 1.58 $\pm$ 0.29                   & 1.67 $\pm$ 0.38               & 1.48 $\pm$ 0.50               & 2.08 $\pm$ 0.57           & \textbf{1.24 $\pm$ 0.11}          & 2.08                  & 2.19                      & 23.60                 \\
                                            & F  & \textbf{0.022 $\pm$ 0.001}       & 0.024 $\pm$ 0.001                 & 0.027 $\pm$ 0.001                 & 0.028 $\pm$ 0.002             & 0.030 $\pm$ 0.002             & 0.033 $\pm$ 0.002         & 0.026 $\pm$ 0.001                 & 0.051                 & 0.039                     & 0.327                 \\
\cmidrule(lr){1-2} \cmidrule(lr){3-9} \cmidrule(lr){10-12}
\multirow[l]{2}{*}{VCrW}                    & E  & \textbf{1.00 $\pm$ 0.16}         & \textbf{1.07 $\pm$ 0.14}          & 1.37 $\pm$ 0.13                   & 1.97 $\pm$ 0.5                & 2.21 $\pm$ 0.42               & 2.86 $\pm$ 0.64           & \textbf{1.10 $\pm$ 0.14}          & 1.37                  & 1.94                      & 19.40                 \\
                                            & F  & \textbf{0.018 $\pm$ 0.001}       & 0.019 $\pm$ 0.001                 & 0.022 $\pm$ 0.001                 & \textbf{0.017 $\pm$ 0.001}    & 0.019 $\pm$ 0.001             & 0.021 $\pm$ 0.001         & 0.020 $\pm$ 0.001                 & 0.040                 & 0.031                     & 0.314                 \\
\cmidrule(lr){1-2} \cmidrule(lr){3-9} \cmidrule(lr){10-12}
\multirow[l]{2}{*}{\ce{TaVCrW} (0~K)}       & E  & \textbf{1.22 $\pm$ 0.07}         & \textbf{1.30 $\pm$ 0.10}                    & 1.48 $\pm$ 0.16                   & 2.26 $\pm$ 0.55               & 2.48 $\pm$ 0.46               & 3.60 $\pm$ 0.54           & \textbf{1.33 $\pm$ 0.17}          & 2.09                  & 2.16                      & 50.80                 \\ 
                                            & F  & \textbf{0.021 $\pm$ 0.002}       & \textbf{0.022 $\pm$ 0.002}        & 0.025 $\pm$ 0.002                 & \textbf{0.022 $\pm$ 0.001}    & 0.023 $\pm$ 0.002             & 0.027 $\pm$ 0.001         & 0.024 $\pm$ 0.002                 & 0.049                 & 0.037                     & 0.488                 \\
\cmidrule(lr){1-2} \cmidrule(lr){3-9} \cmidrule(lr){10-12}
\multirow[l]{2}{*}{\ce{TaVCrW} (2500~K)}    & E  & \textbf{1.63 $\pm$ 0.07}         & 1.74 $\pm$ 0.11                   & 2.09 $\pm$ 0.09                   & 2.22 $\pm$ 0.48               & 2.34 $\pm$ 0.59               & 3.68 $\pm$ 0.70           & 2.06 $\pm$ 0.09                   & 2.40                  & 2.67                      & 59.40                 \\ 
                                            & F  & \textbf{0.116 $\pm$ 0.002}       & 0.121 $\pm$ 0.002                 & 0.141 $\pm$ 0.003                 & \textbf{0.119 $\pm$ 0.007}    & 0.126 $\pm$ 0.006             & 0.150 $\pm$ 0.003         & 0.140 $\pm$ 0.002                 & 0.156                 & 0.179                     & 0.521                 \\
\cmidrule(lr){1-2} \cmidrule(lr){3-9} \cmidrule(lr){10-12}
\multirow[l]{2}{*}{Overall}                 & E  & \textbf{1.38 $\pm$ 0.09}         & 1.56 $\pm$ 0.21                   & 1.80 $\pm$ 0.18                   & 2.19 $\pm$ 0.31               & 2.42 $\pm$ 0.31               & 3.17 $\pm$ 0.28           & 1.67 $\pm$ 0.21                   & 2.43                  & 2.32                      & 37.14                 \\
                                            & F  & \textbf{0.028 $\pm$ 0.001}       & \textbf{0.029 $\pm$ 0.001}        & 0.034 $\pm$ 0.001                 & \textbf{0.029 $\pm$ 0.001}    & 0.030 $\pm$ 0.001             & 0.034 $\pm$ 0.001         & 0.032 $\pm$ 0.001                 & 0.054                 & 0.043                     & 0.443                 \\
\cmidrule(lr){1-2} \cmidrule(lr){3-9} \cmidrule(lr){10-12}
\multicolumn{12}{l}{Deformed Structures} \\
\cmidrule(lr){1-2} \cmidrule(lr){3-9} \cmidrule(lr){10-12}
TaV                                         & E & 5.47 $\pm$ 0.66                   & 4.97 $\pm$ 0.89                   & 5.56 $\pm$ 1.84                   & \textbf{3.57 $\pm$ 1.61}      & \textbf{3.41 $\pm$ 1.57}      & \textbf{3.63 $\pm$ 1.84}  & 5.80 $\pm$ 1.56                   & 4.43                  & \textbf{3.63}             & 56.8                  \\
\cmidrule(lr){1-2} \cmidrule(lr){3-9} \cmidrule(lr){10-12}
CrW                                         & E & 1.94 $\pm$ 0.90                   & 2.19 $\pm$ 0.80                   & 2.04 $\pm$ 0.84                   & 2.74 $\pm$ 1.35               & 2.14 $\pm$ 1.37               & 2.19 $\pm$ 1.24           & \textbf{1.53 $\pm$ 1.65}          & 1.25                  & \textbf{1.04}             & 27.1                  \\
\cmidrule(lr){1-2} \cmidrule(lr){3-9} \cmidrule(lr){10-12}
TaCr                                        & E & \textbf{0.81 $\pm$ 0.73}          & \textbf{0.76 $\pm$ 0.66}          & \textbf{0.83 $\pm$ 0.66}          & 2.47 $\pm$ 1.42               & 3.57 $\pm$ 2.11               & 7.10 $\pm$ 1.96           & 1.57 $\pm$ 1.86                   & 1.89                  & \textbf{0.49}             & 13.3                  \\
\cmidrule(lr){1-2} \cmidrule(lr){3-9} \cmidrule(lr){10-12}
VW                                          & E & \textbf{0.49 $\pm$ 0.41}          & \textbf{0.51 $\pm$ 0.40}          & \textbf{0.65 $\pm$ 0.42}          & 2.90 $\pm$ 0.93               & 2.20 $\pm$ 0.83               & 3.61 $\pm$ 2.36           & 0.78 $\pm$ 0.38                   & \textbf{0.41}         & 0.52                      & 66.1                  \\
\cmidrule(lr){1-2} \cmidrule(lr){3-9} \cmidrule(lr){10-12}
TaW                                         & E & \textbf{0.89 $\pm$ 0.56}          & \textbf{1.16 $\pm$ 0.93}          & \textbf{1.09 $\pm$ 0.69}          & 5.32 $\pm$ 1.79               & 8.52 $\pm$ 2.05               & 5.68 $\pm$ 3.68           & 2.82 $\pm$ 1.79                   & 3.74                  & 3.11                      & 161.9                 \\
\cmidrule(lr){1-2} \cmidrule(lr){3-9} \cmidrule(lr){10-12}
VCr                                         & E & 6.30 $\pm$ 2.65                   & 8.49 $\pm$ 3.39                   & 8.40 $\pm$ 1.78                   & 7.20 $\pm$ 2.95               & 4.21 $\pm$ 2.17               & 3.50 $\pm$ 2.52           & 4.60 $\pm$ 3.38                   & 4.29                  & \textbf{2.70}             & 283.2                 \\
\cmidrule(lr){1-2} \cmidrule(lr){3-9} \cmidrule(lr){10-12}
Overall                                     & E & 2.65 $\pm$ 0.46                   & 3.01 $\pm$ 0.48                   & 3.10 $\pm$ 0.43                   & 4.03 $\pm$ 0.67               & 4.01 $\pm$ 0.70               & 4.29 $\pm$ 0.73           & 2.85 $\pm$ 0.56                   & 2.67                  & \textbf{1.91}             & 101.4                 \\
\cmidrule(lr){1-2} \cmidrule(lr){3-9} \cmidrule(lr){10-12}
Inference time                              &    & 51.78 $\pm$ 1.18                 & 25.09 $\pm$ 0.02                  & 14.59 $\pm$ 0.01                  & 29.48 $\pm$ 0.23              & 15.37 $\pm$ 0.04              & 4.43 $\pm$ 0.00           & 14.97 $\pm$ 0.09                  & 17.57                 & 7.25                      & 0.50                  \\
\cmidrule(lr){1-2} \cmidrule(lr){3-9} \cmidrule(lr){10-12}
Memory consumption                          &    & 36.78 $\pm$ 0.00                 & 16.93 $\pm$ 0.00                  & 8.48 $\pm$ 0.00                   & 28.82 $\pm$ 0.00              & 13.87 $\pm$ 0.00              & 5.91 $\pm$ 0.00           & 13.15 $\pm$ 0.00                  & --                    & --                        & --                    \\
\bottomrule
\end{tabular}}
\end{center}
\footnotesize{\textsuperscript{\emph{a}} Inference times for MTP and EAM were measured on two Intel Xeon E6252 Gold (Cascade Lake) CPUs. For GM-NN, an NVIDIA GeForce RTX 3090 Ti 12GB GPU was used, while all other models were evaluated using an NVIDIA A100 GPU with 80 GB of RAM.}
\end{table}

\textbf{Multicomponent alloys.} The Ta--V--Cr--W data set is designed to evaluate the performance of MLIPs across atomic systems with varying numbers of atom types/components, comprising both relaxed (0 K) and high-temperature structures~\cite{Gubaev2023}. In particular, this data set includes 0 K energies, forces, and stresses for 2-, 3-, and 4-component systems and 2500 K properties in 4-component disordered alloys. It contains 6711 configurations with sizes ranging from two to 432 atoms in the periodic cell. \Tabref{tab:hea-results} demonstrates the energy and force RMSEs for the ICTP and MACE models, evaluated separately on 0 K binaries, ternaries, quaternaries, and near-melting temperature four-component disordered alloys. ICTP consistently outperforms state-of-the-art models for the Ta--V--Cr--W data set, i.e., MTP and GM-NN. In contrast, for MACE, we were not able to identify a set of relative weights for energy, forces, and virial losses that consistently yield results better than those of MTP and GM-NN in both energies and forces. \Tabref{tab:hea-results} shows that MACE often matches the accuracy of ICTP on forces but is typically outperformed by a factor of $\leq$ 2.0 on energies.

We further compare the ICTP, MACE, MTP, and GM-NN models using the separate test data set containing binary structures strained along the $[100]$ direction. We find that neither ICTP nor MACE consistently outperforms MTP and GM-NN in this case. However, because this test data set contains only a single configuration of 432 atoms per binary, it may not serve as a valuable benchmark in this study and is included merely for completeness. Additionally, we trained a Cartesian model with $\nu=2$ and $L = l_\text{max} = 2$, which resembles a TensorNet-like architecture~\cite{Simeon2023}, though it incorporates equivariant convolution filters. The corresponding results are provided in \tabref{tab:hea-results}. We found that model configurations with $\nu=3$ and $L=2$ (and often with $L=1$) outperform the $\nu=2$ and $L = l_\text{max} = 2$ configuration by factors of $\leq$ 1.4 and $\leq$ 1.2 in energy and force RMSEs, respectively.

Finally, similar to our results for the 3BPA data set, we observe that MACE can be more computationally efficient than ICTP. We attribute longer inference times of ICTP to the pre-factor $\mathcal{K}$ arising from the Cartesian product basis in \eqref{eq:product_basis}. Thus, we attribute ICTP's lower computational efficiency to the use of the MACE architecture, which is optimized for spherical tensors. We expect that further modifications to the architecture can facilitate more efficient use of the advantages of operations based on irreducible Cartesian tensors.

\section{Broader social impact \label{sec:impact}}

This section discusses the broader social impact of the presented work. Our work has important implications for the chemical sciences and engineering, as many problems in these fields require atomistic simulations; we also discuss it in \secref{sec:introduction}. Although this work focuses on standard benchmark data sets, our experiments demonstrate the scalability of our method to larger atomic systems. Beyond constructing machine-learned interatomic potentials, equivariant models based on irreducible Cartesian tensors can be applied for molecular property prediction, protein structure prediction, protein generation, ribonucleic acid structure ranking, and many more.

Our work has no obvious negative social impact. As long as it is applied to the chemical sciences and engineering in a way that benefits society, it will have positive effects.

\end{appendices}

\end{document}